\newif\ifarxiv\arxivtrue
\newif\ifnotarxiv
    \newcommand{\arxiv}[1]{#1}
    \newcommand{\arxiv}[1]{}
    \newcommand{\notarxiv}[1]{}
    \newcommand{\notarxiv}[1]{#1}
    \definecolor{cornellred}{rgb}{0.7, 0.11, 0.11}
    \definecolor{dgreen}{rgb}{0.0, 0.5, 0.0}
    \definecolor{ballblue}{rgb}{0.13, 0.67, 0.8}
    \definecolor{royalblue(web)}{rgb}{0.25, 0.41, 0.88}
    \definecolor{bleudefrance}{rgb}{0.19, 0.55, 0.91}
    \definecolor{royalazure}{rgb}{0.0, 0.22, 0.66}
\title{Tree Learning: Optimal Sample Complexity and Algorithms}
    \author{
       Dmitrii Avdiukhin\\ Indiana University\\ davdyukh@iu.edu \and
       Grigory Yaroslavtsev\\ George Mason University\\ grigory@grigory.us \and
       Danny Vainstein\\ Tel-Aviv University\\ dannyvainstein@gmail.com \and
       Orr Fischer\\ Weizmann Institute\\ of Science\\ orr.fischer@weizmann.ac.il \and
       Sauman Das\\ Thomas Jefferson High School\\ for Science and Technology\\ 2023sdas@tjhsst.edu \and
       Faraz Mirza\\ Thomas Jefferson High School\\ for Science and Technology\\ 2023fmirza@tjhsst.edu
    }
    \author{
       Dmitrii Avdiukhin$^\ddag$\textsuperscript{\rm 1},
       Grigory Yaroslavtsev$^\ddag$\textsuperscript{\rm 2},
       Danny Vainstein$^\ddag$\textsuperscript{\rm 3},
       Orr Fischer$^\ddag$\textsuperscript{\rm 4},
       Sauman Das\textsuperscript{\rm 5},
       Faraz Mirza\textsuperscript{\rm 5}
    }
\newenvironment{example}[1][Example.]{\begin{trivlist}
\item[\hskip \labelsep {\bfseries #1}]}{\end{trivlist}}
\renewcommand{\include}{\input}
\newcommand{\eps}{\varepsilon}
\newcommand{\genf}{e}
\newcommand{\bbr}[1]{\big\{#1\big\}}
\newcommand\pig[1]{\scalerel*[5.5pt]{\Big#1}{%
  \ensurestackMath{\addstackgap[1.5pt]{\big#1}}}}
\newcommand\pigl[1]{\mathopen{\pig{#1}}}
\newcommand\pigr[1]{\mathclose{\pig{#1}}}
\newcommand{\card}[1]{\pigl| #1 \pigr|}
\DeclareMathOperator{\ldim}{LDim}
\DeclareMathOperator{\ndim}{NDim}
\DeclareMathOperator{\lca}{LCA}
\DeclareMathOperator{\argmin}{argmin}
\DeclareMathOperator{\err}{err}
\DeclareMathOperator{\Rank}{Rank}
\DeclareMathOperator{\Ind}{Ind}
\DeclareMathOperator{\polylog}{\mathrm{polylog}}
\newcommand{\C}[3]{[#1, #2 \, |\, #3]}
\newcommand{\TW}[3]{[#1 | #2 | #3]}
\newcommand{\orient}[1]{\protect\overrightarrow{#1}}
\tikzset{
  treenode/.style = {align=center, inner sep=0pt, text centered, font=\sffamily},
  my triangle/.style={-{Triangle[width=\the\dimexpr1.8\pgflinewidth,length=\the\dimexpr1.5\pgflinewidth]}},
}
    \newcommand{\defthm}[1]{\newmdtheoremenv[roundcorner=10, innerleftmargin=7, innerrightmargin=7, innertopmargin=-4pt, leftmargin=-7, rightmargin=-7, backgroundcolor=#1!1.5, nobreak=true]}
    \newcommand{\defthm}[1]{\newtheorem}
\newtheorem{corollary}[theorem]{Corollary}
\newenvironment{tikzext}
{}{}
\begin{document}


\maketitle

\begin{abstract}
We study the problem of learning a hierarchical tree representation of data from labeled samples, taken from an arbitrary (and possibly adversarial) distribution. Consider a collection of data tuples labeled according to their hierarchical structure. The smallest number of such tuples required in order to be able to accurately label subsequent tuples is of interest for data collection in machine learning. We present optimal sample complexity bounds for this problem in several learning settings, including (agnostic) PAC learning and online learning. Our results are based on tight bounds of the Natarajan and Littlestone dimensions of the associated problem. The corresponding tree classifiers can be constructed efficiently in near-linear time.
\end{abstract}
\ifnotarxiv
    \def\thefootnote{$\ddag$}\footnotetext{These authors contributed equally to this work}
\fi
\newcommand{\cD}{\mathcal D}

\section{Introduction}
The algorithmic problem of constructing hierarchical data representations has been of major importance for many decades, due to its applications in statistics~\citep{W63,GR69}, entomology~\citep{MS57}, plant biology~\citep{S48}, genomics~\citep{ESBB98} and other domains.
Efficient collection of labeled data is a problem of key importance for construction of hierarchical data representations. In this paper we consider the problem of constructing tree representations of data from labeled samples, focusing on understanding the optimal number of samples required for this task. 

The most basic type of a label that allows one to construct a tree representation consists of a triplet of points $(x, y, z)$ labeled according to the induced hierarchical structure within the triplet.
For example, given images of a cat, a dog, and a plane, the label would describe the cat and the dog as being more similar to each other than to the plane.
This ``odd one out'' type of label is the simplest one to collect in a crowdsourcing setting.
In this paper, we focus on understanding the number of such labeled samples required in order to construct a tree representation of the underlying data, which enables one to accurately predict subsequent labels in the future. This is then further generalized to include larger labeled subsets of data. Examples of possible trees consistent with certain triplet and tuple labelings are shown in Figure~\ref{fig:consistency}.


\ifarxiv
    \newcommand{\figwidth}{0.48\textwidth}
\else
    \newcommand{\figwidth}{0.22\textwidth}
\fi

\begin{figure}[t!]
    \centering
    \begin{subfigure}[b]{\figwidth}
        \centering
        \newcommand{\treeparams}{for tree={s sep=0.55cm,l-=1.5em,solid node}}
\bracketset{action character=@}

\begin{tikzext}
    \begin{forest}
        [ ,@\treeparams,
            [
              [ ,label={below:$a$} ]
              [ ,label={below:$b$} ]
            ]
            [ ,label={below:$c$}]
        ];
    \end{forest}
    \begin{forest}
        [ ,@\treeparams,
            [
              [ ,label={below:$b$} ]
              [ ,label={below:$d$} ]
            ]
            [ ,label={below:$e$}]
        ];
    \end{forest}
    \begin{tikzpicture}
        \centering
        \draw[line width=10pt,my triangle,
            postaction={draw,white,line width=8pt,my triangle,shorten >=2pt,shorten <=1pt}](0,0) -- (1,0);
        \node at (0,-1) {};
    \end{tikzpicture}
    \begin{forest}
        [ ,@\treeparams,
          [
            [
              [ ,label={below:$a$} ]
              [ ,label={below:$d$} ]
            ]
            [ ,label={below:$b$}]
          ]
          [
              [ ,label={below:$c$} ]      
              [ ,label={below:$e$}]
          ]
        ];
    \end{forest}
\end{tikzext}
        \caption{Triplet labelings and a tree consistent with the labelings.}
        \label{fig:constraint_sat_1}
    \end{subfigure}
    \ifarxiv \hspace{1em} \else \hfill \fi
    \begin{subfigure}[b]{\figwidth}
        \centering
        \newcommand{\treeparams}{for tree={s sep=0.4cm,l-=1.5em,solid node}}
\bracketset{action character=@}

\begin{tikzext}
    \begin{forest}
        [ ,@\treeparams,
          [
            [
              [ ,label={below:$a$} ]
              [ ,label={below:$b$} ]
            ]
            [ ,label={below:$c$}]
          ]
          [ ,label={below:$d$}]
        ];
    \end{forest}
    \begin{forest}
        [ ,@\treeparams,
          [
            [ ,label={below:$a$} ]
            [ ,label={below:$b$} ]
          ]
          [
            [ ,label={below:$d$}]
            [ ,label={below:$e$}]
          ]
        ];
    \end{forest}
    \begin{tikzpicture}
        \centering
        \draw[line width=10pt,my triangle,
            postaction={draw,white,line width=8pt,my triangle,shorten >=2pt,shorten <=1pt}](0,0) -- (1,0);
        \node at (0,-1.2) {};
    \end{tikzpicture}
    \begin{forest}
        [ ,@\treeparams,
          [
            [
              [ ,label={below:$a$} ]
              [ ,label={below:$b$} ]
            ]
            [ ,label={below:$c$}]
          ]
          [
              [ ,label={below:$d$} ]      
              [ ,label={below:$e$}]
          ]
        ];
    \end{forest}
\end{tikzext}
        \caption{$4$-tuple labelings and a tree consistent with the labelings.}
        \label{fig:constraint_sat_2}
    \end{subfigure}
    \hfill
    \caption{An example for the triplet and $k$-tuple setting, where two labeled queries and a tree consistent with the queries are shown.
    The tree satisfies all hierarchical relations of both labeled queries.
    For example, in Figure~\ref{fig:constraint_sat_1}, the lowest common ancestor of $a$ and $b$ (denoted $\lca(a, b)$) is below  $\lca(a,c)$ and $\lca(b, c)$, as in its first labeled query.}
\label{fig:consistency}
\end{figure}
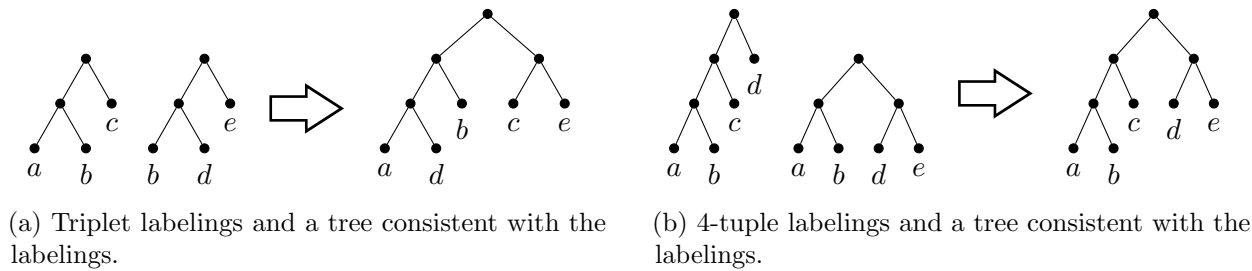

\subsection{Our Results}
Let $n$ be the number of points in the dataset.
We present results in two settings: PAC learning and online learning. In both cases, our objective is to build a classifier that given access to labeled tuples can predict labels for the subsequent tuples with high probability. 

\paragraph{PAC learning} In the \emph{Probably Approximately Correct (PAC) learning} setting~\citep{V84}, the tuples are generated from a fixed unknown distribution $\cD$.
The goal is to predict their labels with probability at least $1 - \epsilon$, i.e. achieve the error rate at most $\epsilon$, while providing this guarantee with overall probability at least $1 - \delta$.
We distinguish between the two cases: in the \emph{realizable} setting it is assumed that the labels are consistent (i.e. there exists a tree that respects all observed labels), while in the \emph{agnostic} PAC-learning setting, such a tree does not necessarily exist.
In this setting, the goal is to predict the labels with probability at least $1 - \epsilon - \epsilon_{T^*}$, where $\epsilon_{T^*}$ is the smallest prediction error among all trees on the set of points.

In the PAC-learning setting, our main result is as follows:
\begin{theorem}[Informal version of Theorem~\ref{thm:natarajan_main}]
\label{thm:pac-learning-intro}
    The sample complexity of $(\epsilon, \delta)$-PAC-learning hierarchically labeled tuples is $\Theta(\frac{n}{\epsilon} \cdot \polylog(\frac{1}{\epsilon}, \frac{1}{\delta}))$ in the realizable settings.
    Furthermore, in the agnostic setting, the complexity is $\Theta(\frac{n}{\epsilon^2} \cdot \polylog(\frac{1}{\epsilon}, \frac{1}{\delta}))$. 
\end{theorem}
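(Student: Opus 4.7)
The plan is to characterize the sample complexity via the Natarajan dimension $\ndim(H)$ of the hypothesis class $H$ of all tree-induced tuple labelings, and then apply standard multiclass PAC-learning results. For a class of Natarajan dimension $d$ over a constant-size label alphabet, uniform convergence together with the Natarajan--Sauer--Shelah lemma yields sample complexity $\Theta(d/\eps \cdot \polylog(1/\eps, 1/\delta))$ in the realizable case and $\Theta(d/\eps^2 \cdot \polylog(1/\eps, 1/\delta))$ in the agnostic case. Since the number of tree topologies on $k$ leaves depends only on $k$ (three for triplets), the label alphabet is of constant size, and the proof reduces to establishing $\ndim(H) = \Theta(n)$.

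For the lower bound $\ndim(H) = \Omega(n)$, I plan to exhibit a Natarajan-shattered set of $\lfloor n/3 \rfloor$ disjoint triplets. Partition the points into triplets $T_1, \ldots, T_{\lfloor n/3 \rfloor}$; on each $T_i = \{a_i, b_i, c_i\}$, designate two of its three possible topologies as witnesses $f_0(T_i)$ and $f_1(T_i)$. For any $\sigma \in \{0, 1\}^{\lfloor n/3 \rfloor}$, the prescribed pattern is realized by building a tree that places each $T_i$ inside its own subtree with the chosen topology and then joins these subtrees in an arbitrary manner; disjointness guarantees that no conflicts arise across triplets.

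The principal obstacle is the matching upper bound $\ndim(H) = O(n)$. The naive entropy bound $\ndim(H) \le \log_2 |H|$ gives only $O(n \log n)$, since there are $(2n-3)!!$ trees on $n$ labeled leaves, so a tighter structural argument is required. My plan is to exploit the following locality property: given a Natarajan-shattered set $S$ with witnesses $f_0, f_1$, each tuple $s \in S$ forces a distinction that can be localized near a single internal vertex (the deepest LCA among pairs from $s$) of the realizing tree. Using this, I intend to encode the $2^{|S|}$ realizing trees by a perturbation scheme relative to a fixed reference tree, where the total perturbation information scales with the $n-1$ internal vertices rather than with $n \log n$, yielding $2^{|S|} \le 2^{O(n)}$ and thus $|S| = O(n)$. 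Making this encoding rigorous is the main technical challenge of the proof.

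Finally, the extension to general $k$-tuples follows by reducing to the triplet case: the label of any $k$-tuple is determined by the induced triplet labelings on its $\binom{k}{3}$ three-element subsets, so a $k$-tuple Natarajan-shattered set of size $m$ gives rise to a triplet-shattered set of size $\Omega(m / k^3)$, preserving the $\Theta(n)$ bound for constant $k$.
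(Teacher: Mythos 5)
Your overall framework is correct and matches the paper's: reduce the sample-complexity bounds to a tight estimate of the Natarajan dimension (via the multiclass analogue of the fundamental theorem of statistical learning, Lemma~\ref{lem:natarajan_samples}), observe that the label alphabet has constant size for constant $k$, and establish $\ndim(H) = \Theta(n)$. Your lower-bound construction via disjoint triplets is also sound and essentially parallel to the paper's Lemma~\ref{lem:natarajan_lb} (the paper gets the slightly sharper constant $|V|-k+1$ by making all $k$-tuples share a common core of $k-1$ points, but both give $\Omega(n)$).

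The genuine gap is in the upper bound. You correctly identify that $\ndim \le \log_2 |H| = O(n\log n)$ is too weak, but your proposed fix---encoding the $2^{|S|}$ realizing trees as ``perturbations'' of a fixed reference tree with total information $O(n)$---is not a proof, and it is unclear how to make it one. Natarajan shattering only constrains the realizing trees on the $|S|$ tuples themselves; off those tuples the trees may differ arbitrarily, so there is no reason the witnesses differ from a reference tree by ``local'' edits, nor that those edits are charged to distinct internal vertices. In fact the $2^{|S|}$ witness trees need not be combinatorially close to each other at all. The paper takes a fundamentally different route: it proves a structural combinatorial statement (Theorem~\ref{thm:triplets}) that any set of at least $n-1$ triplets admits a \emph{contradictory} orientation, via the closed-set / critical-set machinery (Definitions~\ref{def:closed_set},~\ref{def:critical_set} and Theorem~\ref{thm:crit_set_is_a_closed_set}), and then in Theorem~\ref{thm:natarajan} lifts this to the restricted two-label-per-triplet setting of Natarajan shattering by a careful reorientation argument that stays within the two allowed orientations. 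This extra step is necessary because shattering only permits $2^m$ of the $3^m$ orientations, so Theorem~\ref{thm:triplets} alone does not immediately yield the Natarajan bound. Your proposal lacks any analogue of this argument. Similarly, your $k$-tuple reduction sketch (``$\Omega(m/k^3)$ triplets'') is not justified; the paper's reduction (Lemma~\ref{lem:tuple_to_tiplet}) replaces each $k$-tuple with $k-2$ carefully chosen triplets sharing a common pair, which is what makes the independence claim go through.
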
 
This result also holds for non-binary trees.


\paragraph{Online learning}

In the \emph{online learning} setting, it is no longer assumed that the tuples are generated from a fixed distribution.
The tuples can be selected to arrive in an adversarial order.
The accuracy is evaluated sequentially, counting the overall number of mistakes made by the algorithm throughout the sequence.
In the realizable setting, it is assumed that the tree generating the labels is fixed in advance and all labels seen by the algorithm are consistent with this tree.
In the agnostic setting, this is no longer assumed, similarly to the PAC-learning scenario.
Our main result in this setting is:

\begin{theorem}[Informal\arxiv{ version of Theorem~\ref{thm:littlestone_main_formal}}]
\label{thm:online_main}
On a sequence of triplets of length $T$, the number of mistakes made by an online learning algorithm for predicting their hierarchical structure is $\Theta(n \log n)$ in the realizable case. In the agnostic case, the number of mistakes is at most $O(\sqrt{T n \log n  \log (T)})$ and at least $\Omega(\sqrt{T n \log n})$ larger than the optimum number of mistakes achieved by any tree.
\end{theorem}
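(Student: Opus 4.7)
The plan is to derive both halves of the theorem from tight bounds on the (multiclass) Littlestone dimension $\ldim(H_n)$ of the class $H_n$ of triplet-labelings induced by rooted binary trees on $n$ labeled leaves, then invoke standard online-learning machinery. For the realizable case, the multiclass analogue of Littlestone's Standard Optimal Algorithm achieves a mistake bound exactly equal to $\ldim(H_n)$, and the matching lower bound follows by construction from the definition of the shattered mistake tree. Thus both halves of the realizable statement reduce to showing $\ldim(H_n) = \Theta(n\log n)$.

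For the upper bound $\ldim(H_n) = O(n\log n)$, I would use the general inequality $\ldim(H_n) \le \log_2 |H_n|$ (the halving-style argument extends to multiclass labels with only a constant blowup) together with the elementary count $|H_n| \le (2n-3)!! \le n^n$. For the lower bound $\ldim(H_n) = \Omega(n\log n)$, I would construct an explicit shattered mistake tree by recursively attaching leaves to a growing partial tree: once a tree on $k$ leaves has been fixed, the next leaf can be attached at $\Theta(k)$ distinct positions, and those positions can be pairwise distinguished by a sequence of $\Theta(\log k)$ triplet queries of the form $(v,a,b)$, where $v$ is the incoming leaf and $a,b$ are already-placed leaves; each such query admits at least two feasible answers depending on which candidate position $v$ occupies. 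Summing $\sum_{k=1}^{n-1}\Theta(\log k) = \Theta(n\log n)$ over the insertions yields the desired depth, and each root-to-leaf path of the mistake tree corresponds to a distinct tree in $H_n$ and is therefore realized by some hypothesis.

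For the agnostic case, I would plug $\ldim(H_n) = \Theta(n \log n)$ into the standard agnostic online-learning reductions. The upper bound $O(\sqrt{T\cdot \ldim(H_n)\cdot \log T}) = O(\sqrt{Tn\log n \log T})$ follows from the Ben-David--P\'al--Shalev-Shwartz experts-based construction (and its multiclass extension), which turns a realizable mistake bound of $d$ on $H_n$ into an agnostic regret bound of $O(\sqrt{Td\log T})$; the matching lower bound $\Omega(\sqrt{T\cdot\ldim(H_n)}) = \Omega(\sqrt{Tn\log n})$ follows from the classical randomized-adversary argument, in which the adversary samples a root-to-leaf path of a depth-$\ldim(H_n)$ mistake tree and interleaves its queries with noise. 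The main obstacle will be the explicit mistake-tree construction underlying the lower bound on $\ldim(H_n)$: one must verify carefully at every node that the chosen triplet query admits two distinct feasible answers under the set of still-consistent hypotheses, and that the recursion continues to produce valid hierarchical trees along every branch down to depth $\Omega(n\log n)$.
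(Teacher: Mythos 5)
Your overall strategy matches the paper's: reduce both halves to a tight bound on the multiclass Littlestone dimension $\ldim$ and then invoke the mistake-bound and regret machinery of Daniely, Sabato, Ben-David, and Shalev-Shwartz. The upper bound $\ldim = O(n\log n)$ via a halving/counting argument over the $\le n^n$ hierarchical trees is also the paper's argument (Lemma~\ref{lem:littlestone_upper_bound}). Where you diverge is the lower-bound construction. You propose a sequential leaf-insertion scheme: fix a partial tree on $k$ leaves and binary-search among the $\Theta(k)$ attachment positions for the $(k{+}1)$-st leaf using $\Theta(\log k)$ triplet queries, summing $\sum_k \Theta(\log k)=\Theta(n\log n)$. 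The paper instead constructs the shattered tree (Algorithm~\ref{alg:build_littlestone_tree}) as a \emph{parallel sort tournament}: points are partitioned into groups, each round asks $n'/k$ queries (one per group's $k$-tuple) with the two edge labels being an ascending and a descending ``ladder,'' and the recursion refines the partition until singletons remain, so every leaf of $L$ corresponds to a linear order on the points realized by a caterpillar tree. This matters for two concrete reasons your sketch glosses over. First, a Littlestone-shattered tree is a \emph{complete} binary tree, so every root-to-leaf path must have the same length; the paper's round-synchronous construction gives uniform depth for free, whereas your binary search over $\Theta(k)$ positions will terminate at varying depths across branches and you would need to pad consistently, which is nontrivial because padding queries must still admit two feasible answers. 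Second, for an arbitrary (non-caterpillar) partial tree the three answers to $(v,a,b)$ partition the candidate positions very unevenly (e.g.\ if $a,b$ are close, the ``odd-one-out $v$'' class dominates), so it is not automatic that each step can discard a constant fraction of positions while keeping two feasible answers; restricting the partial tree to a caterpillar, as the paper implicitly does via ladders, makes the positions a linear order and restores a clean median split. So the plan is the right shape, and the obstacle you flag at the end is exactly the right one, but the sequential-insertion construction needs to be replaced by (or carefully hammered into) a synchronized, caterpillar-preserving scheme like the paper's tournament to actually close the gap.
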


A summary of our results is shown in Table~\ref{table:results} (which hold for both triplet queries, and more generally to $k$-tuples for constant $k$). 

\def\arraystretch{1.5}
\begin{table}[ht!]
    \centering
    \begin{tabular}{ |l|c| }
        \hline
         & Number of $k$-tuple labels for constant $k$ \\
        \hline
        PAC realizable* & $\Theta(\frac{n}{\epsilon} \cdot \polylog(\frac{1}{\epsilon}, \frac{1}{\delta}))$ \\ 
        \hline
        PAC agnostic* & $\Theta(\frac{n}{\epsilon^2} \cdot \polylog(\frac{1}{\epsilon}, \frac{1}{\delta}))$ \\ 
        \hline
        Online realizable**  & $\Theta(n \log n)$ \\
        \hline
        Online agnostic** & \makecell{$O(\sqrt{T n \log n  \log (T)})+\mathrm{OPT} $, \\ $\Omega(\sqrt{T n \log n}) + \mathrm{OPT}$} \\
        \hline
    \end{tabular}
    \ifarxiv{\caption{[*] Binary trees are considered in Theorem~\ref{thm:natarajan_main}, and non-binary trees in Theorem~\ref{thm:non-binary}. [**] Binary trees are considered in Theorem~\ref{thm:littlestone_main_formal}, and the non-binary case follows from Theorem~\ref{thm:non-binary}.}\label{table:results}}\else{\caption{Summary of sample complexities in each setting}\label{table:results}}\fi
\end{table}
\def\arraystretch{1}

\subsection{Our Techniques}
\label{ssec:our_techniques}
It is known (see e.g.~\citet{DanielySBS15}) that the PAC-learning and online learning complexities are nearly tightly described in terms of the Natarajan~\citep{N89} and Littlestone~\citep{Littlestone87} dimensions respectively. Hence in this paper we focus on giving tight bounds on these two values. Since the exact definitions are technical, we refer the reader to to Definition~\ref{def:natarajan} for the precise definition of Natarajan dimension and the full version for the Littlestone dimension.


\paragraph{Natarajan dimension}

In the PAC-learning setting, VC-dimension~\citep{VC71} can be used to capture query complexity for binary classification problems.
The problem of labeling tuples considered in this paper corresponds to multiclass classification since even for triplets there are three ``odd one out'' labels possible.
A generalization of VC-dimension which captures this scenario is the Natarajan dimension~\citep{N89}. While it can be shown via a simple probabilistic argument that this dimension is $O(n \log n)$, we argue that the exact bound is in fact linear in the number of points, which implies Theorem~\ref{thm:pac-learning-intro}.

In this section, we focus on triplet constraints; \ifarxiv{Lemma~\ref{lem:tuple_to_tiplet} shows how to reduce the $k$-tuples to triplets. }\else{in the full version, we show how to reduce $k$-tuples to triplets. }\fi
In order to bound the sample complexity, we use a version of the definition of Natarajan dimension adapted to our setting. 
Given a triple $\Delta = (a, b, c)$, we denote an ``odd one out'' constraint separating $c$ from $a$ and $b$ as $\C abc$.
For each triplet, there are $3$ possible constraints, i.e. $3$ possible labels.
We are now ready to define the notion of Natarajan Shattering.
\begin{definition}[Natarajan Shattering]
\label{def:natarajan_shattering}
Let $S = \{(a_1, b_1, c_1), \dots (a_k, b_k, c_k)\}$ be a set of triples of points. We say $S$ is \emph{Natarajan shattered} if for every triple $(a_i, b_i, c_i) \in S$ there exist two distinct constraints $f_1(a_i, b_i, c_i)$, $f_2(a_i, b_i, c_i)$ (e.g. $f_1(a_i, b_i, c_i) = \C{a_i}{c_i}{b_i}, f_2(a_i, b_i, c_i) = \C{a_i}{b_i}{c_i}$) on this triple, such that for every subset $R \subseteq S$ there is a hierarchical tree $T$ for which:
\begin{compactitem}
    \item For every $\Delta \in R$ it holds that $T$ satisfies $f_1(\Delta)$,
    \item For every $\Delta \in S \setminus R$ it holds that $T$ satisfies $f_2(\Delta)$.
\end{compactitem}
The Natarajan dimension of the hierarchically labeled tuples is defined as the size of the largest cardinality of a set which can be Natarajan shattered.
\end{definition}
Intuitively, for every triplet in $S$, we fix two labels (out of three possible).
The set is shattered if all possible $2^{|S|}$ combinations of these labels on $S$ are realizable by the hypothesis space.

\begin{example}
Consider a point set, consisting of $4$ points: $a, b, c, d$. Consider two sets of triples:
\begin{itemize}
    \item $S_1 = \{(a, b, c), (b, c, d)\}$
    \item $S_2 = \{(a, b, c), (b, c, d), (c, d, a)\}$.
\end{itemize}
Note that for $S_1$ we can select $f_1(a, b, c) = \C abc$, $f_2(a, b, c) = \C bca$ and also $f_1(b,c,d) = \C bcd, f_2(b, c, d) = \C cdb$. Then for any of the four subsets of $S_1$ we can create a tree which is consistent with the $f_1$ choices on the subset and consistent with the $f_2$ choices otherwise. We show all four resulting trees on Figure~\ref{fig:shattering}.
\begin{figure}[t!]
	\centering
        \
\newcommand{\treeparams}{for tree={s sep=0.6cm,l-=1.5em,solid node}}
\bracketset{action character=@}

\begin{tikzext}
    \begin{forest}
        [ ,@\treeparams,
          [
            [
              [ ,label={below:$a$} ]
              [ ,label={below:$b$} ]
            ]
            [ ,label={below:$c$}]
          ]
          [ ,label={below:$d$}]
        ];
        \node[below=8em,align=center,anchor=center] {$\C abc$ \\ $\C bcd$};
    \end{forest}
    \quad
    \begin{forest}
        [ ,@\treeparams,
          [
            [
              [ ,label={below:$b$} ]
              [ ,label={below:$c$} ]
            ]
            [ ,label={below:$a$}]
          ]
          [ ,label={below:$d$}]
        ];
        \node[below=8em,align=center,anchor=center] {$\C bca$ \\ $\C bcd$};
    \end{forest}
    \quad
    \begin{forest}
        [ ,@\treeparams,
          [
            [ ,label={below:$a$} ]
            [ ,label={below:$b$} ]
          ]
          [
            [ ,label={below:$c$}]
            [ ,label={below:$d$}]
          ]
        ];
        \node[below=8em,align=center,anchor=center] {$\C abc$ \\ $\C cdb$};
    \end{forest}
    \quad
    \begin{forest}
        [ ,@\treeparams,
          [
            [
              [ ,label={below:$c$} ]
              [ ,label={below:$d$} ]
            ]
            [ ,label={below:$b$}]
          ]
          [ ,label={below:$a$}]
        ];
        \node[below=8em,align=center,anchor=center] {$\C bca$ \\ $\C cdb$};
    \end{forest}
\end{tikzext}
	\caption{Shattering a set of two triplets $\{(a, b, c), (b, c, d)\}$.}
		\label{fig:shattering}
\end{figure}
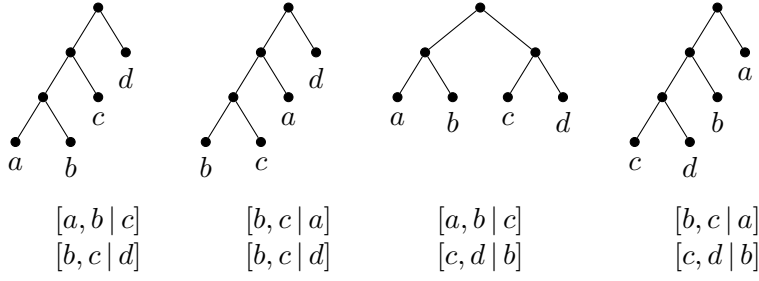
It is less straightforward, however, to check that for the set $S_2$, no possible selection of $f_1, f_2$ for each of its triangles can be used to satisfy the definition of Natarajan shattering.
This follows from our key technical result.
\end{example}

\begin{theorem}\label{thm:natarajan-intro}
The Natarajan dimension of hierarchically labeled triplets on $n$ elements is $n - 2$.
\end{theorem}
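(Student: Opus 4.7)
I would exhibit an explicit shattered family of size $n-2$. Order the points as $x_1,\dots,x_n$ and set $\Delta_i=(x_i,x_{i+1},x_{i+2})$ for $i=1,\dots,n-2$, choosing the two allowed labels $f_1(\Delta_i)=\C{x_i}{x_{i+1}}{x_{i+2}}$ and $f_2(\Delta_i)=\C{x_{i+1}}{x_{i+2}}{x_i}$. In both options the middle element $x_{i+1}$ is the ``pivot'' that is always grouped with one of its neighbors, so these two labels are equivalent to the two possible strict orderings of the LCA-heights $h_i=\mathrm{height}(\lca(x_i,x_{i+1}))$ and $h_{i+1}=\mathrm{height}(\lca(x_{i+1},x_{i+2}))$ in the realizing tree. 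Given any $R\subseteq\{1,\dots,n-2\}$, I pick distinct positive reals $h_1,\dots,h_{n-1}$ satisfying $h_i<h_{i+1}$ iff $i\in R$ (trivially possible) and realize the associated ultrametric tree by a UPGMA-style bottom-up process on the chain $x_1 x_2\cdots x_n$: at each step, merge the adjacent clusters whose candidate LCA-height is smallest. A direct check confirms every $\Delta_i$ gets the intended label.

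\textbf{Upper bound ($\leq n-2$).} Suppose for contradiction that some $S=\{\Delta_1,\dots,\Delta_{n-1}\}$ with pivots $p_1,\dots,p_{n-1}$ and non-pivots $u_i,v_i$ is shattered. The plan is to derive a structural obstruction on the required $2^{n-1}$ trees. Associate to $S$ the \emph{non-pivot graph} $G$ on $[n]$ whose edges are the pairs $\{u_i,v_i\}$; since $|E(G)|=n-1$, either $G$ contains a cycle or $G$ is a spanning tree.

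In the cyclic case, pick a cycle $u_{j_1}-u_{j_2}-\cdots-u_{j_r}-u_{j_1}$ coming from triples with pivots $p_{j_s}$. In any tree $T$, the ultrametric identity $d_T(u_{j_s},u_{j_{s+1}})=\max(d_T(p_{j_s},u_{j_s}),d_T(p_{j_s},u_{j_{s+1}}))$ forces each cycle-edge length to equal one of two pivot-incident distances, with the bit $b_{j_s}$ choosing which. I would then invoke the \emph{ultrametric cycle property} (in every ultrametric, the maximum of the cyclic distances along any cycle of points is attained at least twice) to rule out the bit pattern that would demand a unique cycle-maximum, contradicting shattering. In the acyclic case, $G$ is a spanning tree; I would induct on $n$ by choosing a leaf $w$ of $G$, noting that $w$ is incident to a unique non-pivot pair coming from some $\Delta_\ell$, and then removing $\Delta_\ell$ and contracting $w$ to obtain an instance on $n-1$ points with $n-2$ triples, arguing that a shattering of $S$ would induce a shattering of the reduced instance and violate the inductive hypothesis.

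\textbf{Main obstacle.} The hardest step is turning the cycle property into a genuine exclusion of a bit pattern: the bits on the cycle do not directly pin the cycle-edge lengths to particular real values, so one must carefully identify a bit configuration that is incompatible with every ultrametric assignment --- e.g.\ by following the chain of forced equalities $d_T(u_{j_s},u_{j_{s+1}})=d_T(p_{j_s},u_{j_s}^{\mathrm{bit}})$ around the cycle and showing the result leaves no room for the maximum to be attained twice. A parallel obstacle appears in the spanning-tree case: the contraction must preserve the pivot status of surviving triples (whose pivots may themselves equal $w$), and verifying that the reduced instance is truly shattered requires delicate bookkeeping. I expect these two steps to be the crux of the argument.
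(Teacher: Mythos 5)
Your lower bound is correct and uses a genuinely different construction from the paper. You take the ``chain'' triplets $(x_i,x_{i+1},x_{i+2})$ with pivot $x_{i+1}$ and realize any subset $R$ via a single-linkage tree built from adjacent LCA-heights $h_1,\dots,h_{n-1}$; the paper's Lemma~\ref{lem:natarajan_lb} instead fixes a two-element core $\{a_1,a_2\}$ and uses the ``star'' family $\{(a_1,a_2,b):b\neq a_1,a_2\}$, where shattering is immediate because each $b$ occurs in only one triplet. Both work and give $n-2$.

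The upper bound is where the proposal diverges sharply from the paper and has genuine gaps, largely the ones you yourself flag. You study the fixed \emph{non-pivot} graph $G$ on $n$ vertices with $n-1$ edges. Concrete problems: (i) your ultrametric identity $d(u_{j_s},u_{j_{s+1}})=\max(d(p_{j_s},u_{j_s}),d(p_{j_s},u_{j_{s+1}}))$ is correct, but chaining it around a cycle only yields a clean cyclic chain of strict inequalities $d(p,u_1)<d(p,u_2)<\dots<d(p,u_1)$ when all pivots $p_{j_s}$ coincide; with distinct pivots the forced equalities hop between different pivot-incident distances, and it is no longer clear which bit pattern is excluded. (ii) The spanning-tree case is only gestured at: contracting a leaf $w$ of $G$ into its non-pivot neighbor alters any surviving triplet whose pivot is $w$, so the reduced instance need not be a well-formed shattering instance on $n-1$ points, and the inductive hypothesis does not obviously apply. (iii) You also need to handle coinciding non-pivot pairs (so $G$ is a multigraph); note that this really occurs — in the paper's lower-bound family all $n-2$ non-pivot pairs equal $\{a_1,a_2\}$, yet that family \emph{is} shattered, so ``repeated edge $\Rightarrow$ cycle $\Rightarrow$ not shattered'' is only safe at the threshold $n-1$ and requires an argument.

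The paper's route is quite different and sidesteps all three issues. It associates to each \emph{oriented} constraint $[a,b\,|\,c]$ the grouped pair $\{a,b\}$ (unlike your non-pivot pair, this depends on the orientation), calls $S$ \emph{closed} when some orientation of the triplets induced by $S$ makes these grouped-pair edges connect $S$ (Definition~\ref{def:closed_set}), and proves that a contradictory orientation exists iff a closed set exists (Lemma~\ref{lem:closed_set}). A \emph{critical set} is a minimal $S$ with at least $|S|-1$ induced triplets, and Theorem~\ref{thm:crit_set_is_a_closed_set} shows every critical set is closed via a reorientation/exchange argument on spanning forests. Finally, passing from ``some of $3^m$ orientations is contradictory'' to ``some of the $2^m$ orientations permitted by $f_1,f_2$ is contradictory'' is handled by a second reorientation step (proof of Theorem~\ref{thm:natarajan}): if a disallowed $[a,b\,|\,c]$ is a bridge, then $c$ lies on one side of the cut and one of the two allowed reorientations of the triplet reconnects $S$. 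Your upper-bound plan could perhaps be completed, but as written it has unresolved gaps exactly where the paper's machinery does the work.
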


The proof is based on identifying subsets with a certain property, which we refer to as \emph{closed sets} (Definition~\ref{def:closed_set}) which prevent construction of a consistent tree.
We then further characterize these sets in terms of easier to define \emph{critical sets} (Definition~\ref{def:critical_set}).
We prove that for $n-1$ elements a critical set must exist, which, combined with the fact that a set of consistent $n-2$ pairs of constraints exists, gives a tight bound for the Natarajan dimension.

\paragraph{Littlestone dimension}
Next, we show a tight bound on the Littlestone dimension. For the purpose of the introduction, it is convenient to view the Littlestone dimension in our setting as follows. Assume that we have two players Alice and Bob who play the following game for $t$ iterations. Initially, there is an empty set of constraints $C$. In every iteration, Alice passes Bob a triplet  from the set of points $V$, and two distinct elements from this triplet. For example,  suppose that the triplet is $(a,b,c)$ and the elements chosen are $a,b$. Then Bob chooses one of the two constraints where one of the chosen elements is the ``odd one out'', i.e.  $\C bca$ or $\C acb$ and adds it to $C$.
The Littlestone dimension is the maximum number of rounds $t$ for which Alice has a strategy that results in $C$ admitting a tree that satisfies all constraints in $C$, regardless of Bob's strategy.


\begin{theorem}
\label{thm:littlestone_intro}
The Littlestone dimension of hierarchically labeled triples is $\Theta(n \log n)$.
\end{theorem}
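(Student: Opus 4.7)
The plan is to prove matching bounds of $\Theta(n \log n)$.

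For the upper bound, I would invoke the classical halving argument: for any finite hypothesis class $H$, one has $\ldim(H) \le \log_2 |H|$, since Bob can always answer in a way that (at least) halves the set of hypotheses still consistent with $C$. In our setting, $H$ is the class of rooted binary hierarchical trees on $n$ labeled leaves, and $|H| = (2n-3)!! = 2^{\Theta(n \log n)}$ by Stirling's approximation. Hence $\ldim(H) = O(n \log n)$.

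For the lower bound, I would exhibit an explicit strategy for Alice by reducing from comparison-based sorting. Fix a designated element $\alpha \in V$. For any two other elements $v_i, v_j \in V$, consider the triplet query $(v_i, v_j, \alpha)$ with distinguished pair $\{v_i, v_j\}$: Bob's choice of $v_i$ as the odd one yields the constraint $\C{v_j}{\alpha}{v_i}$, while his choice of $v_j$ yields $\C{v_i}{\alpha}{v_j}$. Now consider the sub-class of caterpillar hypotheses $T_\pi$ with $\alpha$ at the deepest position, indexed by permutations $\pi$ of $V \setminus \{\alpha\}$: $\alpha$ and $v_{\pi(1)}$ are siblings at the bottom, their parent is sibling to $v_{\pi(2)}$, and so on, with $v_{\pi(n-1)}$ joined last at the root. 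A direct LCA calculation shows that on $T_\pi$, the query above returns $\C{v_i}{\alpha}{v_j}$ iff $\pi^{-1}(i) < \pi^{-1}(j)$, yielding a faithful bijection between these triplet queries and pairwise comparisons of permutation indices.

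Alice's strategy is then to simulate any information-theoretically optimal comparison sort, such as merge sort, on the $n-1$ elements of $V \setminus \{\alpha\}$, translating each comparison into the corresponding triplet query involving $\alpha$. The decision tree of such a sort has worst-case depth $\lceil \log_2 (n-1)! \rceil = \Omega(n \log n)$. Because no comparison in an optimal sort is forced by transitivity of earlier answers, every query along any root-to-leaf path admits both outcomes: each outcome is consistent with at least one permutation and hence realized by at least one caterpillar $T_\pi$ in $H$. Thus Alice's strategy constitutes a valid Littlestone tree of depth $\Omega(n \log n)$, completing the lower bound.

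The main technical hurdle is verifying the claimed bijection between triplet queries involving $\alpha$ and pairwise comparisons, which reduces to a careful case analysis of LCA relationships within caterpillars. Once this reduction is in place, the $\Omega(n \log n)$ lower bound transfers directly from the classical information-theoretic lower bound $\log_2 n!$ for comparison-based sorting.
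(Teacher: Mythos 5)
Your halving-style upper bound is the same as the paper's (Lemma~\ref{lem:littlestone_upper_bound}). For the lower bound, you take a genuinely different and, for triplets, a cleaner route than the paper: fixing a pivot $\alpha$ reduces each query $(v_i,v_j,\alpha)$ with distinguished pair $\{v_i,v_j\}$ to a pairwise comparison of $v_i$ and $v_j$ in the caterpillar order, and any suitable sort does the rest. The paper instead builds the Littlestone tree by a recursive tournament (Algorithm~\ref{alg:build_littlestone_tree}) that partitions $V'$ into nested blocks, which is heavier but is written once to cover $k$-tuples; your pivot trick would need a separate generalization to recover that. Your LCA bookkeeping in the caterpillar is correct: in $T_\pi$ with $\alpha$ deepest, $\lca_T(v_i,\alpha)$ lies strictly below $\lca_T(v_j,\alpha)=\lca_T(v_i,v_j)$ exactly when $v_i$ precedes $v_j$, so the orientation does encode the comparison.

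There is, however, a gap in the depth accounting. A Littlestone-shattered tree of depth $d$ is required to be a perfect binary tree: every root-to-leaf path has length exactly $d$ and every internal node has both children realizable. The decision tree of a comparison sort is not of this form: its leaves sit at varying depths, and if you truncate at the worst-case depth $\lceil\log_2(n-1)!\rceil$ you will hit branches that have already terminated. The quantity you actually need is the minimum leaf depth, i.e.\ the fewest comparisons the sort can make over all of Bob's reply sequences, and the information-theoretic lower bound says nothing about that. For merge sort this minimum is still $\Omega(n\log n)$ -- merging two sorted lists of size $m$ takes at least $m$ comparisons, so every input forces at least $\frac{1}{2}n'\log n'$ comparisons in total -- but this needs to be stated, and then the shattered tree is obtained by truncating at that minimum. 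Likewise, the claim that no comparison is ever forced is not a consequence of worst-case information-theoretic optimality in general; it is a property of merge sort in particular (cross-list comparisons compare elements sorted from disjoint subinstances, so both outcomes are always live). With those two points supplied, your construction goes through.
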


To prove the lower bound on the Littlestone dimension, we need to describe Alice's strategy of producing an adaptive sequence of queries of size $\Omega(n\log{n})$ such that for any choice of Bob's query answers there is a hierarchical tree consistent with the query answers at the end of the sequence. The sequence we construct is intuitively a sort tournament on $V$, for which there is always some ordering $v_1,\dots,v_n$ on the point set $V$ such that if we place the points in order in a hierarchical tree whose internal nodes are shaped like a path, this tree satisfies all constraints.


\paragraph{Non-binary trees} When all constraints are of type ``odd one out'', it suffices to consider binary trees: if a non-binary tree is consistent with the constraints, then its binarization (i.e. we replace a node with $\ell$ children with an arbitrary binary tree on these children) is also consistent.
However, one may generalize the problem to non-binary trees by considering constraints of the form \textit{``points $i,j$ and $k$ must be split simultaneously''}.
For the PAC-learning setting, in \ifarxiv{Appendix~\ref{app:missing_proofs}}\else{the full version}\fi, we consider and solve this setting by extending Theorem~\ref{thm:pac-learning-intro} appropriately.
In the online learning setting, Theorem~\ref{thm:online_main} generalizes as well -- the same lower bound holds for the non-binary setting and the upper bound generalizes trivially.


\subsection{Related Work}
An early work by~\citet{ASSU81} shows that given access to $m$ consistently labeled triples on $n$ vertices, the tree satisfying them can be constructed in $O(mn)$ time. This was improved to $O(m \log^2 n)$ time using the techniques in~\citet{HKW99, HLT01}, and to $O(\min(n^2,m\log{n})+\sqrt{mn}\log^{2.5}(n))$ by~\citet{Thorup99}.

Compared with our settings where we are given queries from some distribution and aim for a bounded error rate on unseen triplets, a related line of work considers the problem of exactly reconstructing the entire tree using adaptively or non-adaptively chosen triplet queries (see e.g.~\citet{KLW96,EK18}).
For non-adaptive queries, a lower bound of $\Omega(n^3)$ queries precludes any non-trivial results~\citep{EK18}.
For adaptive queries, $\Theta(n \log n)$ consistent queries and running time are necessary and sufficient for the construction of the tree~\citep{KLW96}.
This can be extended to handle a mix of independently correct labels and adversarial noise, resulting in similar bounds for constant noise levels~\citep{EK18}.

In the case when labeled data is allowed to be inconsistent, minimizing the number of disagreements with labeled triplets is known to be hard to approximate~\citep{CDW13}.
Recent work~\citet{CMA21} gives algorithms for maximizing the number of agreements between the tree and the labeled data. In a related line of work, results are known for optimizing certain objectives while respecting triplet constraints~\citep{CNC18}.

\section{Preliminaries}
\label{sec:preliminaries}
\noindent We begin with a formal definition of a hierarchical tree.
\begin{definition}
    \label{def:hc_tree}
    Given a set of points $V$, we define a \emph{hierarchical tree} $T$ as a binary tree such that $V$ is bijectively mapped on the leaves of $T$.
\end{definition}
With a slight abuse of notation, for a fixed tree $T$, we treat elements of $V$ as the leaves of $T$.
For two leaves $i,j$ and a hierarchical tree $T$, we denote the least common ancestor, i.e. the internal node corresponding to the smallest subtree containing both $i$ and $j$, as $\lca_T(i,j)$.

We are interested in satisfying structural constraints on the elements of $V$.
In each section, we consider different types of constraints in the following order:
\begin{itemize}
    \item We first consider the simplest structured constraints, i.e. constraints on three elements.
    \item We then generalize our result for constraints on $k \ge 3$ elements.
    \item Finally, we further generalize our result for the case when the trees are not necessarily binary.
\end{itemize}
\paragraph{Triplet constraints}
\begin{definition}
    \label{def:triplet_constraints} 
    For a hierarchical tree $T$ and a triplet of distinct points $(a,b,c) \subseteq V$\footnote{Since the original order inside the tuples doesn't matter, with a slight abuse of notation we treat tuples (triplets in particular) as sets.} we say that $T$ \emph{satisfies the constraint} denoted $[a,b | c]$ if $T$ cuts $c$ from $a$ and $b$, i.e. $\lca_T(a,c) = \lca_T(b,c)$.
    We call such a constraint an orientation of triplet $(a,b,c)$.
\end{definition}
\begin{definition}
    \label{def:orientation}
    For a triplet $t = (a,b,c) \subseteq V$ we say that an \emph{orientation} of the triplet is a constraint over the nodes $a,b,c$, i.e. $\C abc$, $\C acb$, or $\C bca$. We further denote it as $\orient{t}$. Given a set of triplets $\Delta = \{(a_i, b_i, c_i)\}_i$ over $V$, we say that an orientation of $\Delta$ is a set of orientations over each triplet in $\Delta$. We similarly denote it as $\orient{\Delta}$.
\end{definition}
%
That is, orientation is a particular choice of constraint(s) generated based on a given triplet(s).
In order to apply our results in the PAC-learning setting, we are interested in the sets of constraints that can not be satisfied:
\begin{definition}
    \label{def:contradictory_triplet_set}
    We define a set of constraints as \emph{contradictory} if there is no hierarchical tree that satisfies all constraints in the set.
\end{definition}

\paragraph{$k$-tuple constraints}

Any orientation of a triplet uniquely defines a tree on this triplet.
We can use this intuition to define constraints on $k$ elements.
\begin{definition}
    \label{def:k_constraint}
    Let $A=(a_1, \ldots, a_k)$ be a $k$-tuple, and let $T_A$ be a binary tree with leaves $a_1, \ldots, a_k$.
    Then we say that a binary tree $T$ \emph{satisfies constraint $T_A$} if any triplet constraint $\C{a_i}{a_j}{a_t}$ satisfied by $T_A$ is also satisfied by $T$.
\end{definition}
\begin{definition}
    \label{def:k_tuple_orientation}
    For a given $k$-tuple $(a_1, \ldots, a_k) \subseteq V$, an orientation of the tuple is any constraint on $a_1,\ldots, a_k$.
\end{definition}

\paragraph{Non-binary trees}
When non-binary trees are allowed, some nodes $i,j,k$ can be separated at the same time,
i.e. $\lca_T(a,b)=\lca_T(a,c)=\lca_T(b,c)$.
We denote such case as $\TW abc$.
Similarly, we allow $k$-tuple constraints where multiple elements can be separated at the same time.

\paragraph{Sample complexity}
Let $\mathcal D$ be the distribution on $X \times Y$, where $X$ is the set of inputs and $Y$ is the set of labels.
Let $H \subseteq Y^X$ be a hypothesis space\footnote{$Y^X$ is the set of all functions $X \to Y$}.
For a given $h \in H$, we define the error rate as $\err_{\mathcal D}(h) = \mathbb{P}_{(x,y) \sim \mathcal D} [h(x) \ne y]$.
Let $h^*_{\mathcal D} = \argmin_{h \in H} \err_{\mathcal D}(h)$.
We say that the settings are realizable if $\err_{\mathcal D}(h^*_{\mathcal D}) = 0$; otherwise, we say that the settings are agnostic.
\begin{definition}[Sample complexity]
    We define \emph{sample complexity} $m_H(\eps, \delta)$ as the minimum number of samples, such that there exists a predictor that, for any distribution $\mathcal D$, given $m_H(\eps, \delta)$ samples from $\mathcal D$, achieves error rate at most $\err_{\mathcal D}(h^*_{\mathcal D}) + \epsilon$ with probability at least $\delta$.
    We denote the sample complexity as $m_H^r(\eps, \delta)$ for the realizable case and $m_H^a(\eps, \delta)$ for the agnostic case.
\end{definition}
For the binary classification task ($|Y|=2$), the sample complexity can be estimated using VC-dimension~\citep{VC71}.
Its analog for the multi-class settings is the Natarajan dimension.
\begin{definition}[Natarajan dimension~\citep{N89}]\label{def:natarajan}
    Let $X$ be the set of inputs, $Y$ be the set of labels, and let $H \subseteq Y^X$ be a hypothesis class.
    We say that $S \subseteq X$ is N-shattered by $H$ if there exist $f_1,f_2\colon\ X \to Y$ such that $f_1(x) \ne f_2(x)$ for all $x \in S$ and for every $T \subseteq S$ there exists $g \in H$ such that:
    \[
        g(x)=f_1(x) \text{ for } x \in T\text{ and } g(x)=f_2(x)\text{ for } x \notin T
    \]
    The Natarajan dimension $\ndim(H)$ of $H$ is the maximum size of an N-shattered set.
\end{definition}
In our case, the hypothesis space is defined by the set of constraints induced by all possible hierarchical trees:
\begin{definition}
    \label{def:hypothesis}
    Given a set $V$ be a set and an integer $k \ge 3$, let $X$ be the set of $k$-tuples on $V$ and $Y$ be the set of orientations of the $k$-tuples.
    Then we use $H_k(V)$ to denote a set of mappings $X \to Y$ such that for each mapping there exists a hierarchical tree where each $k$-tuple is oriented according to the mapping.
\end{definition}
The following result gives a tight estimate of the sample complexity based on the Natarajan dimension of the problem
\begin{lemma}[\citet{bendavid1995characterizations}]
    \label{lem:natarajan_samples}
    If $|Y| < \infty$, then for the sample complexity $m_H^r(\eps, \delta)$, the following holds for some universal constants $C_1$ and $C_2$ for the realizable case:
    \begin{align*}
        C_1 \frac{\ndim(H) + \log \frac 1\delta}{\eps} &\le m_H^r(\eps, \delta) \le C_2 \frac{\ndim(H) \log \frac 1\eps \log |Y| + \log \frac 1\delta}{\eps}
    \shortintertext{For the agnostic case:}
        C_1 \frac{\ndim(H) + \log \frac 1\delta}{\eps^2} &\le m_H^a(\eps, \delta) \le C_2 \frac{\ndim(H) \log |Y| + \log \frac 1\delta}{\eps^2}
    \end{align*}
\end{lemma}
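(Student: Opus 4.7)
The plan is to follow the standard Natarajan--Sauer framework adapted to the multiclass setting, which is how Ben-David et al.\ originally established this bound. Since $Y$ is finite, the hypothesis class $H$ can be treated essentially as a labeled set system, and the core technical ingredient is a multiclass analog of the Sauer--Shelah lemma: for any finite $S \subseteq X$ and $d = \ndim(H)$, the restricted class $H|_S$ satisfies $|H|_S| \le (e \cdot |S| \cdot |Y|^2 / d)^d$. I would prove this by a shifting/compression argument, showing that if $|H|_S|$ exceeds this bound then some subset of size $d+1$ must be N-shattered, contradicting the definition of the Natarajan dimension.

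Once that growth-function estimate is in place, the upper bounds follow from uniform convergence. First I would show that for a sample of size $m$ drawn i.i.d.\ from $\cD$, the probability that any $h \in H$ has empirical error deviating from its true error by more than $\eps$ is bounded by $2 \Pi_H(2m) \exp(-c \eps^2 m)$ in the agnostic case (via a symmetrization/ghost-sample argument and a union bound over the $\Pi_H(2m)$ distinct dichotomies on the doubled sample), and by $\Pi_H(2m) 2^{-\eps m/2}$ in the realizable case (since a bad hypothesis survives only if none of the $m$ samples hit its $\eps$-disagreement region with the target). Plugging in $\Pi_H(m) \le (em|Y|^2/d)^d$ and solving for $m$ so that the failure probability is at most $\delta$ yields $m = O(\eps^{-1}(d \log(1/\eps) \log|Y| + \log(1/\delta)))$ in the realizable case and $m = O(\eps^{-2}(d \log|Y| + \log(1/\delta)))$ in the agnostic case, matching the stated upper bounds (after verifying that ERM returns such a hypothesis). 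The extra $\log(1/\eps)$ factor in the realizable bound arises because uniform convergence is needed only on a multiplicative scale; the cleanest way to obtain it is to bound $\Pr[\err_\cD(\hat h) > \eps]$ directly for ERM outputs rather than appealing to additive uniform convergence.

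For the lower bounds, I would take an N-shattered set $S$ of size $d = \ndim(H)$ with witnesses $f_1, f_2$ and construct a family of $2^d$ distributions on $X$: each distribution places mass $\Theta(\eps)$ uniformly on $S$ and the remaining mass on one fixed point outside $S$ where all hypotheses agree. Labels are given by $g_T$ for some $T \subseteq S$. A standard no-free-lunch / Fano-style packing argument then shows that any learner needs $\Omega((d + \log(1/\delta))/\eps)$ samples in the realizable case and $\Omega((d + \log(1/\delta))/\eps^2)$ samples in the agnostic case to distinguish these $2^d$ hypotheses with confidence $1-\delta$ and error at most $\eps$.

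The main obstacle is the multiclass Sauer--Shelah step: unlike the binary case, the reduction from ``many distinct labelings'' to ``a large shattered set'' is subtler because shattering now requires the existence of two witness functions $f_1 \ne f_2$ rather than a simple $\{0,1\}$-dichotomy, so the shifting operator has to be defined with respect to a chosen pair of labels and the inductive argument carries the $|Y|^2$ factor. Once this lemma is in hand, the rest of the proof is a routine adaptation of the binary PAC/agnostic analysis, and the $\log|Y|$ dependencies in the upper bounds come directly from the two $|Y|$ factors inside the logarithm of the growth function.
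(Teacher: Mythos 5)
This lemma is quoted from \citet{bendavid1995characterizations} and the paper does not supply its own proof, so there is no in-paper argument for your proposal to diverge from. Your reconstruction is the standard and correct route to the result: a multiclass Sauer--Shelah (Natarajan) growth bound of the form $\Pi_H(m)\le(em|Y|^2/d)^d$ proved by a shifting argument, followed by symmetrization and a ghost-sample union bound for the upper bounds (with the $\log\frac1\eps$ appearing in the realizable case from solving $\eps m \gtrsim d\log m + \log\frac1\delta$ for $m$), and an N-shattered-set packing construction for the lower bounds. Two small remarks. First, your analysis actually yields the additive realizable bound $O\bigl(\eps^{-1}(d\log\tfrac1\eps + d\log|Y| + \log\tfrac1\delta)\bigr)$, which is tighter than the product form $d\log\tfrac1\eps\log|Y|$ stated in the lemma; since the stated form is weaker, this is not a gap, but you should be explicit that you are dominating the stated bound rather than matching it exactly. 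Second, you correctly flag that the shifting step is the technically delicate part: unlike the binary case, the down-shifting operator must be defined relative to an ordered pair of witness labels at each point, and preserving N-shattering under the shift requires care; this is exactly where the $|Y|^2$ (rather than $|Y|$) enters, and it is the one step where a careless adaptation of the binary proof would silently fail.
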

\todo{Littlestone dimension}





\section{Contradictory orientations}
\label{sec:contr_orient}

As described in Section~\ref{ssec:our_techniques}, the key component in our analysis is a tight bound on the Natarajan dimension.
To find it, we first consider a simpler question: ``\textit{for a given $n$, what is the minimum $m$, such that for any set of $m$ triplets on $[n]$ there exists a contradictory orientation of these triplets?}''
Recall that the definition of Natarajan dimension restricts the candidate orientations so that every triplet has only two allowed orientations, giving $2^m$ possible label combinations.
In this section, we answer this question without such a restriction, i.e. we check whether all $3^m$ label combinations are possible\footnote{One may think of this as another generalization of VC shattering, although not directly applicable to PAC learning}, and handle the restriction in the next section.
In what follows we prove that $m=n-1$.
\begin{theorem}
    \label{thm:triplets}
    For any $n > 2$, any set of triplets of size at least $n-1$ on these points allows for a contradictory orientation.
\end{theorem}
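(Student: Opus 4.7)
My approach is to prove Theorem~\ref{thm:triplets} by strong induction on $n$. The core observation is that a set of triplet constraints $\{\C{a_i}{b_i}{c_i}\}$ is contradictory whenever the auxiliary graph on $[n]$ with edges $\{a_i,b_i\}$ is connected: each constraint $\C{a_i}{b_i}{c_i}$ forces $a_i$ and $b_i$ to lie on the same side of any top-level cut of a hierarchical tree, so a connected graph blocks every possible cut (this is the standard \texttt{BUILD}-style obstruction). Hence it suffices to orient the $n-1$ triplets so that the chosen ``non-odd'' pairs connect $[n]$, either by forming a spanning tree of $K_n$ directly or by triggering the obstruction on a suitable subset via induction.

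For the base case $n=4$, by symmetry I may assume the three given triplets are $\{a,b,c\}, \{a,b,d\}, \{a,c,d\}$; orienting them as $\C{a}{b}{c}, \C{a}{d}{b}, \C{c}{d}{a}$ picks the pairs $\{a,b\},\{a,d\},\{c,d\}$, which form the spanning tree $b{-}a{-}d{-}c$ of $K_4$ and hence the obstruction. For the inductive step with $n\ge 5$, I split into two cases. \emph{Case A (dense):} some proper subset $S \subsetneq [n]$ with $|S|\ge 4$ contains at least $|S|-1$ of the triplets. Apply the inductive hypothesis to the triplets within $S$ to get a contradictory orientation of them, and extend arbitrarily to the remaining triplets; contradictoriness is preserved since any superset of an unsatisfiable constraint set remains unsatisfiable. \emph{Case B (sparse):} every proper subset $S\subsetneq [n]$ with $|S|\ge 3$ contains at most $|S|-2$ triplets, and I claim one can pick one pair per triplet so that the $n-1$ chosen pairs form a spanning tree of $K_n$, yielding the top-level contradiction.

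I will prove the spanning-tree selection in Case B via the matroid intersection theorem, applied to (i) the partition matroid with one part per triplet (containing its three pairs) and (ii) the graphic matroid of $K_n$. The sufficient condition reduces to $c(A)+|B(A)|\le n$ for every subset $A \subseteq E(K_n)$, where $c(A)$ is the number of connected components of the graph $([n],A)$ and $B(A)$ is the set of triplets whose three pairs all lie in $A$. Under the Case B sparsity, each component $V_j$ of $A$'s graph contributes at most $|V_j|-1$ triplets to $B(A)$: by sparsity when $V_j \subsetneq [n]$ (where triplets contained in $V_j$ number at most $|V_j|-2$), and by $|\mathcal T|=n-1$ when $V_j=[n]$. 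Summing over components yields $|B(A)| \le \sum_j(|V_j|-1) = n - c(A)$, as required.

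\textbf{Main obstacle.} The main delicate step is verifying the matroid intersection inequality under the sparsity condition, in particular handling the component-wise accounting in the boundary cases (isolated vertices and the full component $V_j=[n]$). A related subtlety is calibrating the case split so that the dense/sparse threshold ($|S|-1$ vs.\ $|S|-2$ triplets in $S$) aligns precisely with both the inductive base case $n=4$ and the bound required by matroid intersection.
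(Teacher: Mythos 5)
Your proof is correct, and it takes a genuinely different route from the paper's. Both arguments start from the same obstruction (a ``closed set'': an orientation whose chosen pairs connect some $S \subseteq V$, which is the paper's Lemma~\ref{lem:closed_set}), but the way you reach it is quite different. The paper defines a \emph{critical set} --- a minimum-cardinality $S$ with $|\Delta|_S| \ge |S|-1$ --- and then proves directly (Theorem~\ref{thm:crit_set_is_a_closed_set}) that any critical set is closed, via an intricate, fully elementary argument: take the orientation whose generated spanning forest has the fewest components, locate an ``unused'' triplet, and perform a chain of reorientations that merges two trees, with a multi-way case analysis (Cases 1, 2a--2c in the appendix). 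You instead run a strong induction on $n$: your dense case ($\exists\, S\subsetneq[n]$, $|S|\ge 4$, inducing $\ge |S|-1$ triplets) is essentially the reduction to a critical set, handled by the inductive hypothesis; your sparse case replaces the paper's hand-crafted reorientation argument with Rado's theorem / matroid intersection between the partition matroid (one pair per triplet) and the graphic matroid of $K_n$, verified via the $c(A)+|B(A)|\le n$ inequality. Your component-wise accounting in Case B is sound: singleton and 2-element components contribute $0 \le |V_j|-1$, proper components of size $\ge 3$ contribute $\le |V_j|-2$ by sparsity, and the full component (if $c(A)=1$) contributes $|\Delta| = n-1 = |V_j|-1$, so $|B(A)| \le \sum_j(|V_j|-1) = n - c(A)$ as claimed; and the two cases are genuinely exhaustive since $|S|=3$ can never induce $\ge 2$ triplets. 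The trade-off is that you invoke substantial black-box machinery (matroid intersection or Rado) where the paper's argument is self-contained, but in exchange you get a cleaner global structure and avoid the delicate reorientation bookkeeping. One minor point worth making explicit: the $n=3$ case of the theorem is vacuous (only one triplet exists on three points), which is why $n=4$ is the true base of your induction.
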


\subsection{Closed set}

We can think about every constraint $\C abc$ as an edge $(a,b)$ which corresponds to a separate vertex $c$, and we say that $\C abc$ ``generates'' edge $(a,b)$.
Clearly, if a hierarchical tree satisfies constraint $\C abc$, then there must exist a tree node such that one child's subtree contains $a$ and $b$, and another child's subtree contains $c$.
In other words, the tree is contradictory if it cuts an edge $(a,b)$ before first separating $c$ from $a$ and $b$.
When building a tree in a top-down manner, each node corresponds to some set of elements $S \subseteq V$, which we want to partition further.
When partitioning $S$, only triplets lying entirely in $S$ (which we call induced by $S$) can lead to a contradiction.
%
\begin{definition}[Induced triplets/constraints]
    Let $\Delta$ be a set of triplets over $V$ and let $S \subseteq V$. We define the set of triplets in $\Delta$ which are \emph{induced} by $S$ as $\Delta |_{S} = \{t \in \Delta \mid t \subseteq S\}$ (i.e., the set of all triplets that lie entirely within $S$).
    Similarly, given a set of constraints $C$, we define $C |_{S} = \bbr{\C abc \in C \mid a,b,c \in S}$.
\end{definition}
The above reasoning implies that it's impossible to split a tree node with set $S$ if it's connected by the edges generated by the constraints induced by $S$, since splitting $S$ would cut at least one edge (see Figure~\ref{fig:connected_contradiction} for example).
We will show that the existence of the hierarchical tree is determined by the existence of such set $S$.
For fixed $\Delta$, if it's possible to orient $\Delta$ so that $\orient{\Delta}|_S$ connects $S$, we call $S$ a closed set.
\begin{figure}[t!]
    \centering
\begin{tikzpicture}[-,level/.style={sibling distance = 2cm/#1, level distance = 1.5cm}]
    \node[align=left] at (-5,-0.8) {{\color{ForestGreen}$\C abc$}\\
                                    {\color{red}$\C acd$}\\
                                    {\color{Purple}$\C adb$}};
    \draw[line width=12pt,my triangle,
            postaction={draw,white,line width=10pt,my triangle,shorten >=2pt,shorten <=1pt}](-3.7,-0.8) -- (-2.2,-0.8);
    \node [circle, black, draw, fill=white] {a}
        child{ [ForestGreen, very thick] node [circle, black, draw, fill=white, thick] {b} }
        child{ [Red, very thick] node [circle, black, draw, fill=white, thick] {c} }
        child{ [Purple, very thick] node [circle, black, draw, fill=white, thick] {d} }
    ; 
\end{tikzpicture}
    \caption{Let $S = \{a,b,c,d\}$ and $\orient{\Delta} = \bbr{\C abc, \C acd, \C adb}$.
    Since $S$ is connected by edges generated by $\orient{\Delta}|_S$, the constraints are contradictory.
    When first splitting $S$, we must cut an edge, which leads to a contradiction: cutting edge $(a,b)$ violates $\C abc$, cutting $(a,c)$ violates $\C acd$, and cutting $(a,d)$ violates $\C adb$.
    }
    \label{fig:connected_contradiction}
\end{figure}
\begin{definition}[Closed set] \label{def:closed_set}
    Let $\Delta$ be a set of triplets over $V$. We say that a set $S \subseteq V $ is \emph{closed} w.r.t. $\Delta$ if there exists an orientation of $\Delta$ denoted as $\orient{\Delta}$, such that $E|_S=\{(a,b) \mid \C abc \in \orient{\Delta} |_{S}\}$ connects $S$.
\end{definition}
%



\noindent The following Lemma shows that the presence of a closed subset is a necessary and sufficient condition for existence of a contradictory orientation.
The proof is given in \ifarxiv{Appendix~\ref{app:missing_proofs}}\else{the full version}\fi. 
\begin{lemma}
    \label{lem:closed_set}
    A set of triplets $\Delta$ over $V$ allows for a contradictory orientation if and only if there exists a set $S \subseteq V$ that is closed w.r.t. $\Delta$.
\end{lemma}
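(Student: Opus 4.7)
The plan is to prove the two directions of the iff separately.

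For the $(\Leftarrow)$ direction, I will show that the witnessing orientation from the definition of closedness is itself contradictory. Suppose $S$ is closed via orientation $\orient{\Delta}$ and, for contradiction, a hierarchical tree $T$ satisfies $\orient{\Delta}$. I will examine $v := \lca_T(S)$: by choice of $v$ as the LCA, its two children induce a \emph{nontrivial} partition $S = S_1 \sqcup S_2$ with both parts nonempty. Connectivity of $E|_S$ then yields an edge $(a,b) \in E|_S$ with $a \in S_1, b \in S_2$, and this edge is generated by some $\C abc \in \orient{\Delta}|_S$, so $c \in S$. Regardless of which side holds $c$, one of $\lca_T(a,c), \lca_T(b,c)$ equals $v$ while the other is a strict descendant of $v$, contradicting $\C abc$.

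For the $(\Rightarrow)$ direction, I will prove the contrapositive by strong induction on $|V|$: \emph{if no $S \subseteq V$ is closed w.r.t. $\Delta$, then every orientation $\orient{\Delta}$ is satisfied by some tree.} The base case $|V|\le 2$ is vacuous since $\Delta$ is empty. For the inductive step, I fix an orientation $\orient{\Delta}$ and form $G = (V, E)$ with $E = \{(a,b) : \C abc \in \orient{\Delta}\}$. Because $V$ itself is not closed, $G$ is disconnected, so I pick any partition $V = V_1 \sqcup V_2$ into nonempty unions of components of $G$. For $S \subseteq V_i$, closedness w.r.t.\ $\Delta$ and closedness w.r.t.\ $\Delta|_{V_i}$ coincide (only triplets lying inside $S$ contribute to $E|_S$), so the inductive hypothesis produces trees $T_i$ satisfying $\orient{\Delta}|_{V_i}$. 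Gluing them under a new root $r$ gives the desired $T$: triplets contained in some $V_i$ are handled by $T_i$, while a triplet $\C abc$ crossing the partition has its edge $(a,b)$ inside one component (hence inside a single $V_i$) and $c$ on the opposite side, giving $\lca_T(a,c) = \lca_T(b,c) = r$ and $\lca_T(a,b) \prec r$.

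The main subtlety is the interplay between the \emph{fixed} orientation in the contrapositive and the \emph{existential} quantifier in the definition of closedness. The key observation that unlocks the induction is that ``no closed subset'' promises that for \emph{every} orientation of $\Delta$ the induced graph on $V$ is disconnected; in particular our fixed $\orient{\Delta}$ yields a disconnected $G$, which is exactly what licenses the recursive split. Everything else is routine bookkeeping to verify that split triplets are automatically satisfied by the root cut and that subset-closedness transfers between $\Delta$ and $\Delta|_{V_i}$.
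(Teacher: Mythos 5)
Your proof is correct and takes essentially the same approach as the paper's: for $(\Leftarrow)$ you examine the highest tree node that splits $S$ (your $\lca_T(S)$ is exactly the paper's ``first tree node that cuts $S$'') and derive a violated constraint from a crossing edge, and for $(\Rightarrow)$ you build the tree top-down by recursively separating connected components, which is the paper's construction recast as an explicit strong induction. Your version fills in a couple of steps the paper leaves implicit (spelling out why the crossing edge's constraint is violated via the LCA comparison, and why subset-closedness transfers between $\Delta$ and $\Delta|_{V_i}$), but the underlying argument is identical.
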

The proof is based on the above intuition.
If the set is not connected by the edges generated by $\orient{\Delta} |_{S}$, then by separating connected components we don't cut any edge, and hence don't violate any constraints.
On the other hand, if the set is connected, when we first split $S$, we cut at least one edge $(a,b)$ such that $\C abc \in \orient{\Delta}|_S$.
But this violates the constraint, as it requires $c$ to be first separated from $a$ and $b$.

\subsection{Critical set}

We next show that, when $\Delta$ is sufficiently large, there exists a set $S$ such that we can always construct a contradictory orientation of $\Delta|_S$.
We call such a set a critical set.
\begin{definition}[Critical set]
    \label{def:critical_set}
    Let $\Delta$ be a set of triplets over $V$ of size $|\Delta| \ge |V| - 1$.
    We say that set $S \subseteq V$ is \emph{critical} w.r.t. $\Delta$ if it satisfies the following conditions:
    \begin{itemize}
        \item $S$ induces at least $|S|-1$ triplets, i.e. $\card{\Delta |_S} \geq |S|-1$,
        \item Among all such sets, $S$ has the minimal cardinality.
    \end{itemize}
\end{definition}


\noindent Note that this is well defined since $V$ satisfies the condition: $|\Delta| = \card{\Delta |_V} \geq |V|-1$.
Intuitively, $|S| - 1$ is the minimum possible number of edges which can connect $S$.
The surprising fact is that this condition suffices, which leads to the main result of this section (Theorem~\ref{thm:triplets}).
\begin{theorem}
    \label{thm:crit_set_is_a_closed_set}
    Let $\Delta$ be a set of triplets over $V$ of size $|\Delta| \geq |V| -1$.
    Then any critical set w.r.t. $\Delta$ is closed w.r.t. $\Delta$.
\end{theorem}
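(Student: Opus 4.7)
The plan is to argue by contradiction, using the minimality of $S$ to derive a Nash--Williams-style partition inequality and then converting it into the desired spanning orientation. (The case $|S|=1$ is trivial, so assume $|S|\geq 2$.)

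First I would establish the partition inequality. Fix any partition $S=C_1\sqcup\cdots\sqcup C_p$ with $p\geq 2$, and let $s$ be the number of singleton parts. By the minimality of $S$, every proper $S'\subsetneq S$ with $|S'|\geq 2$ satisfies $|\Delta|_{S'}|\leq |S'|-2$, while singleton parts trivially induce no triplets. Summing over the parts,
\[
\sum_{l=1}^p |\Delta|_{C_l}| \;\leq\; (|S|-s)-2(p-s) \;=\; |S|+s-2p.
\]
Combined with $|\Delta|_S|\geq |S|-1$, the number of triplets \emph{crossing} the partition (those not contained in any single part) is at least $(|S|-1)-(|S|+s-2p)=2p-s-1\geq p-1$, using $s\leq p$. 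Thus every $p$-part partition of $S$ is crossed by at least $p-1$ triplets.

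Next I would use this inequality to construct an orientation of $\Delta|_S$ whose induced edges connect $S$, exhibiting $S$ as closed. The plan is an exchange argument: fix an orientation minimizing the number of connected components $c$ of $G=(S,E|_S)$ and suppose $c\geq 2$. Each chosen edge lives within a single component, so every triplet is either entirely inside one component or has its two edge-endpoints in some $C_l$ and its odd vertex in a distinct $C_m$. Applying the partition inequality to the component partition produces a triplet $t=\{a,b,e\}$ of the latter type with $\{a,b\}\subseteq C_l$ and $e\in C_m\ne C_l$; if the chosen edge $(a,b)$ is not a bridge of $G[C_l]$, reorienting $t$ to the edge $(a,e)$ leaves $C_l$ connected while merging it with $C_m$, strictly reducing $c$ and contradicting minimality.

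The main obstacle is the bridge case, where the single reorientation above splits $C_l$ at the same time as it merges one piece with $C_m$, so $c$ is unchanged. I would resolve this either by chaining swaps along the bridge-tree of $G[C_l]$, or by invoking the matroid-union / hypergraph Nash--Williams orientation theorem as a black box: the partition inequality of the first step is precisely the condition guaranteeing that a $3$-uniform hypergraph admits a spanning-tree orientation, so such an orientation exists. Either way we obtain an orientation witnessing $S$ closed, contradicting the assumption and completing the proof.
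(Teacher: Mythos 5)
Your partition inequality is correct and is a useful condensation of the minimality of the critical set $S$, and the non-bridge swap is sound. But the bridge case is precisely where all the difficulty of the theorem lives, and your proposal leaves it unresolved. The ``chaining swaps along the bridge-tree'' sketch is stated only at the level of intent: after reorienting a bridge $(a,b)$ to $(a,e)$, the component $C_l$ splits into two parts while one of them absorbs $C_m$, so the component count does not drop, and you give no argument that a further productive swap is always available, nor a potential function that would make such a chain terminate. The paper's proof closes exactly this gap by exploiting two pieces of structure you do not use: it identifies an \emph{unused triplet} (one whose generated edge is redundant for a maximum spanning forest---this exists because $|\Delta|_S|\ge|S|-1$ exceeds the at most $|S|-2$ forest edges) and applies minimality of $S$ a second time, to a sub-part $U$ of the tree $T^*$ containing the unused triplet, to obtain a constraint $\C uvw$ with $u,v\in U$ and $w\notin U$. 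Its Cases~2a--2c then carry out a pair of coordinated reorientations guaranteed either to merge two trees or to strictly shrink $|T^*|$, giving a well-founded descent. This is the missing idea.

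Your second suggested route---invoking a hypergraph orientation theorem as a black box---can in fact be made to work, but the result you need is the defect (deficiency) form of Rado's theorem, due to Perfect, not a ``hypergraph Nash--Williams'' theorem per se. Apply it to the graphic matroid of $K_S$ with representative sets $A_i=\{(a_i,b_i),(a_i,c_i),(b_i,c_i)\}$: the maximum size of an independent partial transversal equals $\min_{J\subseteq[m]}\bigl(r(\bigcup_{i\in J}A_i)+m-|J|\bigr)$. Plugging your partition inequality into the partition of $S$ formed by the hypergraph-connected components of the triplets in $J$ together with singletons on $S\setminus V(J)$ shows this minimum is at least $|S|-1$, so a spanning-tree transversal exists, and extending it arbitrarily to the remaining triplets of $\Delta|_S$ gives an orientation witnessing that $S$ is closed. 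Once spelled out, this is a genuinely different and shorter proof than the paper's, at the cost of leaning on a nontrivial matroid result that the paper's elementary case analysis deliberately avoids; as written, however, the step is asserted rather than proved, so the proposal as stated is incomplete.
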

We present the full proof of Theorem~\ref{thm:crit_set_is_a_closed_set} \ifarxiv{in Appendix~\ref{app:missing_proofs}}\else{in the full version}\fi.
The proof relies on the operation which we call \emph{reorientation}: given constraint $\C abc$, we change it to either $\C acb$ or $\C bca$, which respectively changes the generated edge from $(a,b)$ to either $(a,c)$ or $(b,c)$.

The proof is by contradiction: for a critical set $S$, we assume that no orientation of $\Delta|_S$ generates edges connecting $S$.
Among all orientations, we consider the one whose edges result in the smallest number of connected components.
We show that by performing certain reorientations, we can reduce the number of connected components, leading to a contradiction.

Since the edges generated by the orientation don't connect $S$, there must exist an ``unused'' triplet $(a,b,c)$, such that adding or removing its edges $(a,b)$, $(a,c)$ and $(b,c)$ doesn't change the number of connected components.
All of $a,b,c$ belong to the same tree $T$ in the spanning forest of $S$ (otherwise, we could orient $(a,b,c)$ so that it connects two trees, reducing the number of connected components).
By orienting this triplet as, for example, $\C abc$, we can reorient any edge on the path from $a$ to $b$ in $T$ without disconnecting vertices in $T$.
If there exists a reorientation that connects $T$ to another tree in the spanning forest, using this reorientation we decrease the number of connected components, as shown in Figure~\ref{fig:reorient_simple}.
Such a reorientation might not always be immediately available, but we show that it's always possible to build a chain of reorientations so that the number of connected components decreases.
\begin{figure}[t!]
    \centering
    \ifarxiv
        \includegraphics[width=0.6\columnwidth]{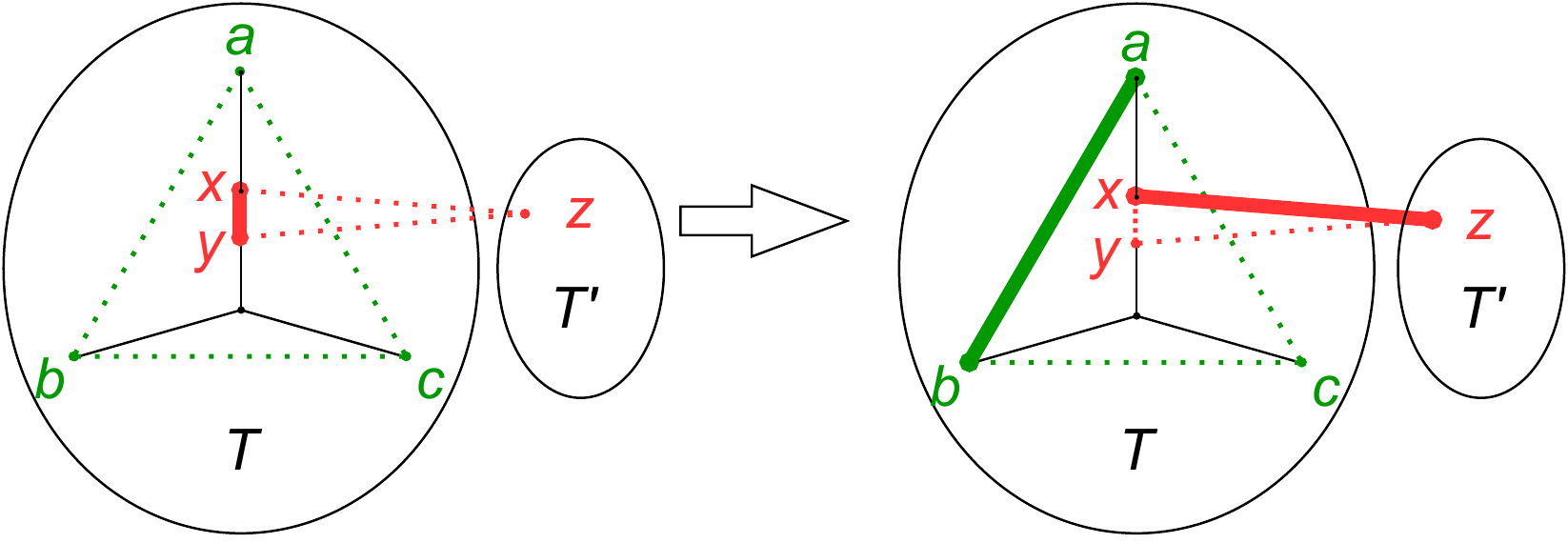}
    \else
        \includegraphics[width=0.8\columnwidth]{pics/Triple_inside_copy.pdf}
    \fi
    \caption{A case from Theorem 8: selecting orientation $\C abc$ allows one to reorient any edge on the path from $a$ to $b$. In this case, on this path there is an edge $(x, y)$ generated by constraint $\C xyz$, where $z$ belongs to another spanning tree. Reorienting $\C xyz$ connects the trees}
    \label{fig:reorient_simple}
\end{figure}


\paragraph{$k$-tuples}
In \ifarxiv{Theorem~\ref{thm:tuples}}\else{the full version}\fi, we show that any set of $k$-tuples of size at least $\left\lceil \frac{|V|-1}{k-2} \right\rceil$ has a contradictory specification\arxiv{,} and \arxiv{Theorem~\ref{thm:shatter_lb} shows} that the bound is tight for $k$-tuples\footnote{However, as we show in the next section, the $(k-2)^{-1}$ factor doesn't propagate to the sample complexity}.
The main idea is that, for the sake of analysis, we can replace a $k$-tuple with $k-2$ ``independent'' triplets, meaning that all $3^{k-2}$ orientations of these triplets are not contradictory.
Namely, all the triplets share two elements and differ in the third one.

\paragraph{Non-binary trees}
When constraints of form $\C abc$ are allowed, after adjusting the definitions, the overall idea is the same: a contradictory orientation exists iff a closed set exists, and any critical set is closed.
While the definition of a critical set doesn't change~-- it's a minimum-size set $S$ with $\ge |S|-1$ induced edges~-- the definition of a closed set changes substantially, as described next.

As before, we have a set $\mathcal E = E|_S$ of edges induced by $S$, but now we might have additional constraints of form $[a|b|c]$.
The main idea is as follows: if e.g. $a$ and $b$ are already connected by $\mathcal{E}$, they can't be separated by the first cut.
Hence, by definition of $[a|b|c]$, $c$ also can't be separated from $a$ and $b$ by the first cut.
Since we perform the first cut based on the connectivity of the set, the fact that $c$ can't be separated from $a$ and $b$ can be expressed by adding edge $(a,c)$, i.e. $\mathcal E \gets \mathcal E \cup \{(a,c)\}$, hence connecting $a$, $b$ and $c$ (see Figure~\ref{fig:nonbinary_edge}).
\begin{figure}[t!]
    \centering
\begin{tikzext}
    \begin{tikzpicture}
        \node (a) at (0,0) {$a$};
        \node (b) at (0, -1) {$b$};
        \node (c) at (-0.86, -0.5) {$c$};
    
        \draw [rounded corners=10pt] ($(c)+(-0.5,0)$) -- ($(a)+(0.3,0.5)$) -- ($(b)+(0.3,-0.5)$) --cycle;
    
         \draw[decorate, decoration={snake, amplitude=0.5mm}] ($(b) + (0.1,0)$) arc(-90:90:2.15 and 0.5);
    
        \draw[line width=12pt,my triangle,
            postaction={draw,white,line width=10pt,my triangle,shorten >=2pt,shorten <=1pt}](3,-0.5) -- (5,-0.5);
        
        \node (a2) at (6.5,0) {$a$};
        \node (b2) at (6.5, -1) {$b$};
        \node (c2) at (6.5 - 0.86, -0.5) {$c$};
    
    
         \draw[decorate, decoration={snake, amplitude=0.5mm}] ($(b2) + (0.1,0)$) arc(-90:90:2.15 and 0.5);
         \draw (a2) -- (c2);
    \end{tikzpicture}
\end{tikzext}
    \caption{Given constraint $[a|b|c]$, when $a$ and $b$ are connected, we can also connect them to $c$}
    \label{fig:nonbinary_edge}
\end{figure}
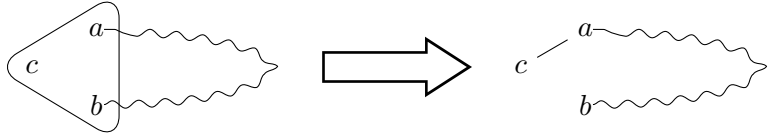

Now, we can say that $S$ is \emph{closed} w.r.t. $\Delta$ if there exists an orientation of $\Delta$ such that $S$ is connected after performing all such possible operations.
Similarly to the binary-tree case, the existence of such a set implies a contradictory orientation.
This intuition is formalized in \ifarxiv{Lemma~\ref{lem:nonbinary:closed_is_contradictory}}\else{the full version}\fi.

Obviously, critical sets are closed: by only using constraints of type $\C abc$, we can use Theorem~\ref{thm:crit_set_is_a_closed_set} directly.
However, constraints of type $[a|b|c]$ result in an additional option in the definition of N-shattering (we have to choose $2$ labels out of $4$ instead of $3$), and hence the Natarajan dimension can potentially increase.
In the next section, we show that this is not the case for our problem.

\section{PAC-learning and Natarajan Dimension}
\label{sec:pac}

In this section, we present tight sample complexity bounds for PAC learning for $k$-tuple constraints.
Recall that the set $\Delta = \{t_i\}_{i=1}^n$ of $k$-tuples is N-shattered if for every $k$-tuple $t_i$ we can select two different orientations $\orient{t}^{(1)}_i$ and $\orient{t}^{(2)}_i$ such that every combination of orientations of different $k$-tuples is not contradictory, i.e. for any $f:\ [n] \to [2]$ the orientation $\{\orient{t}^{(f(i))}_i\}_{i=1}^n$ is not contradictory.
Given the Natarajan dimension, Lemma~\ref{lem:natarajan_samples} gives the tight bound on sample complexity up to the factor $O(\frac 1\eps \log |Y|)$.
Note that when $k$ is constant, $|Y|$ is also constant.

We first lower-bound the Natarajan dimension:
\begin{lemma}\label{lem:natarajan_lb}
    For any $V$ and $k \ge 3$, we have $\ndim(H_k(V)) \ge |V| - k + 1$.
\end{lemma}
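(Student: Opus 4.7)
The plan is to exhibit an explicit N-shattered set of exactly $|V| - k + 1$ tuples, which immediately yields the claimed lower bound. Let $n := |V|$, label the elements $v_1, \ldots, v_n$, and fix once and for all some binary tree $\pi$ on the $k - 2$ ``anchor'' leaves $\{v_1, v_3, v_4, \ldots, v_{k-1}\}$. The candidate shattered set is $S^\star := \{A_j\}_{j=k}^n$, where $A_j := (v_1, v_2, \ldots, v_{k-1}, v_j)$; these tuples share the anchor $\{v_1, \ldots, v_{k-1}\}$ and differ only in their last element, so $|S^\star| = n - k + 1$ as required.

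For each tuple $A_j$, I would take $f_1(A_j)$ to be the $k$-leaf binary tree whose root splits into the cherry $\{v_2, v_j\}$ on one side and $\pi$ on the other, and $f_2(A_j)$ to be the $k$-leaf binary tree whose root splits into the singleton $v_2$ on one side and, on the other side, $v_j$ attached as a sibling of the root of $\pi$. These orientations are distinct because $f_1(A_j)$ entails the triplet constraint $\C{v_2}{v_j}{v_1}$ while $f_2(A_j)$ entails the contradictory triplet $\C{v_1}{v_j}{v_2}$.

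The next step is to realize, for an arbitrary $R \subseteq \{k, \ldots, n\}$, a single hierarchical tree $T_R$ witnessing $f_1(A_j)$ for $j \in R$ and $f_2(A_j)$ for $j \notin R$. I would define $T_R$ by splitting $V$ at the root into $L_R := \{v_2\} \cup \{v_j : j \in R\}$ and its complement; within the $L_R$-subtree, $v_2$ is split off first and the remaining $v_j$'s form a caterpillar; within the complement, $\pi$ sits at the bottom and the $v_j$'s with $j \notin R$ are attached above $\pi$ as a caterpillar. A direct check should then show that restricting $T_R$ to the leaves of $A_j$ yields $f_1(A_j)$ when $j \in R$ (the pair $\{v_2, v_j\}$ reduces to a cherry on the $L_R$ side, and the anchor leaves reproduce $\pi$) and $f_2(A_j)$ when $j \notin R$ ($v_j$ becomes a sibling of $\pi$ on the complement side, while $v_2$ is alone on the $L_R$ side).

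The main technical point to nail down — and essentially the only step that requires some care — is that the induced orientation on a fixed $A_j$ depends only on whether $j \in R$, not on which other indices happen to lie in $R$. This \emph{invariance} is what lets us simultaneously shatter all $A_j$'s with the same pair of labels $(f_1, f_2)$ per tuple. It should be guaranteed by the design: $\pi$ is chosen before $R$, and the caterpillar arrangements place all extraneous $v_{j'}$ ($j' \ne j$) strictly above the substructure relevant to $A_j$, so restricting to $A_j$'s leaves deletes them cleanly without altering the cherry $\{v_2, v_j\}$ or the sibling relation between $v_j$ and $\pi$. Once this invariance is verified, $S^\star$ is N-shattered and we conclude $\ndim(H_k(V)) \ge n - k + 1$.
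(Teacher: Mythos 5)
Your proposal is correct and takes essentially the same approach as the paper: fix a size-$(k-1)$ ``anchor'' set, shatter the $|V|-k+1$ tuples obtained by pairing the anchors with each remaining element, and observe that since each extra element appears in exactly one tuple, its placement can be chosen independently while the anchor structure stays fixed. Your write-up is somewhat more explicit — you spell out the caterpillar realization tree $T_R$ rather than asserting realizability — and your $f_2$ attaches $v_j$ as a sibling of the $\pi$-subtree rather than as a cherry with a second anchor leaf as in the paper, but this is a cosmetic variation on the same argument.
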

\begin{proof}
	Let $A$ be an arbitrary subset of $V$ of size $k-1$, and let $B = V \setminus A$.
    We construct the set of $k$-tuples as $\Delta = \{A \cup \{b\} \mid b \in B\}$.
    Let $T_A$ be an arbitrary hierarchical tree on $A$.
    For each $A \cup \{b\} \in \Delta$, we construct the orientations as follows: we replace a leaf $a$ of $T_A$ with a new node with two children: $a$ and $b$.
    By choosing two different leaves, we obtain two different orientations of $A \cup \{b\}$, as required by the definition of N-shattering.
    
    It's easy to check that $\Delta$ is N-shattered using these orientations: all elements from $A$ are in agreement across all constraints since they all are oriented according to $T_A$, while every element from $B$ participates in only one constraint, and hence can't lead to a contradiction.
    Therefore, $\ndim(H_k(V)) \ge |B| = |V| - k + 1$.
\end{proof}

\noindent Next, we upper-bound the Natarajan dimension for triplets using results from Section~\ref{sec:contr_orient}.
\begin{theorem}\label{thm:natarajan}
    For any $V$, we have $\ndim(H_3(V)) = |V|-2$.
\end{theorem}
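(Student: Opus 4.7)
The lower bound $\ndim(H_3(V)) \ge |V|-2$ is immediate from Lemma~\ref{lem:natarajan_lb} with $k=3$. For the upper bound, I plan to show that any set $\Delta$ of $m = |V|-1$ triplets cannot be N-shattered. Fix such a $\Delta$ and, for each $t \in \Delta$, an arbitrary pair of distinct allowed orientations $f_1(t) \ne f_2(t)$. By Definition~\ref{def:natarajan_shattering}, it suffices to exhibit an assignment $\sigma$ with $\sigma(t) \in \{f_1(t), f_2(t)\}$ for all $t$ whose combined orientation is contradictory. By Lemma~\ref{lem:closed_set} this reduces to finding $S \subseteq V$ closed under $\sigma$; I will take $S$ to be a critical set of $\Delta$ (Definition~\ref{def:critical_set}), which exists since $V$ itself satisfies $\card{\Delta} \ge |V|-1$.

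The strategy is to adapt the minimality-based argument from the proof of Theorem~\ref{thm:crit_set_is_a_closed_set} to the restricted setting where each triplet's orientation must lie in its allowed pair. Among all such restricted assignments, choose $\sigma$ minimizing the number of connected components of the edge graph $G_\sigma$ generated on $S$. Suppose for contradiction that $G_\sigma$ has at least two components, inducing a nontrivial partition $A \cup B = S$. By minimality of the critical set $S$, neither of the proper subsets $A, B \subsetneq S$ is critical, so $\card{\Delta|_A} \le |A|-2$ and $\card{\Delta|_B} \le |B|-2$. Hence the number of triplets in $\Delta|_S$ splitting across the partition is at least $\card{\Delta|_S} - \card{\Delta|_A} - \card{\Delta|_B} \ge (|S|-1) - (|S|-4) = 3$. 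For each split triplet $t$, exactly one of its three orientations yields a non-crossing edge, so at least one element of $\{f_1(t), f_2(t)\}$ generates an edge crossing the partition.

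The main step, and also the anticipated obstacle, is to combine these crossing alternatives with \emph{redundant} triplets (those whose currently selected edge closes a cycle in $G_\sigma$, and which can therefore be switched within the allowed pair without raising the component count) to strictly reduce the number of components of $G_\sigma$, contradicting minimality. Since $\card{\Delta|_S} \ge |S|-1$ while a spanning forest of $G_\sigma$ has only $|S| - c(\sigma)$ edges, at least $c(\sigma) - 1 \ge 1$ triplets are redundant. Chaining the reorientation of a redundant triplet with the crossing-edge switch of a split triplet mirrors the reorientation-chain argument of Theorem~\ref{thm:crit_set_is_a_closed_set}; the difficulty is that every step must respect the 2-out-of-3 orientation restriction, whereas the original argument freely chose any reorientation. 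The abundance of at least three split triplets per partition, combined with the at-least-one crossing option per split triplet, should supply the flexibility needed to force a strict improvement. Once $G_\sigma$ is shown to be connected for the minimizer $\sigma$, the critical set $S$ is closed under $\sigma$, making $\sigma$ the desired contradictory assignment from the allowed pairs and hence yielding $\ndim(H_3(V)) \le |V|-2$.
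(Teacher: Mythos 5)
Your lower bound matches the paper's. For the upper bound, however, your proposal takes a genuinely different (and harder) route than the paper, and it leaves a real gap that the paper's argument neatly avoids.

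The paper does \emph{not} try to directly optimize over orientations respecting the $2$-out-of-$3$ restriction. Instead it bootstraps off the unrestricted result: by Theorem~\ref{thm:triplets} some contradictory orientation $\orient{\Delta}^*$ of $\Delta$ exists, connecting a critical set $S$. Among all orientations that connect $S$, one minimizes the number of triplets oriented to a \emph{disallowed} label (i.e., neither $f_1$ nor $f_2$). If that minimum is zero, we are done; otherwise, fix one disallowed triplet $t_i$ with $\orient{t_i}^*=\C abc$, $\orient{t_i}^{(1)}=\C acb$, $\orient{t_i}^{(2)}=\C bca$. Removing the single edge $(a,b)$ splits a connected $S$ into \emph{at most two} parts $U\ni a$ and $V\ni b$, and $c$ lies in exactly one of them. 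If $c\in U$, the allowed $\C bca$ reconnects with edge $(b,c)$; if $c\in V$, the allowed $\C acb$ reconnects with $(a,c)$. This single local swap strictly decreases the count of disallowed labels while keeping $S$ connected, a contradiction. The reason this is so clean is that starting from a connected state means each swap only ever has to bridge a two-block partition determined by one cut edge, and the two allowed alternatives both point to $c$, which sits on exactly one side.

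Your plan instead minimizes the component count of $G_\sigma$ over restricted $\sigma$ from the start, and when that minimizer has $\geq 2$ components you try to reconnect by chaining crossing-split-triplet swaps with redundant-triplet swaps. This is essentially asking to re-run the case analysis of Theorem~\ref{thm:crit_set_is_a_closed_set} under the $2$-out-of-$3$ restriction, but that case analysis (Cases 1, 2a, 2b, 2c) freely chooses among all three orientations of the relevant triplets; there is no reason in your sketch that the specific orientation each case needs lies in the allowed pair $\{f_1,f_2\}$. The ``abundance of split triplets'' observation counts them but does not show that any particular one admits a swap that both crosses the partition and leaves its own side connected, nor that a chain of such swaps terminates with a net decrease. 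You flag this yourself (``should supply the flexibility''), and that is precisely where the argument is incomplete. (Two smaller issues: the minimality of $S$ gives $\card{\Delta|_A}\le |A|-2$ only when $|A|\ge 2$ --- a singleton block gives $\card{\Delta|_A}=0>|A|-2$, dropping your split-triplet count to $2$; and if $G_\sigma$ has $k\ge 3$ components you must also justify which bipartition to work with.) The fix is exactly the paper's bootstrap: do not try to build connectivity from scratch under the restriction; take any contradictory orientation and repair it one disallowed triplet at a time.
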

We provide the full proof in \ifarxiv{Appendix~\ref{app:n_dimension}}\else{the full version}\fi.
First, note that the result doesn't immediately follow from Theorem~\ref{thm:crit_set_is_a_closed_set} since Theorem~\ref{thm:crit_set_is_a_closed_set} finds a contradictory orientation among all possible $3^{m}$ orientations of $m$ constraints.
However, Natarajan shattering allows us to choose one of only two orientations of each triplet, i.e. it allows $2^{m}$ possible orientations.
Hence, we need to handle the case when the contradicting orientation from Theorem~\ref{thm:crit_set_is_a_closed_set} orients some triplet in a non-allowed way.

Let $S$ be a critical set, $\orient{\Delta}|_S$ be its induced orientation connecting $S$, and $[a,b | c] \in \orient{\Delta}|_S$ be a non-allowed orientation of a triplet.
Note that it means that both remaining orientations $[a,c| b]$ and $[b,c | a]$ are allowed.
If removing edge $(a,b)$ doesn't disconnect $S$, we reorient it arbitrarily.
Otherwise, removing $(a,b)$ separates $S$ into two connected components.
We reorient $(a,b)$ in the following way: if $c$ belongs to the same part as $a$, then we use edge $(b,c)$; otherwise, we use edge $(a,c)$.
As shown in Figure~\ref{fig:reorient_natarajan}, such reorientation preserves connectivity of $S$.
\begin{figure}[t!]
    \centering
    \ifarxiv
        \includegraphics[width=0.6\columnwidth]{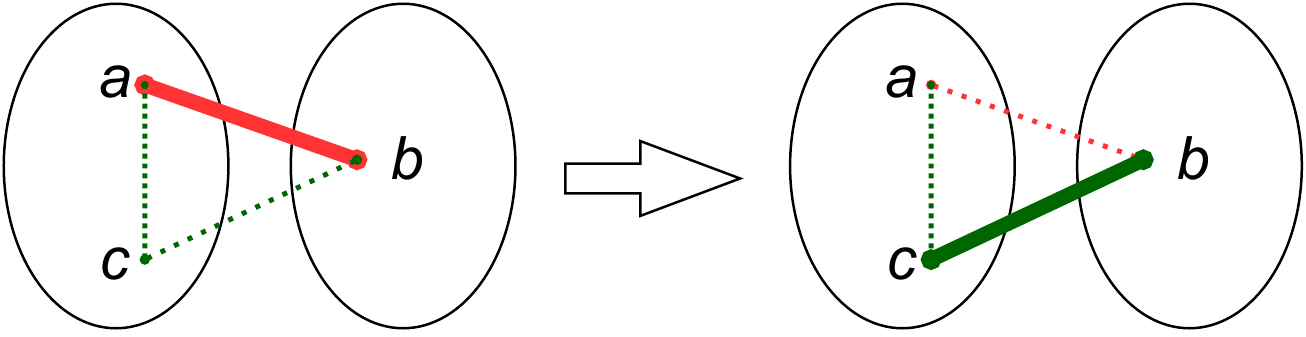}
    \else
        \includegraphics[width=0.8\columnwidth]{pics/Reorient_natarajan.pdf}
    \fi
    \caption{Theorem~\ref{thm:natarajan}: when orientation $\C abc$ is not allowed, we can reorient it while preserving connectivity}
    \label{fig:reorient_natarajan}
\end{figure}

Combining Lemma~\ref{lem:natarajan_lb} and Theorem~\ref{thm:natarajan} yields the following corollary.
\begin{corollary}
    For any $V$ and $k \ge 3$, we have $|V| - k + 1 \le \ndim(H_k(V)) \le |V| - 2$.
\end{corollary}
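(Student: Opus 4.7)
The plan is to derive both bounds from earlier results in the paper. The lower bound $|V| - k + 1 \le \ndim(H_k(V))$ is exactly Lemma~\ref{lem:natarajan_lb}, so no additional work is needed there. For the upper bound $\ndim(H_k(V)) \le |V| - 2$, my plan is to reduce $k$-tuple N-shattering to triplet N-shattering and then invoke Theorem~\ref{thm:natarajan}, which already pins down the triplet case.

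Concretely, I would argue by contradiction. Suppose $\Delta = \{t_1, \ldots, t_m\}$ with $m \ge |V| - 1$ is N-shattered by $H_k(V)$, witnessed by distinct orientation pairs $\orient{t_i}^{(1)} \ne \orient{t_i}^{(2)}$ and, for each $R \subseteq [m]$, a hierarchical tree $T_R$ realizing the corresponding combination. Since the two $k$-tuple orientations of $t_i$ are distinct binary trees on the elements of $t_i$, Definition~\ref{def:k_constraint} guarantees they must disagree on at least one triplet contained in $t_i$. I would pick any such triplet $\tau_i \subseteq t_i$ and write $o_i^{(1)}, o_i^{(2)}$ for the (distinct) triplet orientations induced by $\orient{t_i}^{(1)}, \orient{t_i}^{(2)}$ on $\tau_i$.

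The main obstacle, and the step that needs real care, is showing that the selected triplets $\tau_1, \ldots, \tau_m$ are pairwise distinct. Suppose $\tau_i = \tau_j$ for some $i \ne j$. Any fixed tree orients this common triplet in exactly one way, so for every $R \subseteq [m]$ the orientation imposed by $T_R$ must simultaneously equal the restriction inherited from $t_i$ (namely $o_i^{(1)}$ if $i \in R$, else $o_i^{(2)}$) and the restriction inherited from $t_j$ (namely $o_j^{(1)}$ if $j \in R$, else $o_j^{(2)}$). Running through the four choices of $R \cap \{i,j\}$ yields the chain $o_i^{(1)} = o_j^{(1)} = o_i^{(2)} = o_j^{(2)}$, contradicting $o_i^{(1)} \ne o_i^{(2)}$. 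Hence distinctness is forced.

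Once the $\tau_i$ are known to be distinct, the set $\{\tau_1, \ldots, \tau_m\}$ is a family of $m \ge |V| - 1$ distinct triplets over $V$, and the very same trees $T_R$ certify that it is N-shattered by $H_3(V)$ via the orientation pairs $(o_i^{(1)}, o_i^{(2)})$. Theorem~\ref{thm:natarajan} then gives $m \le \ndim(H_3(V)) = |V| - 2$, which contradicts $m \ge |V| - 1$ and completes the upper bound. Combined with the lower bound from Lemma~\ref{lem:natarajan_lb}, this yields the claimed sandwich.
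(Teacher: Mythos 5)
Your proof is correct, and it supplies a step the paper leaves implicit: the paper states the corollary as simply ``combining'' Lemma~\ref{lem:natarajan_lb} and Theorem~\ref{thm:natarajan}, but Theorem~\ref{thm:natarajan} only bounds $\ndim(H_3(V))$, so some reduction from $k$-tuple N-shattering to triplet N-shattering is needed and is not spelled out. Your reduction is the right one: since a rooted binary tree is determined by the triplet topologies it induces, two distinct orientations of $t_i$ must disagree on some triplet $\tau_i\subseteq t_i$; the four-subset argument on $R\cap\{i,j\}$ correctly forces $o_j^{(1)}=o_j^{(2)}$ if $\tau_i=\tau_j$, giving pairwise distinctness; and by Definition~\ref{def:k_constraint} each $T_R$ realizes the required triplet orientations, so $\{\tau_1,\dots,\tau_m\}$ is N-shattered in $H_3(V)$ and Theorem~\ref{thm:natarajan} applies. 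One small point worth making explicit in a write-up is the appeal to the fact that a binary tree on $k$ leaves is uniquely determined by its induced triplets (this underlies the existence of $\tau_i$), which is the reconstruction result of~\citet{ASSU81} already cited in the paper.
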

Using Lemma~\ref{lem:natarajan_samples}, we get our main result.
%
\begin{theorem}
\label{thm:natarajan_main}
    For constant $k$, the sample complexity of learning hierarchically labeled $k$-tuples, denoted by $m_{H_k}^r(\eps, \delta)$ in the realizable setting and $m_{H_k}^a(\eps, \delta)$ in the agnostic setting, is bounded by:
    \begin{align*}
        C_1 \frac{n + \log \frac 1\delta}{\eps} \le m_{H_k}^r(\eps, \delta) & \le C_2 \frac{n \log \frac 1\eps + \log \frac 1\delta}{\eps} \\
        C_1 \frac{n + \log \frac 1\delta}{\eps^2} \le m_{H_k}^a(\eps, \delta) & \le C_2 \frac{n + \log \frac 1\delta}{\eps^2}
    \end{align*}
\end{theorem}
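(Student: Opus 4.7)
\textbf{Proof proposal for Theorem~\ref{thm:natarajan_main}.} The plan is to derive the four bounds by combining the Natarajan-dimension estimates already established in the paper with the generic sample-complexity-from-$\ndim$ bounds of Lemma~\ref{lem:natarajan_samples}. The first step is to invoke the Corollary stated immediately before Theorem~\ref{thm:natarajan_main}, namely $|V|-k+1 \le \ndim(H_k(V)) \le |V|-2$. With $n = |V|$ and $k$ treated as a constant, this reads as $\ndim(H_k(V)) = \Theta(n)$, which is all that the sample-complexity statement requires.

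The second step is to control the label alphabet $|Y|$ appearing in Lemma~\ref{lem:natarajan_samples}. Recall from Definition~\ref{def:hypothesis} that a label in $H_k(V)$ is an orientation of a $k$-tuple, i.e.\ a rooted binary tree with $k$ labeled leaves (plus, if one wants to extend to the non-binary variant, a labeled phylogeny on $k$ leaves). The number of such combinatorial objects depends only on $k$, so for constant $k$ we have $|Y|=O(1)$ and hence $\log|Y| = O(1)$. In particular, the $\log|Y|$ factor in Lemma~\ref{lem:natarajan_samples} is absorbed into the universal constant.

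The third and final step is to plug $\ndim(H_k(V))=\Theta(n)$ and $\log|Y|=O(1)$ into the two inequalities of Lemma~\ref{lem:natarajan_samples}. The realizable lower bound $C_1(\ndim+\log(1/\delta))/\eps$ becomes $C_1(n+\log(1/\delta))/\eps$, and the upper bound $C_2(\ndim \log(1/\eps)\log|Y| + \log(1/\delta))/\eps$ becomes $C_2(n\log(1/\eps)+\log(1/\delta))/\eps$ after renaming constants. The agnostic bounds follow identically with $\eps^{-2}$ in place of $\eps^{-1}$, and the upper bound loses its $\log(1/\eps)$ factor because the agnostic branch of Lemma~\ref{lem:natarajan_samples} does not contain it.

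There is essentially no nontrivial obstacle: both the Natarajan-dimension bound and the reduction from Natarajan dimension to PAC sample complexity are already in hand. The only piece of bookkeeping that deserves explicit mention is the bound $|Y|=O(1)$ for constant $k$, since the upper bound in Lemma~\ref{lem:natarajan_samples} carries a $\log|Y|$ factor that, for non-constant $k$, would propagate into the final sample complexity and prevent the clean statement of Theorem~\ref{thm:natarajan_main}.
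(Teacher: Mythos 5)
Your proposal is correct and follows essentially the same route as the paper: invoke the corollary $|V|-k+1 \le \ndim(H_k(V)) \le |V|-2$, note that $|Y|$ depends only on $k$ (hence $\log|Y|=O(1)$ for constant $k$), and plug into Lemma~\ref{lem:natarajan_samples}. The paper's own proof is exactly this one-liner; your explicit accounting of the $\log|Y|$ factor is a slight elaboration of the same argument rather than a different approach.
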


In \ifarxiv{Appendix~\ref{app:n_dimension}}\else{the full version}\fi, we prove that the identical result holds for the non-binary case.
The proof idea is similar to that of Theorem~\ref{thm:natarajan}, with the only change that we must handle the case when $[a|b|c]$ is one of the allowed constraints.
\section{Experiments}
\label{sec:experiments}

In this section, we empirically verify our theoretical findings by evaluating the prediction accuracy of binary trees constructed from labeled triplets on the unlabeled triplets from the same distribution.
We consider the binary realizable case, when the triplets are labeled according to a ground-truth binary tree, the binary agnostic case, and the non-binary realizable case.

\paragraph{Tree building algorithm} We first describe in detail our approach for building a tree.
For the agnostic case, the constraints in our experiments are contradictory, since, as shown in Figure~\ref{fig:max_cons}, we encounter a contradiction after sampling $\approx 1.2 n$ constraints.
Hence, we need an approach that handles contradictory constraints.
It's known that the theoretical sample complexity can be achieved by using an Empirical Risk Minimizer~\citep{daniely2015multiclass}.
In our case, this means finding the tree which violates the least number of known constraints.
However, such a problem is NP-hard and is very hard to approximate~\cite{chester2013resolving}.

As shown in Lemma~\ref{lem:closed_set_proof}, the contradiction arises when the tree node encounters a connected component, and hence any partition cuts at least one edge.
When there are multiple connected components, we cut them from each other.
Otherwise, we cut the minimal number of edges at every layer, which is known to achieve $O(n)$ approximation.

\begin{figure}[t!]
    \centering
    \includegraphics[width=\ifarxiv{0.4}\else{0.7}\fi\columnwidth]{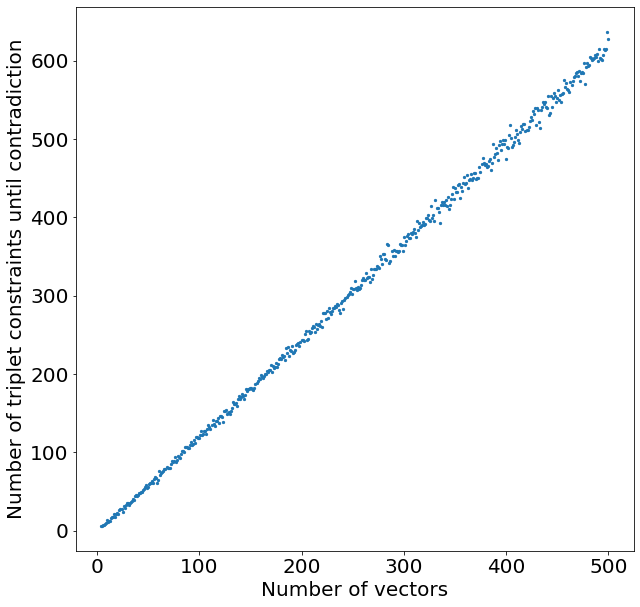}
    \caption{For the agnostic case with randomly generated vectors, for each number of vectors, we show the number of uniformly sampled triplets on these vectors (oriented according to the distances in the triplets) until we reach a contradiction, averaged over $10$ trials.
    There is a clear linear dependence with factor $\approx 1.2$}
    \label{fig:max_cons}
\end{figure}



\paragraph{Datasets}
For the realizable case, we perform experiments on randomly generated trees and on
ImageNet~\citep{deng2009Imagenet} hierarchy\footnote{\url{https://github.com/waitwaitforget/ImageNet-Hierarchy-Visualization}}.
We preprocess the ImageNet hierarchy as follows:
\begin{itemize}
    \item for non-binary tree experiments: the full hierarchy (48,860 points);
    \item for binary tree experiments: a sample of $256$ leaves that induce a binary subtree.
\end{itemize}

For the agnostic case, we consider 2 datasets: 1) randomly sampled vectors from the uniform distribution on $[0,1]^{100}$ and 2) Spambase~\citep{Dua:2019} dataset containing feature vectors for $4601$ different emails for the purpose of spam detection.

\begin{figure}[p]
    \centering
    \begin{subfigure}[t]{\ifarxiv{0.48\columnwidth}\else{\columnwidth}\fi}
        \includegraphics[width=\columnwidth]{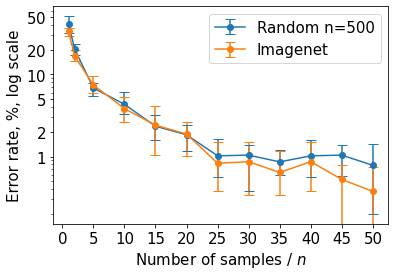}
        \caption{Prediction error depending on the number of samples for a random tree with $n\in \{100, 500\}$ nodes, and the ImageNet hierarchy.
        For each dataset, we average the results over 10 runs, and the error bars correspond to $10\%$ and $90\%$ quantiles.}
        \label{fig:scatter}
    \end{subfigure}
    \hfill
    \begin{subfigure}[t]{\ifarxiv{0.48\columnwidth}\else{\columnwidth}\fi}
        \includegraphics[width=\columnwidth]{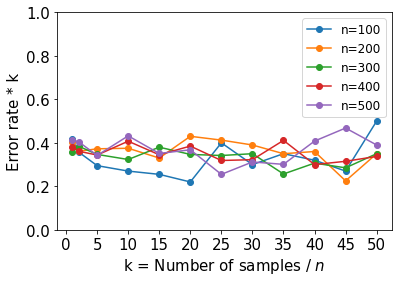}
        \caption{For random trees with $n$ nodes, we show dependence of $k \cdot \err$ on $k$, where $k$ as the ratio of number of samples to $n$.
        As Theorem~\ref{thm:natarajan_main} indicates, this number is close to a constant for different $n$ and $k$. Each data point shows the mean values over $10$ runs.}
        \label{fig:const}
    \end{subfigure}
    \caption{Binary Realizable case}
\end{figure}

%
\begin{figure}[p]
    \centering
    \ifarxiv
        \includegraphics[width=0.45\columnwidth]{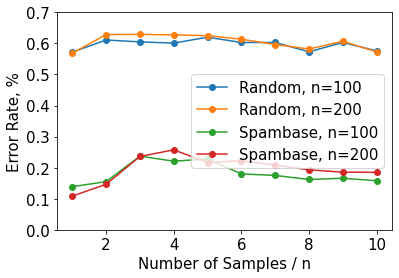}
    \else
        \includegraphics[width=0.65\columnwidth]{pics/spambase_and_random_identity.png}
    \fi
    \caption{Agnostic case. Prediction error depending on the number of samples. We sample $n$ random $100$-dimensional vectors and a subset of the Spambase dataset of size $n$.
    }
    \label{fig:spambase_random}
\end{figure}
\todo{Rearrange plots?}
\begin{figure}[p]
    \centering
    \includegraphics[width=\ifarxiv{0.4}\else{0.8}\fi\columnwidth]{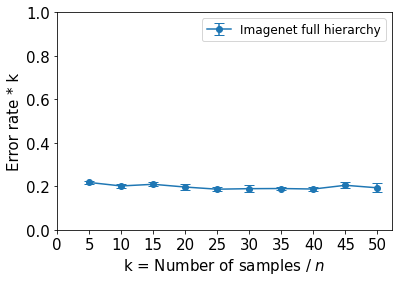}
    \caption{Non-binary Realizable case. For full ImageNet hierarchy, we show the dependence of $k \cdot \err$ on $k$, where $k$ is the ratio of samples to nodes ($n$).
    Similarly to Figure~\ref{fig:const}, this number is close to a constant for various $k$. Error bars correspond to $10\%$ and $90\%$ quantiles over $10$ runs.}
    \label{fig:imagenet_full}
\end{figure}

\paragraph{Binary Realizable case}

Given a ground-truth tree, we sample triplets from this tree uniformly at random.
Since the orientations for these triplets are not contradictory, we can construct a tree and make predictions according to the tree.
We first build a non-binary tree using the algorithm described in~\citet{ASSU81}, and then binarize it by replacing each non-binary node with a random binary tree on its children.

For this approach, Figure~\ref{fig:scatter} shows the dependence of the error rate on $k$, where $k$ is the ratio of the number of samples to the number of labels.
The results imply that the error rate depends on $k$ and is independent of the number of leaves or other properties of the ground-truth tree.
From Theorem~\ref{thm:natarajan_main} we know that the sample complexity is roughly proportional to $\frac{n}{\eps}$, and hence we expect $\eps \cdot k = \frac{\eps}{n} \cdot n_{\text{samples}}$ to be approximately constant.
Figure~\ref{fig:const} confirms this hypothesis since $\eps \cdot k$ is approximately $0.4$ for various values of $n$ and $k$.


\paragraph{Agnostic case}

For this scenario, we assume that the input is vectors in the Euclidean space, and we generate triplet constraints as follows.
Given a triplet of vectors $(a, b, c)$, we create constraint $\C abc$ if $\|a - b\| \le \min(\|a-c\|, \|b-c\|)$.
Such constraints can be contradictory if the dataset is not hierarchical, meaning that the setting is agnotic.
Similarly to the realizable case, Figure~\ref{fig:spambase_random} shows the dependence of the error rate on $k$, where $k$ is the ratio of the number of samples to the number of labels.
Since random vectors don't have any hierarchical structure, known samples don't provide sufficient information about the unseen samples, and hence the error rate is close to trivial $\nicefrac 23$ regardless of $k$.
On the other hand, since Spambase feature vectors have a hierarchical structure, they error rate on this dataset is significantly lower and slowly decreases with $k$.

\paragraph{Non-binary realizable case}

In this experiment on the full ImageNet hierarchy, constraints of type $\C abc$ are present, and hence we verify the theoretical result from Section~\ref{ssec:3_way_constraints}.
To handle these constraints, when partitioning a node using connected components, for each connected component we create a child of the node, i.e. we don't binarize the tree.

As shown in Figure~\ref{fig:imagenet_full}, similarly to the binary tree case, the product of the error rate and the number of constraints is approximately constant.
Note that the value in Figure~\ref{fig:imagenet_full} (0.2) is lower than in the value in Figure~\ref{fig:const} (0.4).
This is explained by the large number of $\TW ijk$ constraints in the hierarchy (since most of the nodes are separated at the top level of the hierarchy), and the fact that, when no samples are provided, all constraints are predicted as $\TW ijk$.

\section{Conclusion}

In this paper we give almost optimal bounds on the sample complexity of learning hierarchical tree representations of data from labeled tuples in both distributional (PAC-learning) and online case. Our experimental results confirm the convergence bounds predicted by the theory on trees generated from the ImageNet dataset.

\ifarxiv
    \bibliographystyle{plainnat}
\fi
\bibliography{bib}
\ifarxiv
    \newpage
    \appendix
    \section{Missing proofs from Section~\ref{sec:contr_orient}}
\label{app:missing_proofs}

Recall that we associate constraint $\C abc$ with edge $(a,b)$.
Intuitively, the tree is consistent with constraint, if it cuts off $c$ before cutting edge $\{a,b\}$.
\begin{definition}
    \label{def:generated_edges}
    Let $C$ denote the set of all possible constraints over $V$.
    We define the function $\genf\colon\ C \rightarrow V \times V$, referred to as the \emph{edge generating function}, as $\genf(\C abc) = (a,b)$.
    For any $C' \subseteq C$, we define $\genf(C') = \{\genf(t) \mid t \in C'\}$.
\end{definition}
Recall that we call the set of constraints contradictory if there is no tree consistent with these constraints.
We call a set $S$ closed w.r.t. the  set of triplets $\Delta$ if there exists an orientation $\orient{\Delta}$ of $\Delta$, such that $\genf(\orient{\Delta} |_{S})$ connects $S$.
\begin{lemma}
    \label{lem:closed_set_proof}
    A set of triplets $\Delta$ over $V$ allows a contradictory orientation iff there exists a set $S \subseteq V$ that is closed w.r.t. $\Delta$.
\end{lemma}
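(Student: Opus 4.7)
The plan is to prove the two directions separately. For ``closed $\Rightarrow$ contradictory'' I would argue directly about the topmost split in any supposed consistent tree. For ``no closed subset $\Rightarrow$ every orientation is consistent'' (the contrapositive of the other implication) I would build the consistent tree recursively on $|V|$.

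For the ``if'' direction, let $S$ be closed with witness orientation $\orient{\Delta}$, so $\genf(\orient{\Delta}|_S)$ connects $S$. Suppose for contradiction that some tree $T$ is consistent with $\orient{\Delta}$, and set $u=\lca_T(S)$; the two children of $u$ induce a bipartition of $S$ into non-empty parts $S_1,S_2$. Connectedness guarantees a cross-edge $(a,b)$ with $a\in S_1$, $b\in S_2$, coming from a constraint $\C abc \in \orient{\Delta}|_S$, so in particular $c\in S$. Whichever side of the bipartition contains $c$, the split at $u$ separates $a$ from $b$ while keeping $c$ together with one of them, so $\lca_T(a,b)=u$ fails to lie strictly below $\lca_T(a,c)=\lca_T(b,c)$, violating $\C abc$.

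For the ``only if'' direction I would induct on $|V|$. The base case $|V|\le 2$ is trivial since $\Delta|_V$ contains no triples. For the inductive step, fix an arbitrary orientation $\orient{\Delta}$. Since $V$ itself is not closed, the graph $(V,\genf(\orient{\Delta}|_V))$ has components $V_1,\dots,V_k$ with $k\ge 2$. Each restricted instance $(V_i,\Delta|_{V_i})$ inherits the ``no closed subset'' property (closedness of an $S\subseteq V_i$ depends only on $\Delta|_S$) and is strictly smaller, so by induction each admits a tree $T_i$ consistent with $\orient{\Delta}|_{V_i}$. I then hang the $T_i$'s off a common root, binarizing the resulting $k$-way split arbitrarily. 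Within-component constraints are handled by the $T_i$; any remaining constraint $\C abc \in \orient{\Delta}$ must have $a,b \in V_i$ (since $(a,b)$ is an edge of our graph) and $c\in V_j$ with $j\ne i$, and then $\lca_T(a,c)=\lca_T(b,c)$ lies strictly above the root of $T_i$, hence strictly above $\lca_T(a,b)$, as required.

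The main subtlety I expect is bookkeeping for the quantifier structure of closedness: the definition is existential over orientations yet local to $\Delta|_S$, which is exactly what allows the ``no closed subset'' hypothesis to descend cleanly to each $(V_i,\Delta|_{V_i})$ in the induction. A secondary point to check is that arbitrary binarization of the $k$-way root split cannot spoil any cross-component constraint; this holds because for points in distinct $V_i,V_j$ the LCA is the node where those two components first meet in the binarization tree, independently of how the binarization is chosen.
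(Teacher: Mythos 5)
Your proof is correct and takes essentially the same approach as the paper: the ``if'' direction examines the topmost node splitting $S$ (your $\lca_T(S)$ is the paper's ``first tree node that cuts $S$''), and the ``only if'' direction is the paper's top-down greedy construction repackaged as induction on $|V|$. The minor differences — hanging all $k$ components at once rather than bipartitioning, and being explicit about why cross-component constraints survive arbitrary binarization — are presentational, not substantive.
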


\begin{proof}
    $ $\newline
    $\Longrightarrow$: We will instead prove the contrapositive; i.e., we will assume that there is no set closed w.r.t. $\Delta$ and show that any orientation of $\Delta$ is non-contradictory.
    Given any orientation of $\orient{\Delta} = \bbr{\C {a_i}{b_i}{c_i}}_i$ of $\Delta$, we will build  in a top-down manner a hierarchical tree satisfying $\orient{\Delta}$. 
    
    At every step we have a set of points $S \subseteq V$ that we would like to split.
    In order to do so consider $\orient{\Delta} |_{S}$ and let $E_S = \genf(\orient{\Delta} |_S)$. By our earlier assumption, there does not exist a set that is closed w.r.t. $\Delta$ - in particular, $S$ is not such a set. Therefore, $E_S$ does not connect $S$ and there exists a disconnected bipartition $S = C_1 \cup C_2$. The algorithm cuts $C_1$ from $C_2$. 

    Clearly, the algorithm is well-defined (if there are multiple possible bipartitions, it selects an arbitrary one).
    Furthermore, the algorithm never breaks any constraints and, since it eventually splits all points, it must have satisfied all the constraints~-- thereby completing the proof.

    \vspace{3pt}
    \noindent $\Longleftarrow$: Let $S$ be a closed set w.r.t. $\Delta$. Then by Definition \ref{def:closed_set} there exists an orientation $\orient{\Delta}$ such that $\genf(\orient{\Delta}|_S)$ connects $S$.
    We next prove that $\orient{\Delta}$ is a contradictory orientation of $\Delta$.
    
    Assume towards contradiction that this is not the case.
    Therefore, there exists a hierarchical tree satisfying all constraints in $\orient{\Delta}$.
    Consider the first (closest to the root) tree node that cuts the set $S$.
    Since $\genf(\orient{\Delta}|_S)$ connects $S$, the cut must cut an edge in $\genf(\orient{\Delta}|_S)$.
    Denote this edge as $(a,b)$ and let the $\C abc$ be the constraint generating the edge.
    By the definition of $\orient{\Delta}|_S$, $c$ must belong to $S$, and hence constraint $\C abc$ is violated by the tree~-- in contradiction to our assumption, thereby concluding our proof.
\end{proof}

\begin{theorem}[Theorem~\ref{thm:crit_set_is_a_closed_set} restated]
    Let $\Delta$ be a set of triplets over $V$ of size $|\Delta| \geq |V| -1$.
    Then any critical set w.r.t. $\Delta$ is closed w.r.t. $\Delta$.
\end{theorem}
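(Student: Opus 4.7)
The plan is a proof by contradiction driven by an extremal argument on orientations. Let $S$ be a critical set that is not closed, and fix an orientation $\orient{\Delta}$ minimizing the number $k$ of connected components of the graph on $S$ whose edge set is $\genf(\orient{\Delta}|_S)$; by hypothesis $k \geq 2$, and write the components as $T_1, \ldots, T_k$. The goal is to produce a single reorientation of $\orient{\Delta}$ that strictly decreases $k$, contradicting minimality.

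The first ingredient is a counting observation. The graph on $S$ carries $|\Delta|_S| \geq |S|-1$ edges, while a spanning forest on $k$ components uses only $|S|-k$ of them, so at least $k-1 \geq 1$ triplets are \emph{unused}, in the sense that their current edge is already redundant for connectivity. If the three vertices of such an unused triplet $(a,b,c)$ do not all lie in the same component, then one of the two alternate orientations creates an edge between two distinct components while the removal of the original (redundant) edge disconnects nothing, merging two components and contradicting minimality of $k$. Hence every unused triplet lies entirely inside some single component $T$.

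Fix one such unused triplet $(a,b,c) \subseteq T$ and a spanning tree $F_T$ of $T$. The edge $(a,b)$ together with the $a$-to-$b$ path in $F_T$ forms a cycle, so any edge on that path can be reoriented without disconnecting $T$. If some edge $(x,y)$ on this path is generated by a triplet $(x,y,z)$ with $z \notin T$, then reorienting it to $\C xzy$ keeps $T$ connected via $(a,b)$ while the new edge $(x,z)$ merges $T$ with the component containing $z$, decreasing $k$ and yielding the contradiction. Criticality of $S$ guarantees such cross-triplets must exist inside $T$: by Definition~\ref{def:critical_set}, $|\Delta|_T| \leq |T|-2$, whereas the number of edges with both endpoints in $T$ is at least $|T|$ (the unused triplet supplies a redundant edge beyond the spanning connection), so at least two edge-contributing triplets in $T$ have their third vertex outside $T$.

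I expect the main obstacle to be the remaining case where none of the cross-triplet edges happens to lie on the chosen $a$-to-$b$ path. The plan is to handle this by a chain of reorientations: starting from the unused triplet, iteratively reorient internal triplets of $T$ to create new cycles, progressively enlarging the set of edges that may be safely swapped without disconnecting $T$. The strict undercount $|\Delta|_T| \leq |T|-2$ provided by criticality forces this process to eventually reach a configuration in which a cross-triplet edge is reorientable; that final swap then reduces $k$. Formalizing an invariant that makes the chain monotone and guarantees its termination — for instance, a potential function tracking the graph distance from the active cycle to the nearest cross-triplet edge — is the technical heart of the argument.
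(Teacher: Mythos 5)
Your outline follows the same high-level route as the paper: proceed by contradiction, take an extremal orientation, extract an unused triplet $(a,b,c)$, observe it must lie entirely in one component, use minimality of the critical set to deduce that cross-triplets (edges inside the component whose third vertex lies outside) must exist, and then look to reorient along the cycle closed by $(a,b)$. The counting step you give for the existence of cross-triplets is correct. But the proof has a genuine gap exactly where you flag ``the technical heart'': you do not actually establish that the chain of reorientations terminates, and the potential you propose --- graph distance from the active cycle to the nearest cross-triplet edge --- is not shown to decrease. Reorienting an internal edge of $T$ rewires the spanning tree and hence changes which path closes the cycle with $(a,b)$; there is no a priori reason the cycle moves ``toward'' any particular cross-triplet edge.

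The paper sidesteps this with a different and cleaner invariant: a \emph{double} extremal argument. It first fixes a maximum spanning forest (minimizing the number of components), then, among all orientations and maximum forests, minimizes the size of the component $T^*$ containing the unused triplet. With $|T^*|$ as the potential, it runs a case analysis (whether a cross-edge separates $a$ from $b,c$; if not, whether a cross-edge inside the relevant sub-piece does, etc.), and in each case performs \emph{two} coordinated reorientations --- one internal to $T^*$ and one of a cross-triplet --- that either grow the forest (Case 1/2a, contradicting forest maximality) or shrink $T^*$ while keeping the forest size fixed (Cases 2b/2c, contradicting minimality of $|T^*|$). Your single-minimization framing has no mechanism to force progress in the second kind of step, because the number of components does not change there. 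To close the gap you would either need to verify your proposed distance potential survives the spanning-tree rewiring, or, more plausibly, adopt the lexicographic $(\text{\#components}, |T^*|)$ potential the paper uses and work out the case analysis that makes each reorientation strictly decrease it.
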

Before proving Theorem~\ref{thm:crit_set_is_a_closed_set}, we define the following crucial operation.
\begin{definition}[Reorientation]
    For a fixed triplet, when changing a constraint from $\C abc$ to another constraint (i.e. $\C bca$ or $\C acb$), we say that we \emph{reoriented} the constraint.
\end{definition}
\begin{proof}[Proof of Theorem~\ref{thm:crit_set_is_a_closed_set}]
    Let $S$ be a critical set and $\Delta |_S$ be the set of triplets induced by $S$. If $S$ is closed then the proof holds.
    Otherwise any orientation of $\Delta$ results in $\genf(\orient{\Delta}|_S)$ not connecting $S$.


    Consider $\orient{\Delta}$ such that the maximum forest $\mathcal F$ with vertices from $S$ and edges from $\genf(\orient{\Delta}|_S)$ has more than $1$ component.
    Among all such forests we consider the one with the maximum number of edges.
    Then, the forest $\mathcal F$ has at most $|S| - 2$ edges, and there exists a constraint $\C abc \in \orient{\Delta} |_S$ such that $\genf(\C abc)$ is not in the forest.
    Below, we refer to triplet $(a,b,c)$ as the \emph{unused triplet}.
    If two of these vertices, e.g. $a$ and $b$, belong to different trees, we can generate the edge $(a,b)$ to connect the trees, contradicting maximality of the forest $\mathcal F$.
    Hence, $a$, $b$ and $c$ must belong to the same tree, which we denote $T^*$.
    Among all $\orient{\Delta}$ and maximum forests $\mathcal F$ on $\orient{\Delta}$, we consider the ones minimizing the size of $T^*$.

    By minimality of $S$, the vertices of $T^*$ don't form a critical set, and hence there must exist a non-empty set of constraints $\orient{\Delta}^* = \bbr{\C {x_i}{y_i}{z_i}}_{i=1}^m$ such that $(x_i,y_i)$ is an edge in tree $T^*$ and $z_i$ belongs to a different tree.
    Let $U_1, \ldots, U_k$ be the connected components obtained by removing edges $\{(x_i,y_i)\}_{i=1}^m$ from $T^*$.
    It suffices to consider the following cases (other cases are symmetrical).

    \begin{figure}[t!]
        \centering
        \begin{subfigure}[t]{\ifarxiv{0.48\columnwidth}\else{\columnwidth}\fi}
            \includegraphics[width=\columnwidth]{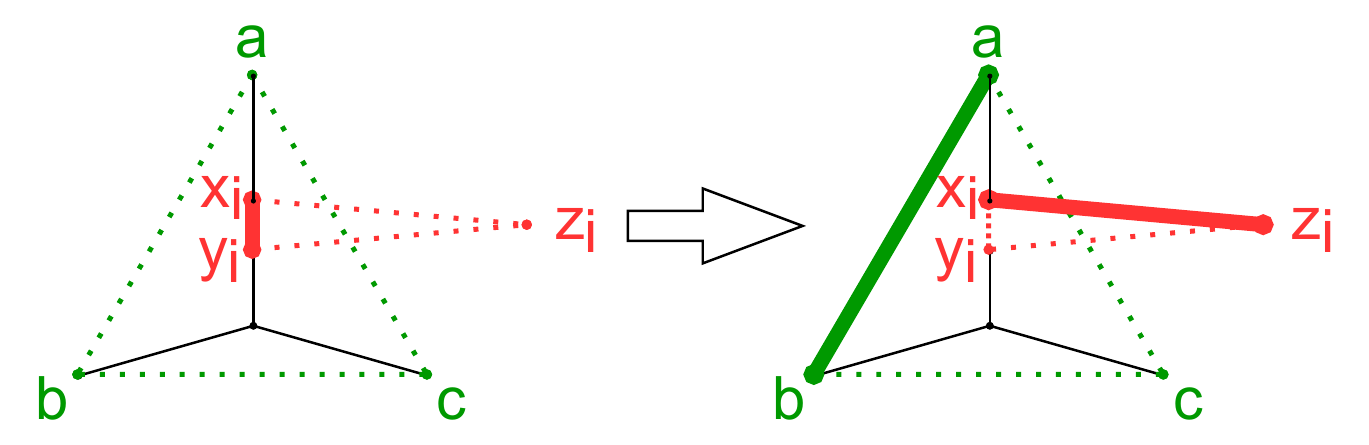}
            \caption{Case 1: $(x_i,y_i)$ separates $a$ from $b$ and $c$. In this case, we connect $T^*$ to the tree containing $z_i$}
            \label{fig:inside}
        \end{subfigure}
        \hfill
        \begin{subfigure}[t]{\ifarxiv{0.48\columnwidth}\else{\columnwidth}\fi}
            \includegraphics[width=\columnwidth]{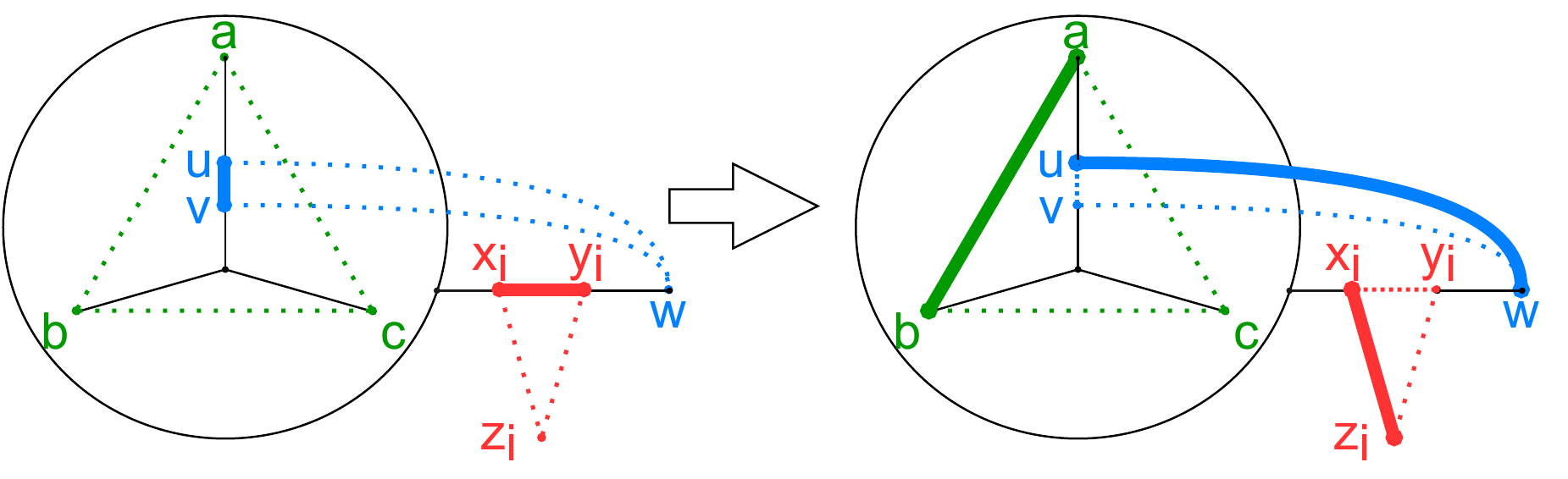}
            \caption{Case 2a: $(x_i,y_i)$ doesn't separate $a$, $b$ and $c$. If there exists an edge $(u,v)$ with the third vertex outside $U$ and separating $a$ from $b$ and $c$, we can connect $T^*$ to the tree containing $z_i$ by generating $(a,b)$ and reorient $(u,v)$}
            \label{fig:outside_inside}
        \end{subfigure}
        \begin{subfigure}[t]{\ifarxiv{0.48\columnwidth}\else{\columnwidth}\fi}
            \includegraphics[width=\columnwidth]{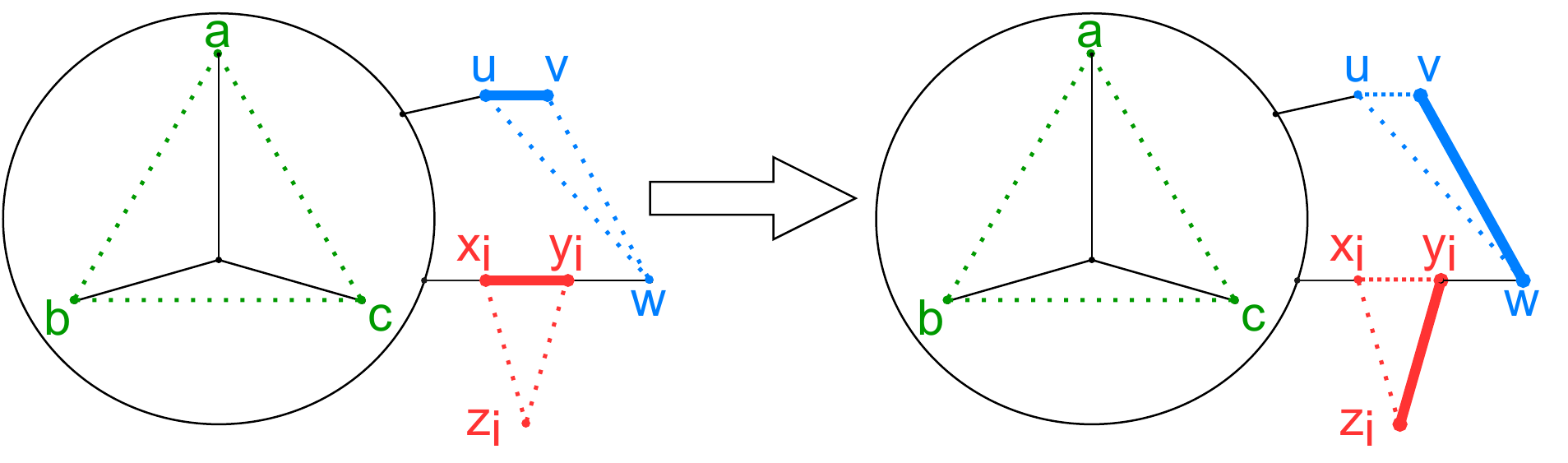}
            \captionsetup{singlelinecheck=off}
            \caption{Case 2b: $(x_i,y_i)$ doesn't separate $a$, $b$ and $c$.
            If there is a constraint $\C uvw$ such that
            \begin{compactenum}
                \item $u,v \in U$ and $w \notin U$,
                \item $(u, v)$ doesn't separate $a$, $b$ and $c$,
                \item $(u, v)$ doesn't separate $a$, $b$, $c$ from $x_i$, $y_i$ (unlike Case 2c in Figure~\ref{fig:outside_outside_sep}),
            \end{compactenum}
            then we reorient edges $(u, v)$ and $(x_i,y_i)$, connecting vertices $v$, $w$ and $y_i$ to the tree containing $z_i$, hence reducing the size of $T^*$}
            \label{fig:outside_outside}
        \end{subfigure}
        \hfill
        \begin{subfigure}[t]{\ifarxiv{0.48\columnwidth}\else{\columnwidth}\fi}
            \includegraphics[width=\columnwidth]{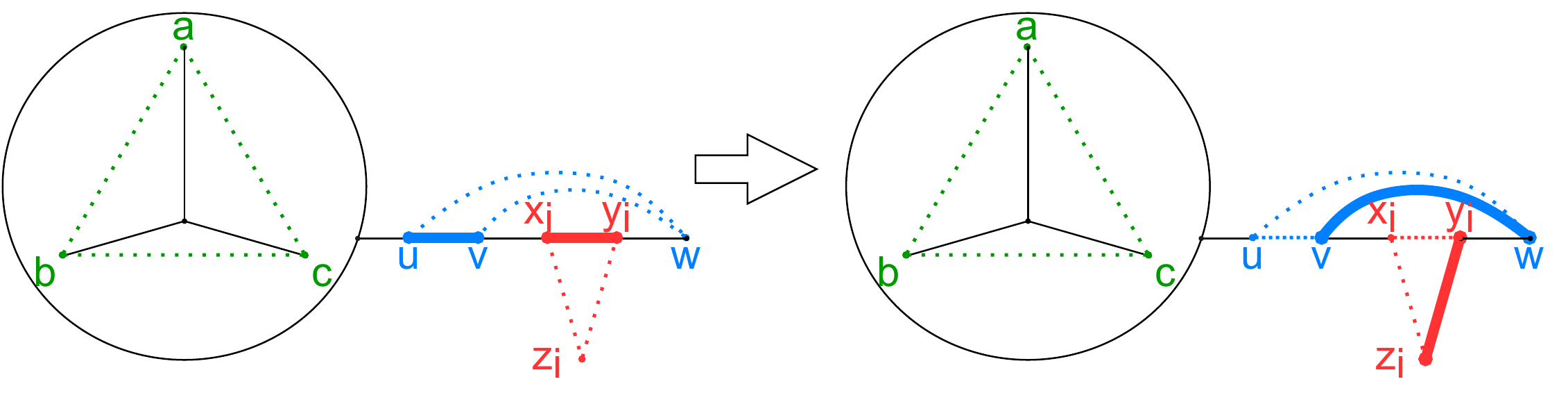}
            \captionsetup{singlelinecheck=off}
            \caption{Case 2c: $(x_i,y_i)$ doesn't separate $a$, $b$ and $c$. If there is a triplet $(u, v, w)$ such that
            \begin{compactenum}
                \item $u,v \in U$ and $w \notin U$,
                \item $(u, v)$ doesn't separate $a$, $b$ and $c$,
                \item $(u, v)$ separates $a$, $b$, $c$ from $x_i$, $y_i$ (unlike Case 2b in Figure~\ref{fig:outside_outside}),
            \end{compactenum}
            then we reorient edges $(u,v)$ and $(x_i, y_i)$, connecting vertices $v$, $w$, $x_i$ and $y_i$ to the tree containing $z_i$, hence reducing the size of $T^*$}
            \label{fig:outside_outside_sep}
        \end{subfigure}
        \caption{Case analysis in Theorem~\ref{thm:crit_set_is_a_closed_set}}
    \end{figure}
        
    \paragraph{Case 1: $a \in U_j$ and $b,c \notin U_j$ for some $j$}
    Then there exists a tree edge $(x_i,y_i)$ such that removing the edge separates $a$ from $b$ and $c$ (see Figure~\ref{fig:inside}).
    W.l.o.g. we assume that $x_i \in U_j$.
    Reorienting $(x_i,y_i)$ connects $U_j$ and the tree containing $z_i$, but also disconnects $U_j$ from the rest of $T^*$.
    By orienting $(a,b,c)$ as $\C abc$, we connect $U_j$ with the rest of $T^*$.
    To summarize, we connected $T^*$ to another tree, contradicting the maximality of forest $\mathcal F$.
    
    \paragraph{Case 2: $a,b,c \in U_j$ for some $j$}
    We denote $U = U_j$.
    Below we show that we can either connect $T^*$ to another tree (contradicting the maximality of the forest) or reduce the size of $T^*$ (contradicting the minimality of $T^*$) while still guaranteeing that the number of trees doesn't change and $T^*$ contains the unused triplet $(a,b,c)$.

    Since $U$ is not a critical set (since $S$ is a critical set, it is by definition minimal), there exists constraint $\C uvw$ such that $u,v \in U$ and $w \notin U$.
    Let $\bbr{\C {x_i}{y_i}{z_i}}_i$ be a set of constraints from $\orient{\Delta}$ such that for every $i$, edge $(x_i,y_i)$ is in $T^*$ and removing $(x_i, y_i)$ separates $w$ from $u$ and $v$.
    Let $\tilde T_i$ be the tree containing $z_i$.

    \paragraph{Case 2a: Removal of $(u, v)$ separates $a,b,c$ (Figure~\ref{fig:outside_inside})}
    There exists a tree edge $(u, v)$ such that $u,v \in U$, $w \notin U$, and w.l.o.g. $(u, v)$ separates $a$ from $b$ and $c$.
    Then reorienting $(u, v)$ and $(x_i,y_i)$ and generating edge $(a, b)$ connects $T^*$ to another tree, contradicting maximality of $\mathcal F$.

    \paragraph{Case 2b: Removal of $(u, v)$ does not separate $a,b,c$, and removal of $(u, v)$ doesn't separate $x_i,y_i$ from $a,b,c$ (Figure~\ref{fig:outside_outside})}
    We reorient $(v,u)$ to $(v,w)$, where $v$ is the node furthest from $a,b,c$) and $(x_i,y_i)$.
    Hence, the vertices $y_i, v, w$ (and other vertices connected to them) are disconnected from $T^*$ and connected to $\tilde T$.
    Hence, $T^*$ still contains the unused triplet $(a,b,c)$, and the size of the forest doesn't change; however, the size of $T^*$ decreases, contradicting the minimality of $T^*$.

    \paragraph{Case 2c: Removal of $(u, v)$ does not separate $a,b,c$, and removal of $(u, v)$ separates $x_i,y_i$ from $a,b,c$ (Figure~\ref{fig:outside_outside_sep})}
    We reorient $(u,v)$ to $(v,w)$, where $v$ is the node furthest from $a,b,c$) and $(x_i,y_i)$.
    Hence, the vertices $x_i, y_i, v, w$ are disconnected from $T^*$ and connected to $\tilde T$.
    Similar to Case 2b, we decrease the size of $T^*$, contradicting the minimality of $T^*$.

    In all cases, we show a contradiction, meaning that the maximal forest must have at least $|S|-1$ edges, implying that $S$ is connected.
\end{proof}

\subsection{\texorpdfstring{$k$}{k}-tuple constraints}

We first reduce the problem to the triplet case.
\begin{lemma}
    \label{lem:tuple_to_tiplet}
    Let $(a_1, a_2, \ldots, a_k)$ be a $k$-tuple.
    Then there exists a set of $k-2$ distinct triples $\Delta = \bbr{(a_i^{(t)}, a_j^{(t)}, a_\ell^{(t)})}_{t=1}^{k-2}$ such that for any orientation of $\Delta$ there exists a tree over $a_1, \ldots, a_k$ satisfying all constraints in the orientation.
\end{lemma}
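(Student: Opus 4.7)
The plan is to exhibit an explicit family of $k-2$ triplets, namely the ``fan'' triplets sharing two common pivot elements, and then show by a direct three-way partition that every one of the $3^{k-2}$ possible orientations can be realized by an appropriate two-level tree.

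Concretely, I would take $\Delta = \{(a_1, a_2, a_j)\}_{j=3}^{k}$. These are clearly $k-2$ distinct triplets, matching the bound. Given any orientation $\orient{\Delta}$, each triplet $(a_1,a_2,a_j)$ receives one of the three labels $\C{a_1}{a_2}{a_j}$, $\C{a_1}{a_j}{a_2}$, or $\C{a_2}{a_j}{a_1}$, so I partition the index set $\{3,\dots,k\}$ into three disjoint blocks according to which label was chosen:
\[
A = \{j : \C{a_1}{a_2}{a_j}\},\quad B = \{j : \C{a_1}{a_j}{a_2}\},\quad C = \{j : \C{a_2}{a_j}{a_1}\}.
\]

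The candidate tree $T$ then has a simple two-level backbone: the root separates the leaves indexed by $A$ from the rest, and the internal node on the ``rest'' side further separates $\{a_1\}\cup\{a_j : j\in B\}$ from $\{a_2\}\cup\{a_j : j\in C\}$; within each of the three groups one can fix an arbitrary binary arrangement (degenerate cases where some group is empty just collapse the corresponding internal node, which does not affect the argument). The verification is then a direct LCA check for each block: for $j\in A$ the only separation between $a_j$ and $\{a_1,a_2\}$ happens at the root, so $\lca_T(a_1,a_j)=\lca_T(a_2,a_j)$ lies strictly above $\lca_T(a_1,a_2)$; for $j\in B$ the pair $(a_1,a_j)$ first meets in the left subtree of the second-level node, strictly below $\lca_T(a_1,a_2)=\lca_T(a_2,a_j)$; the case $j\in C$ is symmetric.

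There is no real obstacle here; the only thing to be a bit careful about is handling the degenerate cases ($A$, $B$, or $C$ empty, or the two-level backbone collapsing) so that $T$ is a well-defined hierarchical tree on $a_1,\dots,a_k$, but that is a routine case split. The substantive content of the lemma is the observation that freezing two coordinates of the $k$-tuple converts the $k$-tuple orientation problem into $k-2$ independent triplet orientations whose joint realizability follows from the single two-level construction above; this is exactly the ``independent triplets'' reduction alluded to in the paragraph preceding the lemma.
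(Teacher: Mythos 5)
Your proposal is correct and takes essentially the same approach as the paper: both use the fan $\Delta = \{(a_1,a_2,a_j)\}_{j=3}^{k}$, partition $\{a_3,\dots,a_k\}$ into three groups by which element the orientation separates, and realize any orientation with a two-level tree that cuts off the $\C{a_1}{a_2}{a_j}$-group at the root and splits $\{a_1\}\cup B$ from $\{a_2\}\cup C$ at the next level. The only cosmetic difference is that the paper pins $a_1$ (resp.\ $a_2$) as a sibling of the subtree on $S_2$ (resp.\ $S_1$) rather than allowing an arbitrary binary arrangement within each group, which is immaterial since no constraints involve two of the $a_j$'s with $j\ge 3$.
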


\begin{figure}[t!]
    \centering
    \
\newcommand{\treeparams}{for tree={s sep=1cm, l-=1.5em, solid node, math content}}
\bracketset{action character=@}

\begin{tikzext}
    \begin{forest}
        [ , @\treeparams,
          [
            [ 
              [ , label={below:$a_1$} ]
              [T(S_2), tria]
            ]
            [ 
              [ , label={below:$a_2$} ]
              [T(S_1), tria]
            ]
          ]
          [T(S_o), tria]
        ];
    \end{forest}
\end{tikzext}
    \caption{Lemma~\ref{lem:tuple_to_tiplet}: A tree satisfying a orientation of $\bbr{\C {a_1}{a_2}{a_t}}_{t=3}^{k}$ where $S_o$ is a set of labels $a_t$ satisfying $\C {a_1}{a_2}{a_t}$, $S_1$ is a set of points $a_t$ satisfying $C {a_2}{a_t}{a_1}$ and $S_2$ is a set of points $a_t$ satisfying $\C {a_1}{a_t}{a_2}$.
    $T(S_i)$ denotes an arbitrary tree on $S_i$}
    \label{fig:tuple_tree}
\end{figure}
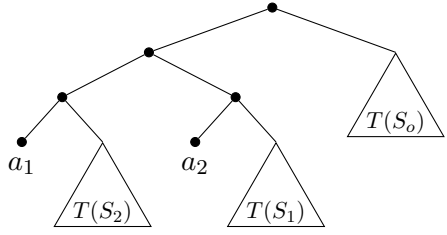

\begin{figure*}[t!]
    \centering
    \newcommand{\treeparams}{for tree={s sep=0.7cm, l-=1.5em, solid node, math content}}
\bracketset{action character=@}

\begin{tikzext}
    \begin{forest}
        [ , @\treeparams,
          [
            [ 
              [ , label={below:$a$} ]
              [U_a, tria]
            ]
            [ 
              [ , label={below:$b$} ]
              [U_b, tria]
            ]
          ]
          [U_o, tria]
        ];
        \node[below=8em,align=center,anchor=center] {$T_i$};
    \end{forest}
    \begin{forest}
        [ , @\treeparams,
          [
            [ 
              [ , label={below:$a$} ]
              [V_a, tria]
            ]
            [ 
              [ , label={below:$b$} ]
              [V_b, tria]
            ]
          ]
          [V_o, tria]
        ];
        \node[below=8em,align=center,anchor=center] {$\orient{t_{i+1}}$};
    \end{forest}
    \begin{tikzpicture}
        \centering
        \draw[line width=12pt,my triangle,
            postaction={draw,white,line width=10pt,my triangle,shorten >=2pt,shorten <=1pt}](0,0) -- (1.5,0);
        \node at (0,-2) {};
    \end{tikzpicture}
    \begin{forest}
        [ , @\treeparams,
          [
            [ 
              [ , label={below:$a$} ]
              [ , s sep=0.1cm
                [U_a, tria]
                [V_a, tria]
              ]
            ]
            [ 
              [ , label={below:$b$} ]
              [ , s sep=0.1cm
                [U_b, tria]
                [V_b, tria]
              ]
            ]
          ]
          [ , s sep=0.1cm
            [U_o, tria]
            [V_o, tria]
          ]
        ];
        \node[below=9em,align=center,anchor=center] {$T_{i+1}$};
    \end{forest}
\end{tikzext}
    \caption{Theorem~\ref{thm:shatter_lb}: Given tree $T_i$ (satisfying $\orient{t_1}, \ldots, \orient{t_i}$), constraint $\orient{t_{i+1}}$, and $\{a,b\} = t_i \cap t_{i+1}$, we construct tree $T_{i+1}$ satisfying $\orient{t_1}, \ldots, \orient{t_{i+1}}$}
    \label{fig:tuple_lb}
\end{figure*}

\begin{proof}
    We define $\Delta = \bbr{(a_1, a_2, a_t)}_{t=3}^{k}$ and let $\orient{\Delta}$ be some orientation of $\Delta$.
    We partition $a_3, \ldots, a_k$ into three sets depending on which vertex is separated in $(a_1, a_2, a_t)$:
    \begin{compactitem}
        \item $a_t$ is in $S_1$ when $a_1$ is separated, i.e. $S_1 = \bbr{a_j \mid \C {a_2}{a_j}{a_1} \in \orient{\Delta}}$;
        \item $a_t$ is in $S_2$ when $a_2$ is separated, i.e. $S_2 = \bbr{a_j \mid \C {a_1}{a_j}{a_2} \in \orient{\Delta}}$;
        \item $a_t$ is in $S_o$ when $a_t$ is separated, i.e. $S_o = \bbr{a_j \mid \C {a_2}{a_2}{a_j} \in \orient{\Delta}}$.
    \end{compactitem}
    Then the tree shown in Figure~\ref{fig:tuple_tree}, where $T(S_i)$ is any tree over $S_i$, satisfies $\orient{\Delta}$.
\end{proof}

\begin{theorem}
    \label{thm:tuples}
    Let $|V|$ be a set and $\Delta$ be a set of $k$-tuples on $V$.
    If $|\Delta| \geq \left\lceil \frac{|V|-1}{k-2} \right\rceil$, there exists a contradictory orientation of $\Delta$.
\end{theorem}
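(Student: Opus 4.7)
My plan is to reduce Theorem~\ref{thm:tuples} to Theorem~\ref{thm:triplets} using Lemma~\ref{lem:tuple_to_tiplet}, splitting into two cases depending on whether any pair of distinct $k$-tuples in $\Delta$ shares at least three common elements.

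In the first case, suppose there exist distinct $t_1, t_2 \in \Delta$ with $\{a, b, c\} \subseteq t_1 \cap t_2$. Every $k$-tuple orientation is realized by some binary tree on the tuple's elements, and that tree induces exactly one of the three triplet constraints $\C abc$, $\C acb$, $\C bca$ on $\{a,b,c\}$; any of the three is realizable by choosing the tree appropriately (e.g., make $a,b$ siblings low in the tree to induce $\C abc$). I would pick orientations of $t_1$ and $t_2$ that induce two different triplet constraints on $\{a,b,c\}$. Any single tree satisfying both $k$-tuple orientations would have to orient $\{a,b,c\}$ in two contradictory ways, which is impossible, so the orientation that extends these choices arbitrarily to the remaining $k$-tuples is contradictory.

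In the second case, every pair of distinct $k$-tuples in $\Delta$ intersects in at most two elements, so no triple is contained in two different $k$-tuples. For each $t \in \Delta$, Lemma~\ref{lem:tuple_to_tiplet} supplies a set $\Delta_t$ of $k-2$ distinct triples inside $t$ such that every orientation of $\Delta_t$ is realized by a tree on $t$. The no-shared-triple hypothesis forces the $\Delta_t$'s to be pairwise disjoint, so $\Delta' := \bigcup_{t \in \Delta} \Delta_t$ is a set of exactly $(k-2)|\Delta| \ge |V|-1$ triples. Theorem~\ref{thm:triplets} then yields a contradictory orientation $\orient{\Delta'}$. For each $t$, Lemma~\ref{lem:tuple_to_tiplet} provides a tree on $t$'s elements consistent with the restriction of $\orient{\Delta'}$ to $\Delta_t$; I take that tree as the $k$-tuple orientation of $t$, obtaining an orientation of all of $\Delta$. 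If some tree $T$ on $V$ satisfied all these $k$-tuple orientations, then by Definition~\ref{def:k_constraint} $T$ would satisfy every triplet constraint they induce, including all of $\orient{\Delta'}$, contradicting Theorem~\ref{thm:triplets}.

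The main obstacle is precisely to guarantee that distinct $k$-tuples yield disjoint collections of triples so that $|\Delta'| = (k-2)|\Delta|$ as a genuine set; this is what the case split achieves. The overlap case is resolved directly by exploiting the rigidity of triplet constraints, and in the complementary case the three-element intersection bound forces the desired disjointness, after which the lifting argument via Definition~\ref{def:k_constraint} is routine.
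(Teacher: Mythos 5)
Your proof is essentially the same as the paper's: it reduces each $k$-tuple to $k-2$ ``independent'' triplets via Lemma~\ref{lem:tuple_to_tiplet}, applies Theorem~\ref{thm:triplets} to the union, and lifts the resulting contradictory triplet orientation back to a contradictory $k$-tuple orientation. The one place you are more careful is the disjointness of the $\Delta_t$'s, which the paper dismisses with a parenthetical ``w.l.o.g.\ disjoint, otherwise trivial''; your case split (pairs sharing $\geq 3$ elements give an immediate contradiction, and otherwise the $\Delta_t$'s are genuinely disjoint) spells out exactly what makes that remark true.
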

\begin{proof}
    Let $\Delta = \{t_1, \ldots, t_{\ell}\}$ and for each $k$-tuple $t_i$ let $\Delta_i$ be the set of triplets from Lemma~\ref{lem:tuple_to_tiplet}.
    Let $M = \cup_i \Delta_i$ (w.l.o.g. we assume that $\Delta_i$ are disjoint since otherwise getting contradiction in the following reasoning is trivial).
    We first observe that if we can find a contradictory orientation of $M$, then by Lemma \ref{lem:tuple_to_tiplet} we can construct a contradictory orientation of $\Delta$.
    Namely, if $\orient{\Delta}_1, \ldots, \orient{\Delta}_\ell$ is a contradictory orientation of $M$, then for each $i$ we select a tree which is consistent with $\orient{\Delta}_i$.
    Therefore, it is enough to show that there exists a contradictory orientation of $M$.
    
    We have $|M| = |\Delta| \cdot (k-2)$ and since $|\Delta| \geq \left\lceil \frac{|V|-1}{k-2} \right\rceil$, we are guaranteed that $|M| \geq |V| - 1$.
    Thus, by Theorem \ref{thm:triplets} there exists a contradictory orientation of $\Delta$, thereby concluding the proof.
\end{proof}

\paragraph{Note} For $|\Delta|=1$, we have $|V| - 1 \le k-2$, i.e. $|V| < k$, and hence no $k$-tuple can be constructed.
Allow repeating elements in a tuple trivializes the problem: a contradiction can be achieved by using a constraint of form $\C aba$.

Next, we observe that one cannot achieve a better bound than Theorem~\ref{thm:tuples}, i.e. there exists a set of $k$-tuples of size $\left\lceil \frac{|V|-1}{k-2} \right\rceil - 1$ which doesn't have a contradictory orientation.

\begin{theorem}
    \label{thm:shatter_lb}
        There exists a set $\Delta$ of $k$-tuples with $|\Delta| = \left\lceil \frac{|V|-1}{k-2} \right\rceil - 1$, which does not have a contradictory orientation.
\end{theorem}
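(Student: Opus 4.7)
\textbf{Proof plan for Theorem~\ref{thm:shatter_lb}.}

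The plan is to build $\Delta$ as a \emph{chain} of overlapping $k$-tuples where consecutive tuples share exactly two elements, and then show inductively that any orientation of this chain is realized by some hierarchical tree. Let $n = |V|$ and $m = \lceil (n-1)/(k-2) \rceil - 1$; a short case analysis shows $m(k-2) \le n-2$, so one may pick distinct elements $v_1, v_2, \ldots, v_{2+m(k-2)} \in V$. Define $t_i = \{v_{(i-1)(k-2)+1}, v_{(i-1)(k-2)+2}, \ldots, v_{(i-1)(k-2)+k}\}$ for $i = 1, \ldots, m$, so that each $t_i$ has $k$ elements and $|t_i \cap t_{i+1}| = 2$.

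The claim is proved by induction on $i$: given arbitrary orientations $\orient{t_1}, \ldots, \orient{t_i}$, there is a tree $T_i$ satisfying all of them. The base case $i = 1$ is immediate, since $\orient{t_1}$ is itself a tree on $t_1$. For the inductive step, let $\{a, b\} = t_i \cap t_{i+1}$. The key structural observation is that the pair $(a, b)$ partitions the remaining leaves of $T_i$ into three regions: the set $U_a$ of leaves $u$ whose LCA with $a$ in $T_i$ is strictly below $\lca_{T_i}(a, b)$, the symmetric set $U_b$, and the set $U_o$ of leaves lying in subtrees hanging off the path from $\lca_{T_i}(a, b)$ to the root of $T_i$. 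Likewise the tree defined by $\orient{t_{i+1}}$ partitions $t_{i+1} \setminus \{a, b\}$ into analogous sets $V_a, V_b, V_o$, each equipped with an internal sub-orientation.

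The tree $T_{i+1}$ is then obtained from $T_i$ and $\orient{t_{i+1}}$ by the merge depicted in Figure~\ref{fig:tuple_lb}: on $a$'s side of $\lca_{T_i}(a, b)$ the subtree $\{a\} \cup U_a$ is refined by attaching the $V_a$-subtree from $\orient{t_{i+1}}$ as a new sibling of $U_a$ under a fresh internal node, the symmetric modification attaches $V_b$ on $b$'s side, and the $V_o$-structure of $\orient{t_{i+1}}$ (i.e., the tree induced by $\orient{t_{i+1}}$ on $V_o$ together with its attachment above $\lca(a,b)$) is grafted above $\lca_{T_i}(a, b)$. By construction each new element lies in the unique region dictated by $\orient{t_{i+1}}$, and the internal sub-orientations are copied verbatim, so $T_{i+1}$ satisfies $\orient{t_{i+1}}$.

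The main obstacle is verifying that $T_{i+1}$ still satisfies every earlier $\orient{t_j}$ with $j \le i$. This rests on two facts. First, the elements of $V_a \cup V_b \cup V_o$ are new, so they appear in none of $t_1, \ldots, t_i$ and cannot participate in any of those constraints. Second, the insertion only refines $T_i$ by adding fresh internal nodes and new leaves without detaching or relocating any existing subtree, so $T_i$ is obtained from $T_{i+1}$ by contracting the newly introduced internal nodes and deleting the new leaves; this operation preserves every ancestor relationship among the original leaves, which in turn preserves every triplet orientation already satisfied by $T_i$. Hence the induction carries through to $i = m$, producing a tree consistent with the entire orientation of $\Delta$, and since the orientation was arbitrary, $\Delta$ admits no contradictory orientation.
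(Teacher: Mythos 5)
Your proof takes essentially the same approach as the paper: it constructs $\Delta$ as the identical chain of overlapping $k$-tuples (consecutive tuples sharing exactly two elements), verifies the same element-count bound, and then builds a satisfying tree for any orientation by the same induction, merging $T_i$ with $\orient{t_{i+1}}$ in the manner depicted in the paper's Figure~\ref{fig:tuple_lb}. The only difference is that you spell out more explicitly what the merge does (defining $U_a, U_b, U_o$ via LCAs and arguing via restriction/refinement that earlier constraints survive), whereas the paper delegates these details to the figure; your added justification is sound and fills in the step the paper leaves implicit.
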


\begin{proof}
    Given set $V = \{a_1, \ldots, a_{|V|}\}$, we select $\Delta$ as:
    \begin{align*}
        \Delta = \Big\{
            &(a_1, a_2, \ldots, a_k), \\
            &(a_{k-1}, \ldots, a_{2k - 2}), \\
            &(a_{2k - 1}, \ldots, a_{3k - 4}), \\
            &\quad\quad\ldots \\
            &(a_{(|\Delta|-1)(k-2) + 1}, \ldots, a_{|\Delta| (k-2) + 2}) \Big\},
    \end{align*}
    i.e. $\Delta = \{t_1, \ldots, t_m\}$ where $|t_i|=k$, $|t_i \cap t_{i+1}| = 2$, and $t_i$ and $t_j$ are disjoint for $|i - j| > 1$.
    Note that $\Delta$ uses at most $|V|$ points, since
    \begin{flalign*}
        |\Delta| (k-2) + 2
        &= \left(\left\lceil \frac{|V|-1}{k-2} \right\rceil - 1\right) (k - 2) + 2 \notarxiv{\\&}
        \le \frac{|V| - 1 - 1}{k-2} (k - 2) + 2
        \le |V|
    .\end{flalign*}
    For any orientation $\orient{\Delta} = \{\orient{t}_1, \ldots, \orient{t}_{|\Delta|}\}$ of $\Delta$, we now construct a tree satisfying $\orient{\Delta}$.
    We construct a sequence of trees $T_1, \ldots, T_{|\Delta|}$ such that $T_i$ satisfies all of $\orient{t_1}, \ldots, \orient{t_i}$.
    We define $T_1 = \orient{t_1}$.
    Then, given $T_i$ and $\orient{t}_{i+1}$, we construct $T_{i+1}$ as shown in Figure~\ref{fig:tuple_lb}.
    Since all constraints satisfied by $T_i$ and $\orient{t_{i+1}}$ are satisfied by $T_{i+1}$, then $T_{i+1}$ satisfies $\orient{t_1}, \ldots, \orient{t_{i+1}}$.
\end{proof}

\subsection{Constraints of form \texorpdfstring{$\TW abc$}{[a|b|c]}.}
\label{ssec:3_way_constraints}

In this section, we extend the result from Theorem~\ref{thm:natarajan} to the case when constraints of form $(a|b|c)$ are allowed, i.e. tree $T$ satisfies the constraint if $\lca_T(a,b)=\lca_T(b,c)=\lca_T(a,c)$.
We show exactly the same bound as in Theorem~\ref{thm:natarajan}, which allows us to use other results without changes.

Similarly to Section~\ref{sec:contr_orient}, for an orientation $\overrightarrow \Delta$ and set $S \subseteq V$ we define $E_S = \bbr{(a,b) \mid \C abc \in \overrightarrow \Delta|_S}$.
Additionally, we introduce the following operation to handle $\TW abc$ constraints.

\begin{definition}
    Let $[a|b|c] \in \overrightarrow \Delta|_S$ and let $\mathcal E$ be a set of edges. If at least two of the vertices $a,b,c$ are connected by a path in $\mathcal E$, we say that $\mathcal E \cup \{(a,c), (a,b), (b,c)\}$ is a $t$-extension of $\mathcal E$ using $\TW abc$.
\end{definition}

\textbf{Note:} Importantly, the definition requires that two endpoints of the constraint are already connected by existing edges.

Intuitively, if $a$ and $b$ are connected, we can't cut them at the current level.
Hence, we also can't cut $c$ from $a$ and $b$ due to the constraint.
We hence connect $c$ to $a$ and $b$ to show that they can't be cut at the current level.

\begin{definition}
Let $\Delta$ be a set of triplets.
$S$ is \textbf{closed} if there exists an orientation $\orient{\Delta}$ and a sequence $\mathcal E_0, \ldots, \mathcal E_k$, where: 
\begin{itemize}
    \item $\mathcal E_0 = E_S$,
    \item $\mathcal E_{i+1}$ is a $t$-extension of $\mathcal E_i$ for all $i$ using a constraint from $\overrightarrow \Delta|_S$,
    \item $\mathcal E_k$ connects $S$.
\end{itemize}
\end{definition}

\begin{lemma}
    \label{lem:nonbinary:closed_is_contradictory}
    Let $\Delta$ be a set of triplets on $V$.
    Then there exists a contradictory orientation of $\Delta$ iff there exists a closed subset of $V$.
\end{lemma}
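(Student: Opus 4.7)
The plan is to follow the structure of Lemma~\ref{lem:closed_set_proof}, with the key new ingredient being how $t$-extensions track ``co-location'' requirements that any satisfying tree must obey for $\TW{a}{b}{c}$ constraints.

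For the $(\Leftarrow)$ direction, suppose $S$ is closed via orientation $\orient{\Delta}$ and a sequence $\mathcal{E}_0 \subseteq \cdots \subseteq \mathcal{E}_k$. Assume for contradiction that some tree $T$ satisfies $\orient{\Delta}$, and let $w$ be the lowest common ancestor of $S$ in $T$, so $w$'s children induce a partition of $S$ into parts $P_1, \ldots, P_r$ with $r \ge 2$. I would prove by induction on $i$ that every edge of $\mathcal{E}_i$ connects two vertices within a single part. The base case, for $(a,b) \in E_S$ arising from $\C{a}{b}{c} \in \orient{\Delta}|_S$, uses $c \in S$ to force $\lca_T(a,c) = \lca_T(b,c)$ to lie on or below $w$, which places $a, b$ in a common child-subtree of $w$. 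For the inductive step, a $t$-extension using $\TW{a}{b}{c}$ fires only when two of $a,b,c$ are connected in $\mathcal{E}_i$ and hence (by induction) share a part; the constraint $\TW{a}{b}{c}$ forces $u := \lca_T(a,b) = \lca_T(a,c) = \lca_T(b,c)$ to lie at or below $w$, and $u = w$ would place $a,b,c$ in three distinct parts (contradicting that two already share one), so $u < w$ and all three sit in one child-subtree of $w$. Then $\mathcal{E}_k$ connecting $S$ forces all of $S$ into a single part, contradicting $r \ge 2$.

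For the $(\Rightarrow)$ direction, I would prove the contrapositive: if no subset of $V$ is closed, every orientation $\orient{\Delta}$ is satisfiable. The construction is top-down: at each recursive set $S$, let $\mathcal{E}^*(S)$ be the closure of $E_S$ under $t$-extensions using constraints in $\orient{\Delta}|_S$, and split $S$ into the connected components $C_1, \ldots, C_m$ of $\mathcal{E}^*(S)$, recursing on each. Since $S$ is not closed, $\mathcal{E}^*(S)$ fails to connect $S$, so $m \ge 2$. Correctness at this cut follows because each $\C{a}{b}{c} \in \orient{\Delta}|_S$ keeps $a, b$ in one component (via $(a,b) \in E_S$) and each $\TW{a}{b}{c} \in \orient{\Delta}|_S$ places $a,b,c$ either all in one component or all in distinct components (by closure under $t$-extensions). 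Constraints whose points remain together at $S$ are deferred; at the recursive node corresponding to the LCA of a constraint's points, at least two of the three fall into different components, and combined with the co-location facts the cut realizes exactly the required constraint ($c$ alone split off for $\C{a}{b}{c}$; all three split for $\TW{a}{b}{c}$).

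The main obstacle is the inductive step in $(\Leftarrow)$, which interleaves the combinatorial $t$-extension rule with a fine LCA argument inside $T$: one must show that when a $t$-extension for $\TW{a}{b}{c}$ fires, $T$ already forces $\lca_T(\{a,b,c\})$ strictly below $\lca_T(S)$, purely from the inductive fact that two of $a,b,c$ are pre-connected by $\mathcal{E}_i$ and hence sit in a common part. This is the non-binary analogue of the ``$c$ cannot be separated first'' principle used in the binary proof of Lemma~\ref{lem:closed_set_proof}.
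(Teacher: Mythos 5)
Your proposal is correct and follows essentially the same strategy as the paper's proof: both directions match, and in particular your $(\Leftarrow)$ induction showing each $\mathcal{E}_i$ stays within a single child-subtree of $\lca_T(S)$ is just a cleaner repackaging of the paper's extremal argument (taking the first $t$-extension whose new edge crosses components and showing its pre-existing connecting path must lie in one component). The $(\Rightarrow)$ contrapositive construction via connected components of the $t$-extension closure is the same as the paper's.
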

\begin{proof}
    $ $\\
    \noindent $\Longleftarrow$: Let $S$ be a closed set. For contradiction, assume that the hierarchical tree exists and consider the first tree node cutting $S$ into $\mathcal U = \{U_1, \ldots, U_\ell\}$. The tree can't cut an edge from $E_S$ (see Lemma~\ref{lem:closed_set_proof}), so $U_i$ are not connected by edges from $E_S$.
    
    Since $S$ is critical, there exists $\orient{\Delta}$ and a sequence $\mathcal E_0, \ldots, \mathcal E_k$ such that $\mathcal E_0 = E_S$, $\mathcal E_k$ connects $S$, and $\mathcal E_i$ is a $t$-extensions of $\mathcal E_{i-1}$ using constraint $\TW{a_i}{b_i}{c_i} \in \overrightarrow \Delta|_S$. Let $\mathcal E_{i^*}$ be the first $t$-extension in the sequence connecting some of the components from $\mathcal U$ (such $i^*$ exists since $E_k$ connects $S$).
    
    Since $\mathcal E_{i^*}$ is a $t$-extension of $\mathcal E_{i^*-1}$ using $\TW{a_{i^*}}{b_{i^*}}{c_{i^*}}$, there exists a path $P$ in $\mathcal E_{i^*-1}$ connecting (w.l.o.g.) $a_{i^*}$ and $b_{i^*}$. This path consists of the following types of edges:
    \begin{itemize}
        \item Edges from $E_S$. All endpoints of such an edge must belong to the same component from $\mathcal U$.
        \item $(a_i,c_i)$, where $(a_i,c_i) = \mathcal E_{i} \setminus \mathcal E_{i-1}$ for $i < i^*$. By our assumption, $\mathcal E_{i^*}$ is the first $t$-extension connecting different components from $\mathcal U$, and hence $(a_i,c_i)$ belong to the same component from $\mathcal U$.
    \end{itemize}
    Hence, all nodes in $P$ lie in the same $U^* \in \mathcal U$, implying $a_{i^*},b_{i^*} \in U^*$.
    Since $c_{i^*} \notin U^*$, this leads to a contradiction since all of $a$, $b$ and $c$ must belong to the same component or to different components.

    \vspace{1em}
    \noindent $\Longrightarrow$: We instead prove the contrapositive: if there is no critical set, then for any orientation $\orient{\Delta}$ we can build a hierarchical tree.
    Let the current tree node correspond to the set $S$ of size at least $2$. Since $S$ is not critical, it's not connected after performing all possible $t$-extensions.
    Let $\mathcal U = \{U_1, \ldots, U_k\}$ be the connected components after performing all $t$-extensions (note that any maximal sequence of non-trivial contractions results in the same connected components).
    We show that splitting $S$ into $\mathcal U$ doesn't violate any constraint.
    
    First, it doesn't violate any constraints of form $\C abc$ since it doesn't cut an edge from $E_S$.
    It remains to show that it doesn't violate any constraint of form $\TW abc$.
    If $\TW abc$ can be used for $t$-extension, then $a$, $b$ and $c$ must be connected (by definition of $t$-extension and the fact that we performed all possible $t$-extensions) and belong to the same component; hence, their constraints are not violated.
    It remains to consider constraints $\C abc$ that can't be used for $t$-extension.
    But for such constraints, none of its pair of endpoints is connected, and hence $a$, $b$ and $c$ belong to different components.
    Hence, no constraints are violated.
\end{proof}

    \section{Missing proofs from Section~\ref{sec:pac}}
\label{app:n_dimension}

\begin{theorem}[Theorem~\ref{thm:natarajan} restated]
    For any $V$, we have $\ndim(H_3(V)) = |V|-2$.
\end{theorem}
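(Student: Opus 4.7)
The plan is to prove the upper bound $\ndim(H_3(V)) \leq |V|-2$, since the matching lower bound $|V|-2$ follows from Lemma~\ref{lem:natarajan_lb} with $k=3$. I would assume for contradiction that some set $\Delta$ of size $|V|-1$ on $V$ is N-shattered via a fixed pair of distinct allowed orientations $\{f_1(t), f_2(t)\}$ for each triplet $t\in\Delta$. By Theorem~\ref{thm:crit_set_is_a_closed_set}, since $|\Delta|\geq |V|-1$, there is a critical set $S\subseteq V$ that is closed w.r.t.\ $\Delta$, i.e.\ some orientation $\orient{\Delta}$ yields an edge set $E|_S=\genf(\orient{\Delta}|_S)$ that connects $S$. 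The catch is that this orientation may assign to some triplet a constraint that is neither $f_1$ nor $f_2$; the goal is to repair these ``illegal'' orientations while keeping $S$ connected, and then appeal to Lemma~\ref{lem:closed_set}.

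The core step is a local reorientation procedure applied to each triplet $t=(a,b,c)\in \Delta|_S$ whose current orientation $\C abc$ is not in $\{f_1(t),f_2(t)\}$. Since only three orientations exist for $t$, both alternatives $\C bca$ and $\C acb$ (which generate edges $(b,c)$ and $(a,c)$ respectively) must be allowed. If edge $(a,b)$ is not a bridge in the current $E|_S$, I replace it by either allowed edge arbitrarily and connectivity is preserved. If $(a,b)$ is a bridge, removing it splits $S$ into two components $C_a\ni a$ and $C_b\ni b$, and since $c\in S$ it falls in exactly one of them; if $c\in C_a$ I reorient to $\C bca$ (adding $(b,c)$), otherwise to $\C acb$ (adding $(a,c)$). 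In either case the new edge spans the former bridge cut, so $S$ becomes connected again. A quick sanity check confirms that the chosen new edge cannot already lie in $E|_S$, for otherwise $C_a$ and $C_b$ would not have been distinct components once $(a,b)$ was removed. This is illustrated by Figure~\ref{fig:reorient_natarajan}.

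Iterating this procedure over all triplets in $\Delta|_S$ with illegal orientations yields, inductively, an orientation in which every triplet of $\Delta|_S$ uses one of its two allowed constraints and $E|_S$ still connects $S$. Triplets of $\Delta$ not contained in $S$ have no effect on $E|_S$, so each can be assigned either $f_1$ or $f_2$ arbitrarily. The resulting global orientation lies in the ``allowed'' $2^{|\Delta|}$-sized family prescribed by N-shattering, and it has $S$ as a closed set. By Lemma~\ref{lem:closed_set} this orientation is contradictory, i.e.\ no hierarchical tree realizes it, contradicting the assumed N-shattering. Combined with Lemma~\ref{lem:natarajan_lb}, this gives $\ndim(H_3(V))=|V|-2$.

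The main obstacle is the bridge case: one must be sure that at least one of the two \emph{allowed} reorientations restores connectivity after removing the forbidden edge. The key observation that makes this work is simply that $c\in S$ and therefore lies in one of the two components produced by the bridge; picking the ``other'' endpoint of $(a,b)$ as the one separated by the new constraint guarantees that the newly generated edge straddles the cut. Because the procedure is local and strictly reorients one triplet at a time, preservation of connectivity propagates through the induction without requiring any global reordering of the reorientations.
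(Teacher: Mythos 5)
Your proposal is correct and follows essentially the same route as the paper's proof: invoke Theorem~\ref{thm:crit_set_is_a_closed_set} to get a critical set $S$ connected by some orientation's induced edges, then repair each disallowed orientation $\C abc$ locally, using the fact that when $(a,b)$ is a bridge, $c$ lies in exactly one of the two components and the allowed reorientation toward the opposite side restores connectivity. The only cosmetic difference is that you present this as a direct iteration over illegal triplets, whereas the paper phrases it as a minimality/descent argument over the number of illegal orientations, but these are the same argument.
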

\begin{proof}
    \noindent $\ndim(H_3(V)) \ge |V|-2$: follows from Lemma~\ref{lem:natarajan_lb}.

    \vspace{1em}
    \noindent $\ndim(H_3(V)) \le |V|-2$: For contradiction, assume that there exists an N-shattered set of triplets $\Delta = \{t_i\}_{i=1}^m$ of size $m \ge |V|-1$.
    Let $\orient{t_i}^{(1)}$ and $\orient{t_i}^{(2)}$ be the orientations from the definition of N-shattering for each $t_i$.
    By Theorem~\ref{thm:triplets}, there exists a contradictory orientation $\orient{\Delta}^* = \{\orient{t_i}^*\}_{i=1}^k$.
    If for all $i$ we have $\orient{t_i}^* = \orient{t_i}^{(1)}$ or $\orient{t_i}^* = \orient{t_i}^{(2)}$, then we have a contradiction: the set is not N-shattered since there exists $f:\ [k] \to [2]$ such that the orientation $\bbr{\orient{t_i}^{(f(i))}}_{i=1}^m = \orient{\Delta}^*$ is contradictory.
    
    It remains to consider the case when for some $t_i \in \Delta$ we have $\orient{t_i}^* \ne \orient{t_i}^{(1)}$ and $\orient{t_i}^* \ne \orient{t_i}^{(2)}$.
    Among all contradictory orientations we consider $\orient{\Delta}^*$ which has the smallest number of such $t_i$.
    Let $S$ be the critical set.
    We denote $\orient{t_i}^* = \C abc$, $\orient{t_i}^{(1)} = \C acb$ and $\orient{t_i}^{(2)} = \C bca$.
    Constraint $\C abc$ generates edge $(a,b)$.
    If $(a,b,c)$ doesn't belong to $\Delta|_S$ or removing edge $(a,b)$ doesn't disconnect $S$, then the orientation of this edge is not important and we can change the triplet orientation arbitrarily to $\orient{t_i}^{(1)}$ or $\orient{t_i}^{(2)}$.
    
    Otherwise, removing this edge partitions $S$ into two connected components $U$ and $V$ such that $a \in U$ and $b \in V$.
    However, we can restore connectivity using $\orient{t_i}^{(1)}$ or $\orient{t_i}^{(2)}$:
    if $c \in U$, then $b$ and $c$ are in different connected components, and we can connect $S$ using edge generated by $\C bca$.
    Similarly, if $c \in V$, then $a$ and $c$ are in different connected components, and we connect $S$ using $\C acb$.
    
    In both cases, we reorient $t_i$ according to either $\orient{t_i}^{(1)}$ or $\orient{t_i}^{(2)}$ while maintaining connectivity of $S$ and not changing orientations of other triplets.
    Hence, we have contradiction with the assumption that $\orient{t_i}^*$ has smallest number of $t_i$ such that $\orient{t_i}^* \ne \orient{t_i}^{(1)}$ and $\orient{t_i}^* \ne \orient{t_i}^{(2)}$
    Hence such $t_i$ doesn't exist and $\Delta$ is not N-shattered.
\end{proof}

\begin{theorem}
    \label{thm:natarajan_nonbinary}
    Let $H_3^*$ be defined identically to $H_3$ (Definition~\ref{def:hypothesis}) with the only change that the ground-truth trees don't have to be binary.
    Then $\ndim(H_3^*(V)) = |V|-2$.
\end{theorem}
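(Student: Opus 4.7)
The lower bound $\ndim(H_3^*(V)) \ge |V|-2$ is immediate from Lemma~\ref{lem:natarajan_lb}: the shattering certificate there uses only binary orientations, which remain valid hypotheses in the broader class $H_3^*$. So the work is entirely in the upper bound. The plan is to mimic the proof of Theorem~\ref{thm:natarajan}, but with the contradictory-orientation existence result replaced by its non-binary analogue (Lemma~\ref{lem:nonbinary:closed_is_contradictory}) and with an extended case analysis that accounts for the fourth orientation $\TW abc$.

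Assume for contradiction that a set $\Delta = \{t_i\}_{i=1}^{m}$ with $m \ge |V|-1$ is N-shattered, with chosen allowed orientations $\orient{t_i}^{(1)}, \orient{t_i}^{(2)}$ per triplet. Each $\orient{t_i}^{(j)}$ is one of the four possibilities $\C{a_i}{b_i}{c_i}$, $\C{a_i}{c_i}{b_i}$, $\C{b_i}{c_i}{a_i}$, $\TW{a_i}{b_i}{c_i}$. By Theorem~\ref{thm:triplets} (whose closed-set characterization extends to the non-binary case via Lemma~\ref{lem:nonbinary:closed_is_contradictory}), there exists a contradictory orientation $\orient{\Delta}^*$ of $\Delta$. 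Among all such $\orient{\Delta}^*$, pick one that minimizes the number of triplets $t_i$ for which $\orient{t_i}^*\notin \{\orient{t_i}^{(1)},\orient{t_i}^{(2)}\}$. If this number is zero we obtain the desired contradiction with N-shattering. Otherwise, fix a bad triplet $t_i = \{a,b,c\}$ and the witnessing closed set $S$ with its $t$-extension chain $\mathcal E_0 = E_S, \mathcal E_1, \ldots, \mathcal E_k$ connecting $S$. The goal is to re-orient $t_i$ to an allowed orientation while preserving closedness of $S$, thereby reducing the count of bad triplets and contradicting minimality.

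The re-orientation splits into cases depending on the form of $\orient{t_i}^*$ and of the two allowed orientations. If $\orient{t_i}^* = \C abc$ and both allowed orientations are binary, the reasoning of Theorem~\ref{thm:natarajan} applies verbatim: either removing $(a,b)$ leaves $S$ connected (through the remaining edges and the same chain of $t$-extensions), in which case any reorientation works, or it disconnects $S$ into two components and $c$ lies in one of them, allowing reorientation to $\C bca$ or $\C acb$ to restore connectivity. If $\orient{t_i}^* = \C abc$ and one of the two allowed orientations is $\TW abc$, we reorient to $\TW abc$: because $a$ and $b$ were connected through $(a,b)$, and we can rebuild a path between them from the remaining edges after appealing to the minimality of $S$ (as in the proof of Theorem~\ref{thm:crit_set_is_a_closed_set}), the $t$-extension using $\TW abc$ becomes applicable and reintroduces the triangle on $\{a,b,c\}$, which contains edge $(a,b)$. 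Finally, if $\orient{t_i}^* = \TW abc$, then $\TW abc$ must not be allowed (else nothing to do) and at least two of the binary orientations are allowed; here we replace $\TW abc$ by the binary orientation whose generated edge is the unique edge of the triangle $\{(a,b),(a,c),(b,c)\}$ that actually lies on a connecting path in the $t$-extension chain, and reason that downstream $t$-extensions that depended on the dropped edges still fire because at least one of their qualifying pairs remains connected within $S$.

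\textbf{The main obstacle} is the third case above. A single $\TW abc$ constraint can, via a $t$-extension, inject up to three edges simultaneously, and subsequent $t$-extensions $\mathcal E_{j+1}, \ldots, \mathcal E_k$ in the closure chain may depend on those specific edges. Replacing $\TW abc$ by a binary orientation supplies only one edge, which could in principle unravel the entire chain and destroy the closedness of $S$. The key lemma I will need is that within a critical (hence minimal) closed set, such cascades are bounded: one can re-build a valid $t$-extension chain for $S$ using only the single retained edge plus the already-present edges of $\mathcal E_0,\dots,\mathcal E_k$, perhaps reordered. This is plausible because minimality of $S$ forbids a strictly smaller closed subset, so the failure of closedness after the swap would exhibit a proper subset with enough internal structure to be critical — a contradiction with the minimality of $S$. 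Establishing this localized stability of the extension chain is the only non-routine step; everything else parallels Theorem~\ref{thm:natarajan}.
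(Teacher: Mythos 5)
Your proof plan correctly disposes of the lower bound and correctly identifies where the difficulty lies, but it leaves a genuine gap and, more importantly, misses the key observation that makes the paper's argument tractable.

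The obstacle you flag — the case $\orient{t_i}^* = \TW{a}{b}{c}$, where removing the three edges injected by a $t$-extension could unravel the entire extension chain — never arises in the paper's proof, and this is not an accident. The paper does \emph{not} start from a generic contradictory orientation obtained via Lemma~\ref{lem:nonbinary:closed_is_contradictory} (which could involve $\TW$ constraints). Instead it applies Theorem~\ref{thm:crit_set_is_a_closed_set} directly: since $\Delta$ has at least $|V|-1$ triplets, there is a contradictory orientation that uses \emph{only} binary constraints of the form $\C abc$ and that connects the critical set by ordinary edges, with no $t$-extensions needed at all. Consequently every "bad" triplet has current orientation $\C{a_i}{b_i}{c_i}$, never $\TW{a_i}{b_i}{c_i}$, and your hardest case is eliminated by the choice of starting point rather than by any "localized stability of the extension chain" lemma (which you correctly sense would be delicate; you do not supply it, and the proposal is therefore incomplete as written).

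With that reduction, the remaining work is to reorient each bad $\C{a_i}{b_i}{c_i}$ to one of the two allowed labels, exactly as in Theorem~\ref{thm:natarajan}, except that one of the allowed labels may now be $\TW{a_i}{b_i}{c_i}$. Your sketch for this ("reorient to $\TW abc$; since two of $a,b,c$ remain connected, the $t$-extension fires") handles one constraint in isolation but ignores the interaction among several such constraints: reorienting one may destroy exactly the connecting path another one needed to fire its $t$-extension. The paper resolves this with a three-case analysis — (1) some such reorientation to the binary allowed label preserves connectivity outright; (2) some such triplet has no other "Type 3" edge on its $a$-to-$c$ path, so its $t$-extension is safe even after all the others move; (3) otherwise one gets a dependency cycle among Type 3 constraints, and a minimum-length cycle can be reoriented wholesale while preserving connectivity. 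Your proposal does not address this cycle case at all. So there are two gaps: the unproved stability lemma in your third case (which the paper sidesteps entirely by choosing a binary contradictory orientation), and the missing cycle argument needed to make the remaining $\TW$-as-allowed-label case go through.
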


\begin{proof}
    $ $
    \paragraph{$\ndim(H_3^*(V)) \ge |V|-2$:} since the possible labels for $H_3$ is a subset of labels for $H_3^*$, the lower bound on $\ndim(H_3^*)$ follows from the lower bound on $\ndim(H_3)$.
    
    \paragraph{$\ndim(H_3^*(V)) \le |V|-2$:} To simplify the presentation, w.l.o.g. we assume that $V$ is the critical set.
    From Theorem~\ref{thm:crit_set_is_a_closed_set}, we know that for any set of triplets $\Delta$ with $|\Delta| \ge |V| - 1$ there exists a contradictory orientation $\orient{\Delta}$ which uses only constraints of form $\C abc$. 
    
    Towards contradiction, let's assume that $\Delta$ can be N-shattered using $H_3^*$.
    For every triplet $t_i = (a_i, b_i, c_i)$ from $\Delta$, let $\orient{t_i}^{(1)}$ and $\orient{t_i}^{(2)}$ be allowed orientations from the definition of N-shattering.
    We show that we can reorient every constraint according to $o^{(1)}_i$ or $o^{(2)}_i$ while maintaining the connectivity of $V$.

    Let $\orient{\Delta} = \bbr{\orient{t_1}, \ldots, \orient{t}_{|V|-1}} = \bbr{\C{a_i}{b_i}{c_i}}_{i=1}^{|V|-1}$ be the contradictory orientation of $\Delta$ according to Theorem~\ref{thm:crit_set_is_a_closed_set}.
    We process the constraints depending on types of $\orient{t_i}^{(1)}$ and $\orient{t_i}^{(2)}$ in the following order:\todo{Finish}
    \begin{enumerate}
        \item Constraints for which $\orient{t_i} = \orient{t_i}^{(1)}$ or $\orient{t_i} = \orient{t_i}^{(2)}$~-- these constraints use labels from the definition of N-shattering, and hence we don't reorient them.
        \item Constraints $\orient{t_i} = \C{a_i}{b_i}{c_i}$ for which $\orient{t_i}^{(1)} = \C{a_i}{c_i}{b_i}$ and $\orient{t_i}^{(2)} = \C{b_i}{c_i}{a_i}$.
        We reorient all such constraints without breaking connectivity, see Theorem~\ref{thm:natarajan}.
        \item Constraints for which $\orient{t_i}^{(1)} = \C{a_i}{c_i}{b_i}$ and $\orient{t_i}^{(2)} = \TW{a_i}{b_i}{c_i}$.
        We say that such constraints are of Type 3 and perform case analysis on such constraints:
    \end{enumerate}

    \textbf{Case 1:} If there exists a constraint such that reorienting it using $\orient{t_i}^{(1)}$ doesn't break connectivity, then we reorient the triplet using $\orient{t_i}^{(1)}$.
    
    In the remaining cases, we assume that using $\orient{t_i}^{(1)}$ breaks connectivity for all $i$, i.e. for each Type 3 constraint $\C{a_i}{b_i}{c_i}$, reorienting it as $\C{a_i}{c_i}{b_i}$ breaks connectivity.
    In particular, it means that $a_i$ and $c_i$ are in the same component after removing edge $(a_i,b_i)$.
    
    \textbf{Case 2} (Figure~\ref{fig:type3_no_between}): There exist constraint $\C{a_{i^*}}{b_{i^*}}{c_{i^*}}$ such that there is no edge from other Type 3 constraint on the path from $a_{i^*}$ to $c_{i^*}$.
    Then we reorient the triplet as $\orient{t_{i^*}}^{(2)}$, i.e. $\TW{a_{i^*}}{b_{i^*}}{c_{i^*}}$.
    Note that $a_{i^*}$ and $c_{i^*}$ are connected, and, since there is no edge from other Type 3 constraints between $a_{i^*}$ and $c_{i^*}$, they will always remain connected, and hence we can perform $t$-extension using $\TW{a_{i^*}}{b_{i^*}}{c_{i^*}}$.
    Again, after the $t$-extension, $V$ remains connected.
    \begin{figure*}[t!]
        \centering
        \begin{subfigure}[t]{\ifarxiv{\columnwidth}\else{\columnwidth}\fi}
            \centering
\begin{tikzext}
\begin{tikzpicture}[every node/.style={inner sep=2pt}]
    \node (a) at (0,0) {$a_{i^*}$};
    \node (b) at (-0.95, -0.5) {$b_{i^*}$};
    \node (c) at (0, -1) {$c_{i^*}$};


    \draw[blue, decorate, decoration={snake, amplitude=0.5mm}] ($(c) + (0.2,0)$) arc(-90:90:2.15 and 0.5);

    \draw[red, thick] (a) -- (b);
    \draw[dashed, red, thin] (a) -- (c) -- (b);

    \draw[line width=12pt,my triangle,
        postaction={draw,white,line width=10pt,my triangle,shorten >=2pt,shorten <=1pt}](3,-0.5) -- (5,-0.5);
    
    \node (a2) at (6.8,0) {$a_{i^*}$};
    \node (b2) at (6.8 - 0.95, -0.5) {$b_{i^*}$};
    \node (c2) at (6.8, -1) {$c_{i^*}$};

    \draw [rounded corners=10pt] ($(a2)+(0.3,0.5)$) -- ($(c2)+(0.3,-0.5)$) -- ($(b2)+(-0.5,0)$) -- cycle;

     \draw[blue, decorate, decoration={snake, amplitude=0.5mm}] ($(c2) + (0.2,0)$) arc(-90:90:2.15 and 0.5);
     \draw[ForestGreen, thick] (a2) -- (c2) -- (b2) -- (a2);
\end{tikzpicture}
\end{tikzext}
            \caption{Case 2: for a Type 3 constraint $\C{a_{i^*}}{b_{i^*}}{c_{i^*}}$, if there is no edge from other Type 3 constraint on the path from ${a_{i^*}}$ to ${c_{i^*}}$, then we reorient the constraint as $\TW{a_{i^*}}{b_{i^*}}{c_{i^*}}$}
            \label{fig:type3_no_between}
        \end{subfigure}

        \vspace{1em}
        \begin{subfigure}[t]{\ifarxiv{\columnwidth}\else{\columnwidth}\fi}
            \centering
            \newcommand{\typeTreeEdgeLength}{1.2cm}
\begin{tikzpicture}[every node/.style={inner sep=1pt}]
    \node (o) at (0,0) {};

    \node (a1) at ({cos(30) * \typeTreeEdgeLength}, {sin(30) * \typeTreeEdgeLength}) {$a_1$};
    \node (b1) at ({cos(30) * 2 * \typeTreeEdgeLength}, {sin(30) * 2 * \typeTreeEdgeLength}) {$b_1$};
    \node (c3) at ({cos(30) * 3 * \typeTreeEdgeLength}, {sin(30) * 3 * \typeTreeEdgeLength}) {$c_3$};
    \draw[thick] (o) -- (a1);
    \draw[red, thick] (a1) -- (b1);
    \draw[thick] (b1) -- (c3);

    \node (a2) at ({cos(-90) * \typeTreeEdgeLength}, {sin(-90) * \typeTreeEdgeLength}) {$a_2$};
    \node (b2) at ({cos(-90) * 2 * \typeTreeEdgeLength}, {sin(-90) * 2 * \typeTreeEdgeLength}) {$b_2$};
    \node (c1) at ({cos(-90) * 3 * \typeTreeEdgeLength}, {sin(-90) * 3 * \typeTreeEdgeLength}) {$c_1$};
    \draw[thick] (o) -- (a2);
    \draw[red, thick] (a2) -- (b2);
    \draw[thick] (b2) -- (c1);
    
    \node (a3) at ({cos(150) * \typeTreeEdgeLength}, {sin(150) * \typeTreeEdgeLength}) {$a_3$};
    \node (b3) at ({cos(150) * 2 * \typeTreeEdgeLength}, {sin(150) * 2 * \typeTreeEdgeLength}) {$b_3$};
    \node (c2) at ({cos(150) * 3 * \typeTreeEdgeLength}, {sin(150) * 3 * \typeTreeEdgeLength}) {$c_2$};
    \draw[thick] (o) -- (a3);
    \draw[red, thick] (a3) -- (b3);
    \draw[thick] (b3) -- (c2);

    \draw[dashed, red, thin] (a1) -- (c1) -- (b1);
    \draw[dashed, red, thin] (a2) -- (c2) -- (b2);
    \draw[dashed, red, thin] (a3) -- (c3) -- (b3);
    
\end{tikzpicture}
\begin{tikzpicture}
    \draw[line width=12pt,my triangle,
        postaction={draw,white,line width=10pt,my triangle,shorten >=2pt,shorten <=1pt}](0,0) -- (2,0);    
    \node at (0, -3) {};
\end{tikzpicture}
\begin{tikzpicture}[every node/.style={inner sep=0pt}]
    \node (o) at (0,0) {};

    \node (a1) at ({cos(30) * \typeTreeEdgeLength}, {sin(30) * \typeTreeEdgeLength}) {$a_1$};
    \node (b1) at ({cos(30) * 2 * \typeTreeEdgeLength}, {sin(30) * 2 * \typeTreeEdgeLength}) {$b_1$};
    \node (c3) at ({cos(30) * 3 * \typeTreeEdgeLength}, {sin(30) * 3 * \typeTreeEdgeLength}) {$c_3$};
    \draw[thick] (o) -- (a1);
    \draw[thick] (b1) -- (c3);

    \node (a2) at ({cos(-90) * \typeTreeEdgeLength}, {sin(-90) * \typeTreeEdgeLength}) {$a_2$};
    \node (b2) at ({cos(-90) * 2 * \typeTreeEdgeLength}, {sin(-90) * 2 * \typeTreeEdgeLength}) {$b_2$};
    \node (c1) at ({cos(-90) * 3 * \typeTreeEdgeLength}, {sin(-90) * 3 * \typeTreeEdgeLength}) {$c_1$};
    \draw[thick] (o) -- (a2);
    \draw[thick] (b2) -- (c1);
    
    \node (a3) at ({cos(150) * \typeTreeEdgeLength}, {sin(150) * \typeTreeEdgeLength}) {$a_3$};
    \node (b3) at ({cos(150) * 2 * \typeTreeEdgeLength}, {sin(150) * 2 * \typeTreeEdgeLength}) {$b_3$};
    \node (c2) at ({cos(150) * 3 * \typeTreeEdgeLength}, {sin(150) * 3 * \typeTreeEdgeLength}) {$c_2$};
    \draw[thick] (o) -- (a3);
    \draw[thick] (b3) -- (c2);

    \draw[green, thick] (a1) -- (c1);
    \draw[green, thick] (a2) -- (c2);
    \draw[green, thick] (a3) -- (c3);
    
\end{tikzpicture}
            \caption{Case 3: when we have constraints $\C{a_1}{b_1}{c_1}, \ldots, \C{a_\ell}{b_\ell}{c_\ell}$ such that edge $(a_{i+1}, b_{i+1})$ separates $a_i$ from $c_i$ (we equalize $\ell+1$ with $1$), we can reorient all constraints as $\C{a_i}{c_i}{b_i}$ while preserving connectivity.
            Note that if, for example, $a_1$ and $b_1$ are swapped, then it would fit under Case 1: reorienting $\C{a_1}{b_1}{c_1}$ as $\C{a_1}{c_1}{b_1}$ alone would preserve connectivity.
            }
            \label{fig:type3_between}
        \end{subfigure}
        \caption{Case analysis of Type 3 constraints in Theorem~\ref{thm:natarajan_nonbinary}}
    \end{figure*}
    
    \textbf{Case 3} (Figure~\ref{fig:type3_between}): For each constraint $\C {a_i}{b_i}{c_i}$, there exists another constraint $\C{a_j}{b_j}{c_j}$ such that edge $(a_j,b_j)$ is on the path from $a_i$ to $c_i$. Consider a graph on these constraints, where there exists an edge $i \to j$ when $(a_j,b_j)$ is on the path from $a_i$ to $c_i$. Since Case 2 is not realized, the graph has a cycle $i_1 \to i_2 \to \ldots \to i_\ell \to i_1$.
    Among such cycles, we consider the cycle with the smallest length.
    To simplify the notation, we assume that the cycle is exactly $1 \to 2 \to \ldots \to \ell \to 1$ and equalize $\ell+1$ with $1$.
    
    The important observation is that, for every $i$, among the edges in the cycle, only $(a_{i + 1}, b_{i+1})$ cuts $a_i$ from $c_i$.
    Otherwise, if there exists another $j$ with $(a_j, b_j)$ cutting $a_{i}$ from $c_{i}$, we can remove $i+1, \ldots, j-1$ from the cycle, hence reducing its size and contradicting its minimality.
    Hence, one of the following holds regardless of the orientation of the triplets in the cycle (importantly, since Case 1 is not realized, $a_i$ and $c_i$ are in the same connected component after removing edge $(a_i,b_i)$):
    \begin{itemize}
        \item $a_{i}$ is connected to $a_{i+1}$ and $c_{i}$ is connected to $b_{i+1}$.
        \item $a_{i}$ is connected to $b_{i+1}$ and $c_{i}$ is connected to $a_{i+1}$.
    \end{itemize}
    We reorient the constraints in the cycle using the corresponding $\orient{t_{i}}^{(1)}$, i.e. $\C{a_{i}}{b_{i} }{c_{i}} \to \C{a_{i}}{c_{i}}{b_{i}}$ for all $s$.
    Since after the reorientation $a_{i}$ is connected to $c_{i}$, in both cases, $a_{i}$ is connected to both $a_{i+1}$ and $b_{i+1}$.
    Hence, all of $a_{1}, \ldots, a_{\ell}$ and $b_{1}, \ldots, b_{\ell}$ are connected after the reorientation, and hence $V$ remains connected.
    
    As long as a Type 3 constraint exists, we can apply one of Cases 1, 2 or 3, reducing the number of such constraints.
    Hence, after at most $|V|$ reorientations, all constraints are oriented in accordance to the definition of N-shattering, concluding the proof.
\end{proof}
\todo{Pictures}
From the above theorem, the main result follows.
\begin{theorem}
    \label{thm:non-binary}
    For a constant $k$, the sample complexity of learning non-binary hierarchically labeled $k$-tuples, denoted by $m_{H_k^*}^r(\eps, \delta)$ in the realizable setting and $m_{H_k^*}^a(\eps, \delta)$ in the agnostic setting, is bounded by:
    \begin{align*}
        C_1 \frac{n + \log \frac 1\delta}{\eps} \le m_{H_k^*}^r(\eps, \delta) & \le C_2 \frac{n \log \frac 1\eps + \log \frac 1\delta}{\eps} \\
        C_1 \frac{n + \log \frac 1\delta}{\eps^2} \le m_{H_k^*}^a(\eps, \delta) & \le C_2 \frac{n + \log \frac 1\delta}{\eps^2}
    \end{align*}
\end{theorem}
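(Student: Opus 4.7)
The plan is to establish $\ndim(H_k^*(V)) = \Theta(|V|)$ for constant $k$ and then apply Lemma~\ref{lem:natarajan_samples}; since $|Y|$ for $H_k^*$ is a constant depending only on $k$ (the number of non-binary tree topologies on $k$ leaves), the $\log|Y|$ factors are absorbed into the constants. This precisely mirrors how Theorem~\ref{thm:natarajan_main} follows from Theorem~\ref{thm:natarajan} and Lemma~\ref{lem:natarajan_lb}, so the work is concentrated in lifting the Natarajan-dimension bounds to the non-binary setting.

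For the lower bound $\ndim(H_k^*(V)) \ge |V|-k+1$, the construction of Lemma~\ref{lem:natarajan_lb} transfers verbatim: the N-shattered witness it exhibits uses only orientations realizable by binary trees, and binary trees are a special case of non-binary trees, so the same $k$-tuples remain N-shattered with respect to $H_k^*$.

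For the upper bound, I would combine Theorem~\ref{thm:natarajan_nonbinary} (which gives $\ndim(H_3^*(V)) = |V|-2$) with a non-binary extension of Lemma~\ref{lem:tuple_to_tiplet}. Given a $k$-tuple $(a_1,\ldots,a_k)$, the $k-2$ triplets $\{(a_1,a_2,a_t)\}_{t=3}^k$ remain ``independent'' even when each triplet may additionally take the orientation $\TW{a_1}{a_2}{a_t}$: the realizing non-binary tree is a mild modification of Figure~\ref{fig:tuple_tree}, partitioning $\{a_3,\ldots,a_k\}$ into the groups $S_o,S_1,S_2$ as before plus a fourth group $S_3 = \{a_t : \TW{a_1}{a_2}{a_t}\}$, and building a tree whose top split separates $T(S_o)$ from the rest, below which an internal node has children $\{a_1\}\cup S_2$, $\{a_2\}\cup S_1$, and one separate child for each element of $S_3$. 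A direct check of LCAs verifies every prescribed triplet orientation. With this lemma, the binary $k$-tuple reduction of Theorem~\ref{thm:tuples} carries over unchanged, and $\ndim(H_k^*(V)) \le |V|-2$ follows from Theorem~\ref{thm:natarajan_nonbinary} just as the binary corollary in Section~\ref{sec:pac} follows from Theorem~\ref{thm:natarajan}: an N-shattered set of $k$-tuples of size $\ge |V|-1$ would, via the reduction and a distinguishing-triplet argument, produce an N-shattered triplet set for $H_3^*$ of size $\ge |V|-1$, contradicting Theorem~\ref{thm:natarajan_nonbinary}.

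The main obstacle is a careful bookkeeping step: when the contradictory orientation handed back by Theorem~\ref{thm:natarajan_nonbinary} assigns a triplet a label outside the two allowed by the $k$-tuple N-shattering, I need to invoke the reorientation arguments from Cases 1-3 of the proof of Theorem~\ref{thm:natarajan_nonbinary} to replace that label with an allowed one while preserving both the closed-set connectivity and any $t$-extensions built from $\TW$-constraints. Once the Natarajan-dimension bound $\ndim(H_k^*(V)) = \Theta(|V|)$ is in place, substitution into Lemma~\ref{lem:natarajan_samples} yields the stated bounds on $m_{H_k^*}^r(\eps,\delta)$ and $m_{H_k^*}^a(\eps,\delta)$ in the realizable and agnostic cases respectively.
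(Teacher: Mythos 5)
Your high-level plan — bound $\ndim(H_k^*(V))$ by $\Theta(|V|)$ and then apply Lemma~\ref{lem:natarajan_samples} with $|Y|$ a $k$-dependent constant — is exactly what the paper does; the paper's own proof of Theorem~\ref{thm:non-binary} is the single line ``From the above theorem, the main result follows,'' leaving precisely these details to the reader. Your treatment of the lower bound is right: the witness from Lemma~\ref{lem:natarajan_lb} uses only binary orientations, and binary trees form a subclass of non-binary trees, so the same set is N-shattered under $H_k^*$.

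For the upper bound, however, you have imported more machinery than the argument needs, and part of it points in the wrong direction. The cleanest step is a ``distinguishing triplet'' reduction that uses Theorem~\ref{thm:natarajan_nonbinary} strictly as a black box: suppose $t_1,\dots,t_m$ is N-shattered in $H_k^*$ with witness orientations $\orient{t_i}^{(1)}\neq\orient{t_i}^{(2)}$. Since a (possibly non-binary) tree on $k$ leaves is determined by its induced triplet orientations including the $\TW abc$ labels, for each $i$ there is a triplet $\Delta_i\subseteq t_i$ on which $\orient{t_i}^{(1)}$ and $\orient{t_i}^{(2)}$ induce different labels $f_1^{(i)}\neq f_2^{(i)}$. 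Any hierarchical tree realizing $\{\orient{t_i}^{(1)}\}_{i\in R}\cup\{\orient{t_i}^{(2)}\}_{i\notin R}$ also realizes the induced labels on $\{\Delta_i\}$, so $\{\Delta_i\}$ is N-shattered in $H_3^*$; and the $\Delta_i$ are automatically pairwise distinct (if $\Delta_i=\Delta_j$ with $i\neq j$, taking $R=\{i,j\}$ gives $f_1^{(i)}=f_1^{(j)}$ and taking $R=\{i\}$ gives $f_1^{(i)}=f_2^{(j)}$, forcing $f_1^{(j)}=f_2^{(j)}$, a contradiction). Hence $\ndim(H_k^*(V))\le\ndim(H_3^*(V))=|V|-2$. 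This means you do not need a non-binary version of Lemma~\ref{lem:tuple_to_tiplet} at all: that lemma and Theorem~\ref{thm:tuples} are about when a contradictory orientation exists among \emph{all} $3^m$ (or $4^m$) choices, which is a strictly weaker question than bounding the Natarajan dimension. Likewise the ``main obstacle'' you flag — re-running the Cases 1–3 reorientation bookkeeping — does not arise: Theorem~\ref{thm:natarajan_nonbinary} has already absorbed that work internally, and the distinguishing-triplet reduction lets you invoke it without reopening its proof. With $\ndim(H_k^*(V))=\Theta(n)$ and $|Y|=O_k(1)$ in hand, substituting into Lemma~\ref{lem:natarajan_samples} yields the claimed bounds as you say.
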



\section{Littlestone Dimension and Online Learning}
\label{sec:littlestone}

In this section, we show tight learning bounds of hierarchically labeled $k$-tuples for any constant integer $k > 0$. These bounds also hold for non-binary hierarchical trees.

Throughout this section, we consider an online learning setting as in~\citet{DanielySBS15}, defined as follows: in a series of rounds $\tau=1,\dots,T$, an online algorithm $\mathcal{A}$ receives in each round a $k$-tuple $t_\tau \in V^k$, and outputs an orientation $\orient{t}_\tau$ of $t_\tau$.
At the end of a round, a ``correct'' output $\orient{t}_\tau^*$ is revealed to the algorithm. The choice of $\orient{t}_\tau$ may only depend on $t_1,\dots,t_{\tau-1}$ and $\orient{t}_1^*,\dots,\orient{t}_{\tau-1}^*$. Our goal is to minimize the number of mistakes the algorithm makes $\err_T = \left|\left\{\tau \in {1,\dots,T} \mid \orient{t}_\tau \neq \orient{t}_\tau^*\right\}\right|$.

We say that a sequence is realizable if there is a hierarchical tree on $V$ satisfying all constraints $\orient{t}_1^*,\dots,\orient{t}_T^*$. In the agnostic (i.e. non-realizable setting), we say that the algorithm has regret $K$ if the number of mistakes it makes is $\mathrm{OPT}+K$ where $\mathrm{OPT}$ the minimum number of mistakes achieved by any tree on the sequence.

We give a tight $\Theta(n\log{n})$ bound on the multi-class Littlestone dimension, which was introduced by~\citet{DanielySBS15} as an extension of the binary-class Littlestone dimension~\citep{Littlestone87}.
Similarly to the Natarajan dimension, the notion of (multi-class) Littlestone dimension almost tightly characterizes the mistake and regret bounds of a hypothesis class in the online model, in both the realizable and agnostic setting.

\begin{definition}[Littlestone dimension]\label{def:littlestone}
    Let $V$ be an $n$ point set, and $k$ be some constant parameter.
    Let $L$ be a complete binary rooted tree such that each of its internal nodes is labeled by a $k$-tuple $t$ and each of its two edges to its children is labeled by a different orientation on $t$.
    
    We say that $L$ is Littlestone-shattered if for each path from the root to a leaf which traverses the nodes $t_1, \ldots, t_l$, there is a hierarchical tree on point set $V$ such that for each $\tau$, the label of the edge $(t_\tau,t_{\tau+1})$ is a constraint satisfied by the tree.
    The Littlestone dimension $\ldim$ of learning a hierarchical tree representation is the maximum depth of a full binary tree that is Littlestone-shattered (as a function of $n$).
\end{definition}
The connection between the Littlestone dimension and online learning is described in Theorem~\ref{thm:littlestone_to_sample}. We refer to~\citet[Section 5.1]{DanielySBS15} for a full survey of the multi-class Littlestone dimension.
\begin{theorem}[\citet{DanielySBS15}, Theorems 24, 25, 26, informal]
    \label{thm:littlestone_to_sample}
    In the realizable online setting, there exists an online algorithm that makes at most $\ldim$ mistakes on any realizable sequence.
    On the other hand, any randomized online algorithm has an expected number of mistakes on the worst sequence of at least $\nicefrac \ldim 2$.
    
    In the agnostic online setting, there is an algorithm with expected regret $O(\sqrt{T \cdot \ldim \cdot \log T})$, and any algorithm has expected regret at least $\Omega(\sqrt{T \cdot \ldim})$. 
\end{theorem}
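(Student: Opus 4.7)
The plan is to establish each of the four bounds—realizable upper, realizable lower, agnostic upper, agnostic lower—in turn, using the multi-class extensions of the classical Littlestone machinery from~\citet{DanielySBS15}.

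For the realizable upper bound, I would define the multi-class \emph{Standard Optimal Algorithm} (SOA): maintain the version space $H_\tau$ of hierarchical trees consistent with all revealed orientations so far. When $k$-tuple $t_\tau$ arrives, for each candidate orientation $o$ let $H_\tau^o \subseteq H_\tau$ be the subset whose trees satisfy $o$ on $t_\tau$, and output the orientation $\hat o$ maximizing $\ldim(H_\tau^o)$. The key lemma is that every mistake strictly decreases the Littlestone dimension of the version space: if the algorithm predicts $\hat o$ but truth is $o^* \neq \hat o$, then $\ldim(H_{\tau+1}) = \ldim(H_\tau^{o^*}) \le \ldim(H_\tau^{\hat o})$, and if equality held with $\ldim(H_\tau)$, one could graft a depth-$\ldim(H_\tau)$ shattered sub-tree of $H_\tau^{\hat o}$ onto the instance $t_\tau$ via two edges labeled $\hat o$ and $o^*$, producing a shattered tree of depth $\ldim(H_\tau)+1$, contradicting the definition. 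Summing over rounds yields the $\ldim$-mistake bound.

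For the realizable lower bound, fix a Littlestone-shattered complete binary tree $L$ of depth $d=\ldim$. The adversary walks $L$ from the root: at each internal node labeled by tuple $t$, present $t$, observe the learner's prediction $\hat o$, and descend into a child whose edge-label differs from $\hat o$; since the two edge-labels at any node are distinct, such a child always exists. By the shattering property, the sequence of edge-labels along the walk is realized by some tree on $V$, so the sequence is realizable and each of the $d$ rounds is a mistake. For a randomized learner, commit in advance to selecting between the two children uniformly at random at each node; since the two labels there are distinct, the learner is correct with probability at most $1/2$ per round, yielding the $\ldim/2$ expected-mistake bound.

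For the agnostic upper bound, I would use a Hedge ensemble of experts. Index experts by ``mistake patterns'' $M \subseteq [T]$ with $|M|\le d$ together with a choice of orientation at each round in $M$: expert $e_{M,\sigma}$ runs SOA, but on rounds $\tau\in M$ it overrides its output with $\sigma(\tau)$ and updates the version space as if SOA had erred. An induction shows that for every fixed tree $T^*$, some expert's predictions exactly match $T^*$'s orientations on all $T$ rounds, because $T^*$ causes SOA at most $d$ mistakes (by the realizable bound). The number of experts is $\sum_{j\le d}\binom{T}{j}\cdot |Y|^{d} = O((cT/d)^d)$, so $\log(\#\text{experts}) = O(d\log T)$. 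Running Hedge yields expected regret $O(\sqrt{T\log(\#\text{experts})}) = O(\sqrt{T\cdot\ldim\cdot\log T})$ against the best expert, hence against the best tree. For the agnostic lower bound, partition the $T$ rounds into $d$ equal blocks; walk a random root-to-leaf path of the shattered tree, holding the current node fixed throughout its block and returning a uniformly random one of the two edge-labels each round. Any learner incurs at least $T/2$ expected mistakes, while the tree along the chosen path agrees with the within-block majority, losing only $\Theta(\sqrt{T/d})$ per block by anti-concentration of a Binomial; summing over $d$ blocks gives $\Omega(\sqrt{T\cdot\ldim})$ expected regret.

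The main obstacle is the agnostic upper bound: constructing the expert ensemble so that both (i) some expert's prediction sequence exactly matches the hindsight-best tree, and (ii) the total expert count stays at $(cT/d)^d$. The realizable-to-agnostic conversion requires a careful induction arguing that SOA's ``forced-mistake'' variants collectively enumerate every prediction trace achievable by a tree in $H$; this is the technical heart of the Daniely--Sabato--Ben-David--Shalev-Shwartz framework and the place where polynomial dependence on $|Y|$ (constant for fixed $k$) enters.
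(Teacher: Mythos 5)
This theorem is imported from \citet{DanielySBS15} without proof; the paper uses it as a black box. Your reconstruction is essentially the standard argument from that reference (multi-class SOA for the realizable upper bound, adversarial walk of the shattered tree for the realizable lower bound, the Ben-David--P\'al--Shalev-Shwartz expert ensemble plus Hedge for the agnostic upper bound, and the block-majority anti-concentration argument for the agnostic lower bound), so there is nothing to contrast it against in the paper itself; the sketch is correct up to minor bookkeeping (for instance, the expert count should carry a $|Y|^j$ factor per subset rather than a flat $|Y|^d$, and the SOA key lemma needs both $\ldim(H_\tau^{\hat o})$ and $\ldim(H_\tau^{o^*})$ to equal $\ldim(H_\tau)$ before grafting gives the contradiction), none of which affect the stated bounds since $|Y|$ is constant for fixed $k$.
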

Hence, it suffices to bound $\ldim$.
\begin{lemma}
    \label{lem:littlestone_upper_bound}
    $\ldim = O(n \log n)$.
\end{lemma}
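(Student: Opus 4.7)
The plan is to upper bound the Littlestone dimension by a counting/potential argument: the number of hierarchical trees on $V$ is $2^{O(n\log n)}$, and each step down the shattered Littlestone tree (approximately) halves the set of trees consistent with the path so far.

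More concretely, let $\mathcal H(V)$ denote the set of all hierarchical trees on $V$, as in Definition~\ref{def:hc_tree}. A standard counting argument shows $|\mathcal H(V)| = (2n-3)!! \le n^n = 2^{O(n\log n)}$ (one may also obtain this bound by noting that each tree is determined by its pairing of $2n-1$ nodes with parents, giving at most $n!\cdot C_{n-1}$ choices, which is again $2^{O(n\log n)}$). Let $L$ be any Littlestone-shattered complete binary tree of depth $d$. For each node $v \in L$, define
\[
\mathcal{T}(v) = \left\{T \in \mathcal{H}(V) \;\middle|\; T \text{ satisfies every constraint labeling an edge on the root-to-}v\text{ path in }L\right\}.
\]
By Definition~\ref{def:littlestone}, $\mathcal{T}(v) \neq \emptyset$ for every leaf $v$ of $L$.

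The key structural observation is: if a node $v \in L$ has children $v_L, v_R$, the two edges $(v,v_L)$ and $(v,v_R)$ are labeled by two \emph{distinct} orientations of the same $k$-tuple $t$. Since any hierarchical tree $T$ can satisfy at most one orientation of $t$ (different orientations are mutually exclusive constraints), the sets $\mathcal{T}(v_L)$ and $\mathcal{T}(v_R)$ are disjoint subsets of $\mathcal{T}(v)$. Hence
\[
|\mathcal{T}(v_L)| + |\mathcal{T}(v_R)| \;\le\; |\mathcal{T}(v)|.
\]
Iterating from the root down to the $2^d$ leaves at depth $d$, and using the fact that each leaf contributes at least $1$ to the sum, we obtain
\[
2^d \;\le\; \sum_{v \text{ leaf}} |\mathcal{T}(v)| \;\le\; |\mathcal{T}(\text{root})| \;\le\; |\mathcal{H}(V)| \;=\; 2^{O(n\log n)}.
\]
Taking logarithms yields $d = O(n\log n)$, which is exactly the claimed bound on $\ldim$.

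The only nontrivial ingredient is the tree-count bound, but $|\mathcal H(V)| \le n^n$ is a classical fact (the double-factorial formula for the number of unordered labeled binary trees), and any crude upper bound of the form $2^{O(n\log n)}$ suffices for the asymptotic conclusion. For the $k$-tuple (and non-binary) extension mentioned earlier, the same argument applies verbatim: what matters is only that (i) $|\mathcal H(V)| = 2^{O(n\log n)}$ and (ii) a single tree cannot satisfy two distinct orientations of the same tuple, both of which remain true in the more general setting.
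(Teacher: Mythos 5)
Your proof is correct and is essentially the same argument as the paper's: both rest on the bound $|\mathcal{H}(V)| = 2^{O(n\log n)}$ together with the fact that distinct orientations of the same tuple cannot be simultaneously satisfied by one tree. The paper phrases it as an adversarial halving argument (follow the child whose constraint eliminates at least half the surviving trees, arriving at an empty set after $d$ steps), while you phrase it as a disjointness/packing argument summed over all $2^d$ leaves; these are two standard presentations of the same counting bound.
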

\begin{proof}
    Assume for contradiction that there is a Littlestone-shattered tree $L$ of depth $d=2n\log(2n)+3$. Let $\mathcal{G}$ be the set of all hierarchical trees on the point set $V$, and notice that $|\mathcal{G}| \leq 2^{2n\log(2n)+1}$.
    We show that there is a root-to-leaf path $(v_0, \ldots, v_d)$ such that no hierarchical tree satisfies the constraints of the edges of this path.
    We choose the vertices $v_0,\dots,v_d$ in an iterative manner, and let $\mathcal{G}_i$ be the set of hierarchical trees that satisfy the constraints of the edges in the sub-path $(v_0,\dots,v_\tau)$.
    
    Initially, let $v_0$ be the tree's root, and $\mathcal{G}_0 = \mathcal{G}$.
    Given a we have chosen $v_0,\dots,v_\tau$ to be in the path, we choose $v_{i+1}$ as follows.
    Among two constraints $(v_\tau, u^{(1)})$ and $(v_\tau, u^{(2)})$ on the edges from $v_\tau$ to its children, at least one of them satisfies at most half of trees from $\mathcal{G}_\tau$, since these constraints are mutually contradictory. We select the corresponding child as $v_{\tau+1}$.
    
    After $d$ steps, it holds that $|\mathcal{G}_d| < 1$, meaning $\mathcal{G}_d$ is empty.
    Hence, there is no hierarchical tree that satisfies first $d$ constraints of this path, a contradiction to $L$ being Littlestone-shattered.
\end{proof}


In the remainder of the section, we show the following lemma.

\begin{lemma}
    \label{lem:littlestone_lower_bound}
    $\ldim = \Omega(n \log n)$.
\end{lemma}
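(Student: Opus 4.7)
The plan is to exhibit a Littlestone-shattered full binary tree of depth $\Omega(n \log n)$ by a reduction from the decision-tree complexity of comparison-based sorting. Fix an arbitrary element $c^* \in V$ and set $W = V \setminus \{c^*\}$. Any correct comparison-based sorting algorithm on $W$ must distinguish all $(n-1)!$ orderings of $W$, so its decision tree $D$ is a full binary tree with at least $(n-1)!$ leaves, and therefore has depth at least $\lceil \log_2 (n-1)! \rceil = \Omega(n \log n)$ by Stirling's approximation. The construction will turn $D$ into a Littlestone-shattered tree $L$ by rewriting each comparison node as a triplet query and exhibiting a caterpillar witness tree for every root-to-leaf path.

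The construction proceeds as follows. First, relabel each internal node of $D$, which queries ``is $a < b$?'' for some $a, b \in W$, with the triplet $(a, b, c^*)$, labeling its two outgoing edges with the distinct orientations $\C{b}{c^*}{a}$ (associated with the outcome $a < b$) and $\C{a}{c^*}{b}$ (associated with $b < a$). Next, for each root-to-leaf path of the resulting tree $L$, observe that the sequence of edge labels along the path determines a total order $u_1 < u_2 < \cdots < u_{n-1}$ of $W$; let $T$ be the caterpillar tree whose leaves, read from the shallowest split down to the deepest leaf, are $u_1, u_2, \ldots, u_{n-1}, c^*$. A direct check then verifies that $T$ satisfies every constraint on the path: for any triplet $(u_i, u_j, c^*)$ with $i < j$, the caterpillar $T$ places $u_i$ as the odd one out since $\lca_T(u_j, c^*)$ is a strict descendant of $\lca_T(u_i, u_j) = \lca_T(u_i, c^*)$, which is exactly the constraint $\C{u_j}{c^*}{u_i}$ appearing on that edge.

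The main obstacle I anticipate is reconciling the three-valued space of triplet orientations with the binary structure required of the Littlestone tree, since a triplet admits three possible orientations while a Littlestone node has only two children. The trick is to place $c^*$ at the deep end of every witness caterpillar and to only query triplets that contain $c^*$: this forces the odd one out in every query to lie in $W$, turns each triplet query into an honest binary comparison on $W$, and ensures that the two edge labels $\C{b}{c^*}{a}$ and $\C{a}{c^*}{b}$ are genuinely distinct orientations of the queried triplet, as required by Definition~\ref{def:littlestone}. Combined with the $\Omega(n \log n)$ depth lower bound inherited from the sorting decision tree, this establishes $\ldim = \Omega(n \log n)$.
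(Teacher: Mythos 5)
Your reduction from comparison-based sorting is a genuinely different route from the paper's: the paper builds its Littlestone tree with an explicit, non-adaptive recursive tournament on disjoint $k$-tuples (Algorithm~\ref{alg:build_littlestone_tree}), whereas you lift an arbitrary sorting decision tree. The idea of anchoring every query on a pivot $c^*$ to turn triplet orientations into honest binary comparisons is a nice observation, and your caterpillar witness for a completed total order is correct.

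However, there is a genuine gap. The Littlestone dimension as used here (Definition~\ref{def:littlestone}; see also the paper's upper-bound argument in Lemma~\ref{lem:littlestone_upper_bound}, which explicitly walks a root-to-leaf path of length exactly $d$) requires a \emph{perfect} binary tree: every root-to-leaf path must have length $d$, and every internal node must have two children whose edge labels both admit a consistent hypothesis. The information-theoretic bound you invoke --- $\ge (n-1)!$ leaves, hence depth $\ge \log_2 (n-1)!$ --- bounds only the \emph{maximum} leaf depth of the sorting decision tree. An arbitrary correct comparison sorter can have shallow leaves: for instance, an adaptive algorithm that first runs a sortedness check terminates on the sorted input after only $n-2$ comparisons, so the corresponding leaf sits at depth $O(n)$. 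Truncating such a tree to a perfect tree yields depth $O(n)$, not $\Omega(n\log n)$. You also cannot pad a shallow branch with more queries, since once the total order on $W$ is determined, any triplet query $(a,b,c^*)$ has one orientation that is already forced and one that contradicts the constraints so far, violating shatterability on that edge.

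To close the gap you would need to pick a specific sorting algorithm whose decision tree has \emph{minimum} leaf depth $\Omega(n\log n)$ and in which no comparison is ever between already-comparable elements (so both children of every internal node are realizable). Bottom-up merge sort with balanced splits has both properties: at each level, merging two lists of size $m$ costs at least $m$ comparisons regardless of the outcomes, giving $\ge \frac{n-1}{2}\log(n-1)$ comparisons on every branch, and the merge comparisons are always between incomparable elements. Truncating that decision tree at its minimum leaf depth yields a perfect, shattered tree of depth $\Omega(n\log n)$, after which the rest of your argument goes through. The paper's tournament construction sidesteps this issue entirely by building a perfect tree by fiat: all nodes at a given layer query the same $k$-tuple, and the recursion into rank classes keeps every branch the same length.
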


\begin{proof}[Proof of Lemma~\ref{lem:littlestone_lower_bound}]
    Let $n'$ be the largest power of $k$ such that $n' \leq n$. We construct a Littlestone-shattered tree $L$ of depth $\frac{n'\log_k{n'}}{k} = \Omega(\frac{n\log_k{n}}{k^2})$, thus showing the lower bound on $\ldim$. Let $V' = \{x_1,\dots,x_{n'}\} \subseteq V$ be a set of $n'$ items of $V$.
    
\begin{definition}
    A tournament function $P$ is a function that given a set of points $X \subseteq V'$ whose size is a power of $k$, partitions $X$ into disjoint $k$-tuples.
\end{definition}

\begin{definition}
    For a set of data points $x_1,\dots,x_l$, a $(x_1,\dots,x_l)$-ladder is a tree shown in Figure~\ref{fig:ladder}.
    We say that $x_i$ has rank $i$ in the ladder.
\end{definition}
\begin{figure}[H]
    \centering
    \tikzset{
  solid node/.style={circle,draw,inner sep=1.2,fill=black},
}

\newcommand{\treeparams}{for tree={s sep=0.8cm,l-=1.5em,solid node}}
\bracketset{action character=@}

\begin{tikzext}
    \begin{forest}
        [ ,@\treeparams,
          [ ,label={below:$x_1$}]
          [
            [ ,label={below:$x_2$}]
            [
              [ ,label={below:$\cdots$}]
              [
                [ ,label={below:$x_{l-1}$} ]
                [ ,label={below:$x_{l}$} ]
              ]
            ]
          ]
        ];
    \end{forest}
\end{tikzext}
    \caption{$(x_1, \ldots, x_{l})$-ladder.}
    \label{fig:ladder}
\end{figure}
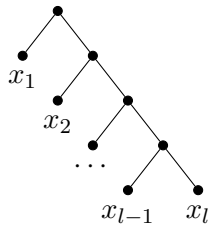

\begin{algorithm}[t!]
    \caption{\textsc{BuildLittleStoneTree}$(v, X_1, \ldots, X_\ell)$}
    \label{alg:build_littlestone_tree}
    \SetKwInOut{Input}{input}
    \SetKwInOut{Output}{output}
    \Input{Tree node $v$; partition of $V'$ into equal-sized sets $X_1^{(v)}, \ldots, X_\ell^{(v)}$, where $\ell$ is a power of $k$}
    \Output{A subtree of the current node in the Littlestone tree}
    \If{$|X_i^{(v)}| = 1$}{
        \Return single leaf node
    }
    Let $\{t_1, \ldots,t_{\nicefrac{n'}{k}}\} = P(X_1) \cup ... \cup P(X_\ell)$ \\
    Create a complete binary tree $\tilde L$ of depth $\nicefrac{n'}{k}$ as follows \\
    \For{$\tau=1, \ldots, \nicefrac{n'}{k}$}{
        Label every node on layer $\tau$ as $t_\tau$ \\
        Label outgoing edges from each node on layer $\tau$ as $D(t_\tau)$ and $U(t_\tau)$ \\
    }
    Let $\tilde L$ be the resulting tree \\
    \For{each leaf $u$ of $\tilde L$}{
        Initalize $X_1^{{(u)}}, \ldots, X_\ell^{{(u)}}$ as empty sets \\
        \For{each $x \in V'$} {
            Let $i \ge 0$ be unique index such that $t_\tau \in X_{i + 1}^{(v)}$ \\
            There exists a unique edge on the path from the root of $\tilde L$ to $u$ containing $x$ \\
            Let $r$ be the rank of $x$ in the label of this edge (recall that the edge label is a ladder) \\
            Let $j$ be $i \cdot k + r$ \\
            Assign $x$ to $X_j^{{(u)}}$
        }
        In $\tilde L$, replace $u$ with \textsc{BuildLittleStoneTree}$(X_1^{{(u)}}, \ldots, X_\ell^{{(u)}})$
    }
    \Return $\tilde L$
\end{algorithm}

Let $P$ be some arbitrary tournament function. Let $D$ be a function that maps any $k$-tuple $(x_1,\dots,x_k)$ to the $(x_1,\dots,x_k)$-ladder, where $x_1, \ldots, x_k$ are lexicographically ordered.
Similarly, let $U$ be a function that maps the set to the $(x_k,\dots,x_1)$-ladder (i.e. in the reversed lexicographic order). 

We describe the labeling of the Littlestone tree using a recursive process \textsc{BuildLittleStoneTree} (Algorithm~\ref{alg:build_littlestone_tree}), which receives as input subtree rooted $v$ whose edges and vertices are unlabeled, and a partition of $V'$ into disjoint sets $X_{1}^{(v)},\dots,X_{\ell}^{(v)}$ of equal size (which is a power of $k$).
\textsc{BuildLittleStoneTree} builds the Littlestone tree from top to bottom, by constructing next $\nicefrac{n'}{k}$ layers.
We initially run this procedure starting with $v$ being the root, $\ell=1$, and hence $X_1^{(v)} = V'$.
    
Let $v$ be a vertex on which a recursive call is made, and let $X_1^{(v)},\dots,X_\ell^{(v)}$ be the input partition.
If $|X_1^{(v)}| = \dots = |X_\ell^{(v)}| = 1$, then $u$ is the leaf of the Littlestone tree, and we halt.
Otherwise, we label the first $\nicefrac{n'}{k}$ layers as follows.
Let $t_\tau,\dots,t_{n'/k}$ be the $k$-tuples of $P(X_1^{(v)}) \cup ... \cup P(X_\ell^{(v)})$.
For $\tau=1,\dots,\nicefrac{n'}{k}$, we label all vertices of $\tau$'th layer in the subtree of $v$ with $t_\tau$, and its two edges with $D(t_\tau)$ and $U(t_\tau)$ respectively. 

Next, we need to call \textsc{BuildLittleStoneTree} from each node $u$ which is $\nicefrac{n'}{k}$ layers below $v$.
For that, we need to compute partition $X_{1}^{(u)}, \ldots, X_{1}^{(v)}$ and .
We need to assign each $x \in V'$ to some $X_i^{(u)}$, which we do as follows.
For each $x \in V'$ we compute the following quantities.
\begin{itemize}
    \item $\Ind^{(v)}(x)$ is the unique index $i \geq 0$ such that $x \in X_{i}^{(v)}$.
    \item On the path from $v$ to $u$, there exists a unique edge whose label (which is a ladder) contains $x$.
    Then $\Rank^{(v)}(x, u)$ is the rank of $x$ in that ladder.
\end{itemize} 
Then we assign $x$ to $X_j^{(u)}$, where $j = \Ind^{(v)}(x) \cdot k + \Rank^{(v)}(x, u)$.
In other word, we encode $j$ as a 2-digit number, where the highest digit equals $\Ind^{(v)}(x)$, and the lowest digit equals $\Rank^{(v)}(x, u)$.

\begin{lemma}
\label{lem:set_sizes}
Let $v$ be a node of depth $s \cdot \nicefrac{n'}{k}$ in $L$ for some integer $s$.
Then $|X_i^{(v)}| = \nicefrac{n'}{k^\tau}$ for any $1 \leq i \leq k^s$.
\end{lemma}
\begin{proof}
    We prove the claim by induction on $\tau$.
    For $s = 0$, the claim is trivial since for in root $r$ we have $X_1^{(r)} = V'$, and thus $|X_1^{(r)}| = n'$. 
    
    Assume by induction that the statement holds for $s < \log_k n'$
    Let $u$ be a node of depth $(s+1) \cdot \nicefrac{n'}{k}$, and let $v$ be its ancestor node in depth $s \cdot \nicefrac{n'}{k}$ in $L$.
    By the induction assumption, $X_i^{(v)} = \nicefrac{n'}{k^s}$ for any $1 \leq i \leq k^s$.
    First, note that elements from different $X_i^{(v)}$ are assigned to different $X_j^{(u)}$.
    
    It remains to consider elements from some fixed $X_i^{(v)}$.
    Each element $x \in X_i^{(v)}$ appears in exactly one ladder constraint in the path between $v$ and $u$.
    Aside from $x$, there are $k-1$ other items from $X_i^{(v)}$ in the same ladder, and each of them has a different rank.
    Therefore the number of elements from $X_i^{(v)}$ with $\Rank^{(v)}(x) = r$ is exactly $\frac{|X_j^{(v)}|}{k} = \frac{n'}{k^{s+1}}$ for any $1 \leq r \leq k$.
    Since elements from different $X_i^{(v)}$ are assigned to different sets $X_j^{(u)}$, the result follows.
\end{proof}

    
    
    

    \begin{lemma}
    \label{lem:online_order_preserve}
        Assume the label of some edge $(u,v)$ in $L$ is the $(x_1,\dots,x_k)$-ladder for some $x_1,\dots,x_k \in V'$. Then in any leaf descendant $w$ of $v$, and for any $1 \leq i < j \leq k$, it holds that $\Ind^{(w)}(x_{i}) < \Ind^{(w)}(x_{j})$.
    \end{lemma}
    \begin{proof}
        Let $v_1,\dots,v_\ell$ be the vertices on the path between between $v$ and $w$ (inclusive) on which \textsc{BuildLittleStoneTree} was called, i.e. those whose depth is a multiple of $\nicefrac{n'}{k}$).
        We prove by induction that for $1\leq s \leq \ell$ it holds that $\Ind^{(v_s)}(x_{i}) < \Ind^{(v_s)}(x_{j})$.
        
        Induction base.
        Let $p$ be the closest ancestor of $v$ (excluding $v$) on which a recursive call is made.
        We know that $\Ind^{(p)}(x_{i}) = \Ind^{(p)}(x_{j})$, since the ladder on edge $(u,v)$ contains both $x_i$ and $x_j$.
        Denoting $\Ind^* = \Ind^{(p)}(x_{i})$, we know that $x_{i}$ and $x_{j}$ are mapped to some $X_{k \cdot \Ind^* + r_1}$ and $X_{k \cdot \Ind^* + r_2}$ for some integers $1 \leq r_1 < r_2 \leq k$.
        Therefore, $\Ind^{(v_1)}(x_{i}) < \Ind^{(v_1)}(x_{j})$.
        
        Assume by induction that $\Ind^{(v_s)}(x_{i}) < \Ind^{(v_s)}(x_{j})$ for $s < \ell$.
        In the recursive call to $v_s$, every item $x$ in contained in a set $X_{k \cdot \Ind^{(v_s)}(x) + r}$ for some $1 \leq r \leq k$.
        Therefore, if $\Ind^{(v_s)}(x_{i}) < \Ind^{(v_s)}(x_{j})$ then $\Ind^{(v_{s+1})}(x_{i}) < \Ind^{(v_{s+1})}(x_{j})$. Since $v_\ell = w$, the claim follows.
        \end{proof}
    
    \begin{lemma}
    \label{lem:little_shattered}
        $L$ is Littlestone-shattered.
    \end{lemma}
    \begin{proof}
        Given a root-to-leaf path $v_1,\dots,v_\ell$, and assume that the leaf $v_l$ that was given partition $X_1^{(v_\ell)},\dots,X_{n'}^{(v_\ell)}$.
        Recall that by Lemma~\ref{lem:set_sizes}, any $X_i^{(v_\ell)}$ is of size $1$ for all $1 \leq i \leq n'$. Let $(x_1,\dots,x_{n'})$ be the points of $V'$ ordered such that $x_i$ is the unique item in $X_i^{(v_\ell)}$. 
        
        We show that a hierarchical tree defined as a $(x_1,\dots,x_{n'})$-ladder satisfies all constraints on the path $v_1,\dots,v_\ell$.
        Assume for contradiction that there are some $i < j$ such that $x_i$ and $x_j$ appear in a ladder constraint in one of the edges of the path $v_1,\dots,v_\ell$, such that $x_i$'s rank is larger than $x_j$.
        By Lemma~\ref{lem:online_order_preserve}, it holds that $j = \Ind^{(v_\ell)}(x_j)+1 < \Ind^{(v_\ell)}(x'_i)+1 = i$, leading to contradiction. 
    \end{proof}
    
        By construction, $L$ is of depth $\frac{n'\log_k{n'}}{k} = \Omega(\frac{n\log_k{n}}{k^2})$, and by Lemma~\ref{lem:little_shattered} $L$ is shattered, therefore $\ldim = \Omega(n\log{n})$.
        We conclude the proof of Lemma~\ref{lem:littlestone_lower_bound}.
    \end{proof}
    
    Since $\ldim = \Theta(n \log n)$, then Theorem~\ref{thm:online_main} follows immediately from Theorem 5.1, Theorem 5.2, and Theorem 5.3 of \cite{DanielySBS15}.
    
    \begin{theorem}[Formal version of Theorem~\ref{thm:online_main}]
    \label{thm:littlestone_main_formal}
    Let $k$ be any constant integer. In the realizable online setting, there exists an online algorithm that makes at most $O(n\log{n})$ mistakes on any realizable sequence. On the other hand, any randomized online algorithm has an expected number of mistakes on the worst sequence of at least $\Omega(n\log{n})$. In the agnostic online setting, there is an algorithm with expected regret $O(\sqrt{T \cdot n\log{n} \cdot \log (T)})$, and any algorithm has expected regret at least $\Omega(\sqrt{T \cdot n\log{n}})$ for a series of length $T$. 
\end{theorem}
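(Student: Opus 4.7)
The plan is to obtain Theorem~\ref{thm:littlestone_main_formal} by combining the tight Littlestone dimension bounds already established in Lemma~\ref{lem:littlestone_upper_bound} and Lemma~\ref{lem:littlestone_lower_bound} with the generic online-learning-to-dimension reduction of \citet{DanielySBS15} that we recorded as Theorem~\ref{thm:littlestone_to_sample}. Concretely, those two lemmas immediately give $\ldim(H_k) = \Theta(n \log n)$ for any constant $k$, so the entire proof reduces to plugging this value into each of the four bounds of Theorem~\ref{thm:littlestone_to_sample}.

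For the realizable half, I would first appeal to the realizable part of Theorem~\ref{thm:littlestone_to_sample}: instantiate the multi-class Standard Optimal Algorithm (SOA) of \citet{DanielySBS15} on the label space of $k$-tuple orientations; by that theorem the SOA makes at most $\ldim(H_k) = O(n \log n)$ mistakes on any realizable sequence, yielding the upper bound. For the matching $\Omega(n \log n)$ randomized lower bound, I would invoke the adversarial Littlestone-tree strategy: the Littlestone-shattered tree constructed in Lemma~\ref{lem:littlestone_lower_bound} yields an adversary that, against any randomized learner, descends to the child whose branch the learner is less likely to follow, forcing at least $\ldim(H_k)/2$ mistakes in expectation.

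For the agnostic half I would again cite Theorem~\ref{thm:littlestone_to_sample} directly. The upper bound uses the multi-class experts reduction (a Hedge-style aggregation over an $O(T^{\ldim})$ cover of the hypothesis class obtained from the SOA), giving expected regret $O(\sqrt{T \cdot \ldim \cdot \log T}) = O(\sqrt{T \cdot n \log n \cdot \log T})$. The matching $\Omega(\sqrt{T \cdot \ldim}) = \Omega(\sqrt{T \cdot n \log n})$ regret lower bound is obtained by a randomized adversary that plants independent coin flips at each internal node of the shattered tree; standard anti-concentration shows any learner incurs $\Omega(\sqrt{T \cdot \ldim})$ excess loss over the best path. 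Substituting our bound on $\ldim$ into both sides completes the proof.

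The substantive work is already done: the one nontrivial obstacle is proving the $\Omega(n \log n)$ lower bound on $\ldim$, which required the recursive tournament-ladder construction of Lemma~\ref{lem:littlestone_lower_bound} and the careful indexing argument of Lemma~\ref{lem:online_order_preserve} to show that every root-to-leaf path in the constructed shattered tree is consistent with some hierarchical ladder tree. Once that is in hand, translating the dimension to online mistake and regret bounds is a mechanical invocation of \citet{DanielySBS15}, with no new ideas beyond substituting $\ldim = \Theta(n \log n)$.
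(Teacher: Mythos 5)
Your proposal matches the paper's proof exactly: the paper likewise notes that $\ldim = \Theta(n\log n)$ follows from Lemmas~\ref{lem:littlestone_upper_bound} and~\ref{lem:littlestone_lower_bound}, and then derives all four bounds by directly invoking the corresponding theorems of \citet{DanielySBS15}. The extra detail you give about what those cited results do internally (SOA, Hedge-over-experts, adversarial tree descent) is accurate but not needed; the substantive content, as you correctly identify, is the lower bound on $\ldim$.
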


    
    \fi

\end{document}